
\typeout{AdaLinUCB in IJCAI-19}


\documentclass{article}
\pdfpagewidth=8.5in
\pdfpageheight=11in
\usepackage{ijcai19}

\usepackage{times}
\usepackage{soul}
\usepackage{url}
\usepackage[hidelinks]{hyperref}
\usepackage[utf8]{inputenc}
\usepackage[small]{caption}
\usepackage{graphicx}
\usepackage{amsmath}
\usepackage{booktabs}
\urlstyle{same}


\usepackage{algorithm,algorithmic}
\usepackage{subfigure}
\usepackage{amsfonts}
\usepackage{bm}
\usepackage{amssymb}
\usepackage{bbm}
\usepackage{amsthm}

\usepackage{enumerate}
\usepackage{xcolor}

\newtheorem{theorem}{Theorem}
\newtheorem{lemma}{Lemma}

\newtheorem{remark}{Remark}

\title{AdaLinUCB: Opportunistic Learning for Contextual Bandits 
}

\author{Xueying Guo\and
Xiaoxiao Wang\and
Xin Liu\\
\affiliations
University of California, Davis\\
\emails
guoxueying@outlook.com,
\{xxwa, xinliu\}@ucdavis.edu}

\begin{document}

\maketitle

\begin{abstract}
In this paper, we  propose and study opportunistic contextual bandits - a special case of contextual bandits where the exploration cost varies under different environmental conditions, such as network load or return variation in recommendations. 
When the exploration cost is low, so is the actual regret of pulling a sub-optimal arm (e.g., trying a suboptimal recommendation). Therefore, intuitively, we could explore more when the exploration cost is relatively low and exploit more when the exploration cost is relatively high. Inspired by this intuition, for opportunistic contextual bandits with Linear payoffs, we propose an Adaptive Upper-Confidence-Bound algorithm (AdaLinUCB) to adaptively balance the exploration-exploitation trade-off for opportunistic learning. 
We prove that AdaLinUCB achieves $O\left((\log T)^2\right)$ problem-dependent regret upper bound, which has a smaller coefficient than that of the traditional LinUCB algorithm. Moreover, based on both synthetic and real-world dataset, we show that AdaLinUCB significantly outperforms other contextual bandit algorithms, under large exploration cost fluctuations.
\end{abstract}

\section{Introduction}

In sequential decision making problems such as contextual bandits \cite{Auer2002_2003,Chu2011_LinUCB_analysis,Abbasi2011,Langford2007}, there exists an intrinsic trade-off between exploration (of unknown environment) and exploitation (of current knowledge). Existing algorithm design focuses on how to balance such a trade-off appropriately under the implicit assumption that the exploration cost remains the same over time.
However, in a variety of application scenarios, the exploration cost is time varying  and situation-dependent. Such scenarios present an opportunity to explore more when the exploration cost is relatively low and exploit more when that cost is high, thus adaptively balancing the exploration-exploitation trade-off to reduce the overall regret.
Consider the following motivating examples.

\paragraph{Motivating scenario 1: return variation in recommendations.} 
Contextual bandits have been widely used in recommendation systems \cite{Li2010_LinUCB}. In such scenarios, the candidate articles/products to be recommended are considered as the arms,  the features of users as the context, and   the click-through rate as the reward (i.e., the probability that a user accepts the recommendation). 
However,  note that  the monetary return of a recommendation (if accepted) can differ depending on  1) timing (e.g., holiday vs. non-holiday season) and 2) users with different levels of purchasing power or loyalty (e.g., diamond vs. silver status). Because the ultimate goal is to maximize the overall monetary reward, intuitively, when the monetary return of a  recommendation (if accepted) is low, the monetary regret of pulling a suboptimal arm is low, leading to a low exploration cost, and correspondingly, high returns lead to high regret and high exploration cost. 


\paragraph{Motivating scenario 2: load variation for network configuration.} 
In computer networks, there are a number of parameters that can be configured and have a large impact on overall network performance. For example, in cellular networks, a cell tower can configure transmission power, radio spectrum, antenna, etc., that can affect network performance such as coverage, throughput, and quality of service.
Contextual bandit can be applied in network configuration \cite{Chuai2019_Infocom}.
In such problems, the goal of network configuration can be improving network performance for peak load scenario.
In such a scenario, a possible configuration of a cellular base station can be considered as an arm,   the characteristics of the cell station such as coverage area as the  context,  and network performance such as throughput as reward.
However, network traffic load fluctuates over time, and thus the actual regret of using a suboptimal configuration varies accordingly.

Specifically, when the network load is low, dummy traffic can be injected into the network so that the total load (real plus dummy load) is the same as the peak load. In this manner, we can seek the optimal  configuration under the peak load even in off-peak hours. Meanwhile, the regret of using a suboptimal configuration is low since the real load affected is low.
In practice, the priority of the dummy traffic can be set to be lower than that of the real traffic. Because the network handles high priority traffic first, low priority traffic has little or no impact on the high priority traffic\cite{Walraevens2003}. 
Thus, the regret on the actual load can be further reduced, leading to a low or even negligible exploration cost.

\paragraph{Opportunistic Contextual Bandits.}
Motivated by these application scenarios, we study opportunistic contextual bandits in this paper, focusing on the contextual bandit setting with linear payoffs. Specifically, we define \textit{opportunistic contextual bandit} as a contextual bandit problem with the following characteristic:
1) The exploration cost (regret) of selecting a suboptimal arm varies depending on a time-varying external factor that we called the variation factor. 
2) The variation factor is revealed first so that the learning agent can decide which arm to pull depending on this variation factor.
As suggested by its name, in opportunistic contextual bandits, the variation of this external variation factor can be leveraged  to reduce the actual regret. Further, besides the previous two examples, opportunistic contextual bandit algorithms can be applied to other scenarios that share these characteristics.

We also note that this can be considered as a special case of contextual bandits, by regarding the variation factor as part of context. However, the general contextual bandit algorithms do not take advantage of the opportunistic nature of the problem, and can lead to a less competitive performance.

\paragraph{Contributions.}
In this paper, we propose an Adaptive Upper-Confidence-Bound algorithm for opportunistic contextual bandits with Linear payoffs  (AdaLinUCB). The algorithm is designed to dynamically balance the exploration-exploitation trade-off in opportunistic contextual bandits. To be best of our knowledge, this is the first work to study opportunistic learning for contextual bandits.
We focus on the problem-dependent bound analysis here, which is a setting that allows  a better bound to be achieved under stronger assumptions. 
To the best of our knowledge, such a bound does not exist for LinUCB in the existing literature. In this paper, we prove  problem-dependent bounds for both the proposed AdaLinUCB and the traditional LinUCB algorithms. 
Both algorithms have a regret upper bound of $O\left((\log T)^2\right)$, and the coefficient of the AdaLinUCB bound is  smaller than that of  LinUCB. Furthermore, using both synthetic and real-world large-scale dataset, we show that AdaLinUCB significantly outperforms other contextual bandit algorithms, under large exploration cost fluctuations.

\section{Related Work}
Contextual bandit algorithms have been applied to many real applications, such as display advertising \cite{Li2011_unbiased_LinUCB_simu} and content recommendation \cite{Li2010_LinUCB,Bouneffouf2012}.
In contrast to the classic $K$-arm bandit problem \cite{Auer2002a,Chapelle2011_TS,Agrawal1995}, side information called context is provided in contextual bandit problem before arm selection \cite{Auer2002_2003,Chu2011_LinUCB_analysis,Abbasi2011,Langford2007}. 
The contextual bandits with linear payoffs was first introduced in \cite{Auer2002_2003}. 
In \cite{Li2010_LinUCB}, LinUCB algorithm is introduced based on the ``optimism in the face of Uncertainty" principal for linear bandits. The LinUCB algorithm and its variances are reported to be effective in real application scenarios \cite{Li2010_LinUCB,Wu2016,Wang2016,Wang2017FactorizationBF}. Compared to the classic $K$-armed bandits, the contextual bandits achieves superior performance in various application scenarios \cite{Filippi2010}.

Although LinUCB is effective and widely applied, its analysis is  challenging.
In the initial analysis effort \cite{Chu2011_LinUCB_analysis}, instead of analyzing LinUCB, it  presents an $O(\sqrt{T \ln^3(T)})$ regret bound for a modified version of LinUCB. The modification is needed to satisfy the independent requirement by applying Azuma/Hoeffding inequality. 
In another line of analysis effort, the authors in \cite{Abbasi2011} design another algorithm for  contextual bandits with linear payoffs and provide its regret analysis without independent requirement. Although the algorithm proposed in \cite{Abbasi2011} is different from LinUCB and suffers from a higher computational complexity, the analysis techniques are helpful.

The opportunistic learning has been introduced in \cite{Wu2018_AdaUCB} for classic $K$-armed bandits. 
However, we note that opportunistic learning exists for any sequential decision making problem.
In \cite{Bouneffouf2012}, the authors study into contextual bandits with HLCS (High-Level Critical Situations) set, and proposes a contextual-$\epsilon$-greedy policy, a policy that has an opportunistic nature since the $\epsilon$ (exploration level) is adaptively adjusted based on the similarity to HLCSs (importance level).
However, it only introduces a heuristic algorithm, and does not present a clearly formulation of opportunistic learning. Furthermore, the policy design in  \cite{Bouneffouf2012} implicitly makes the assumption that the contexts in HLCS have already been explored sufficiently beforehand, which is not a cold-start problem.
To the best of our knowledge,
no prior work has made formal mathematical formulation and rigorous performance analysis for opportunistic contextual bandits.

The opportunistic linear contextual bandits can be regarded as a special case of non-linear contextual bandits. However, general contextual bandit algorithms 
such as KernelUCB \cite{valko2013finite} do not take advantage of the opportunistic nature of the problem, and thus can lead to a less competitive performance, as shown in Appendix~\ref{se:ap_simu_kernel} for more details. 
Moreover, KernelUCB suffers from the sensitivity to hyper-parameter tuning, and the extremely high computational complexity for even moderately large dataset, which limits its application in real problems. 

\section{System Model}\label{se:system model}
We use the following notation conventions.
We use $\lVert x \rVert_2$ to denote the 2-norm of a vector $x \in \mathbb{R}^d$. For a positive-definite matrix $A \in \mathbb{R}^{d\times d}$, the weighted 2-norm of vector $x \in \mathbb{R}^d$ is defined by $\lVert x \rVert_A = \sqrt{x^\top A x}$. The inner product of vectors is denoted by $\langle \cdot,\cdot \rangle$, that is, $\langle x,y \rangle =x^\top y$.
Denote by $\lambda_{\min}(A)$ the minimum eigenvalue of a positive-definite matrix $A$. Denote by $\det (A)$ the determinant of matrix $A$. Denote by $\mathrm{trace} (A)$ the trace of matrix $A$.

Now, we present system model. We first introduce the setting of a standard linear contextual bandit problem. The time is slotted. In each time slot $t$, there exists a set of possible arms, denoted by set $\mathcal{D}_t$. 
For each arm $a \in \mathcal{D}_t$, there is an associated context vector $x_{t,a}\in \mathbb{R}^d$, and a nominal reward $r_{t,a}$. 
In each slot $t$, the learner can observe context vectors of all possible arms, and then choose an arm $a_t$ and receive the corresponding nominal reward $r_{t,a_t}$. Note that only the nominal reward of the chosen arm is revealed for the learner in each time slot $t$. 
Further, the nominal rewards of arms are assumed to be a noisy version of an unknown linear function of the context vectors. Specifically, $r_{t,a} = \langle x_{t,a}, \theta_\star \rangle +\eta_t$, where $\theta_\star \in \mathbb{R}^d$ is an unknown parameter, and $\eta_t$ is a random noise with zero mean, i.e., $\mathbb{E}[\eta_t | x_{t,a_t},\mathcal{H}_{t-1}] = 0$, with $\mathcal{H}_{t-1}= (x_{1,a_1}, \eta_1, \cdots, x_{t-1,a_{t-1}}, \eta_{t-1})$ representing  historical observations. 

The goal of a standard contextual bandit problem is to minimize the total regret in $T$ slots, in terms of the nominal rewards. Particularly, the accumulated $T$-slot regret regarding nominal reward is defined as,
\begin{align}
\mathbf{R}_\text{total} (T) = \sum_{t=1}^{T} R_t = \sum_{t=1}^{T} \mathbb{E}[
r_{t,a_t^\star} - r_{t,a_t}],
\end{align}
where $R_t$ is the one-slot regret regarding nominal reward for time slot $t$, $a_t^\star$ is the optimal arm at time slot $t$. 
Here, the optimal arm is the one with the largest expected reward, i.e., 
$a_t^\star = \arg\max_{a\in \mathcal{D}_t} \mathbb{E}[r_{t,a}]$.
To simplify the notation, we denote $r_{t,\star}= r_{t,a_t^\star}$ in the following. That is, $r_{t,\star}$ is the optimal nominal reward at slot $t$.

In the \textbf{opportunistic learning environment}, let $L_t$ be an external variation factor for time slot $t$. The \textbf{actual reward $\tilde{r}_{t,a}$} that the agent receives has the following relationship with the \textbf{nominal reward}:
\begin{align*}
\tilde{r}_{t,a} = L_t r_{t,a}, \forall t, \forall a\in \mathcal{D}_t.
\end{align*}
At each time slot, the learner first observes the context vectors associated with all possible arms, i.e., $x_{t,a}, \forall a\in \mathcal{D}_t$, as well as the current value of $L_t$. 
Based on which the learner selects current arm $a_t$,  observes a nominal reward $r_{t,a_t}$, and receives the actual reward $\tilde{r}_{t,a}= L_t r_{t,a} $. 

This model captures the essence of the opportunistic contextual bandits. For example, in the recommendation scenario, 
and $L_t$ can be a seasonality factor, which captures the general purchase rate in current season.
Or $L_t$ can be purchasing power (based on historical information) or loyalty level of users (e.g., diamond vs. silver status).
In the network configuration example, when the nominal reward $r_{t,a}$ captures the impact of a configuration at the peak load, 
the total load (the dummy load plus the real load) resembles the peak load. 
Then, $L_t$ can be the amount of real load, and thus the actual reward is modulated by $L_t$ as $L_t r_{t,a}$.

The goal of the learner is to minimize the total regret in $T$ slots, in terms of the actual rewards. Particularly, the accumulated $T$-slot regret regarding actual reward is defined as,
\begin{align}\label{eq:total_regret_actual}
\tilde{\mathbf{R}}_\text{total} (T) = \sum_{t=1}^{T} \mathbb{E}[ R_t L_t ]= \sum_{t=1}^{T} \mathbb{E}[
L_t r_{t,\star} - L_t r_{t,a_t}].
\end{align}
In a special case, equation \eqref{eq:total_regret_actual} has an equivalent form: when $L_t$ is i.i.d. over time with mean value $\bar{L}$ and $r_{t,a_t}$ is independent of $L_t$ conditioned on $a_t$, the total regret regarding actual reward is $\tilde{\mathbf{R}}_\text{total} (T) =\bar{L} \sum_{t=1}^{T} \mathbb{E}[ r_{t,\star}] - \sum_{t=1}^{T} \mathbb{E}[ L_t r_{t,a_t}]$. 
Note that in general, it is likely that $\mathbb{E}[ L_t r_{t,a_t}] \neq \bar{L} \mathbb{E}[r_{t,a_t}]$, because the action $a_t$ can depend on $L_t$.

\section{Adaptive LinUCB}
We note that the conventional LinUCB algorithm assumes that the exploration cost factor does not change over time, i.e., $L_t=1$. Therefore, to minimize the  the nominal reward is equivalent to that of the actual reward.
When $L_t$ is time-varying and situation dependent as discussed earlier, we need to maximize the total actual reward, which is affected by the variation factor $L_t$. 
Motivated by this distinction, we  design the adaptive LinUCB algorithm (AdaLinUCB) as in Algo.~\ref{alg:AdaLinUCB}.

\begin{algorithm}[tb]
	\caption{AdaLinUCB}
	\label{alg:AdaLinUCB} 
	\begin{algorithmic}[1]
		\STATE{Inputs:} {$\alpha \in \mathbb{R}_+$, $d \in \mathbb{N}$, $l^{(+)}$, $l^{(-)}$.}
		\STATE{$A \leftarrow \bm{I}_{d}$ \{The $d$-by-$d$ identity matrix\}
		}
		\STATE{$b \leftarrow \bm{0}_d$}
		\FOR{$t = 1, 2, 3,\cdots, T$}
		\STATE{ $\theta_{t-1} = A^{-1} b$}\label{line:AdaLinUCB_theta}
		\STATE{Observe possible arm set $\mathcal{D}_t$, and observe associated context vectors $x_{t,a}, \forall a \in \mathcal{D}_t$.
		}
		\STATE{Observe $L_t$ and calculate $\tilde{L}_t$ by \eqref{eq:L_tilde}.}
		\FOR{$a \in \mathcal{D}_t$}
		\STATE{ $p_{t,a} \leftarrow \theta_{t-1}^\top x_{t,a} + \alpha \sqrt{ (1-\tilde{L}_t)  x_{t,a}^\top A^{-1} x_{t,a} }$ 
		}\label{line:AdaLinUCB_index}
		\ENDFOR
		\STATE{Choose action $a_t = \arg\max_{a\in \mathcal{D}_t} p_{t,a}$ with ties broken arbitrarily.}\label{line:AdaLinUCB_selection}
		\STATE{Observe nominal reward $r_{t,a_t}$.}
		\STATE{$A \leftarrow A + x_{t,a_t} x_{t,a_t}^\top $}\label{line:AdaLinUCB_A_update}
		\STATE{ $b \leftarrow b + x_{t, a_t} r_{t,a_t} $
		}\label{line:AdaLinUCB_b_update}
		\ENDFOR
	\end{algorithmic}
\end{algorithm} 

In Algo.~\ref{alg:AdaLinUCB}, $\alpha$ is a hyper-parameter, which is an input of the algorithm, and 
$\tilde{L}_t$ is the normalized variation factor, defined as, 
\begin{align}\label{eq:L_tilde}
\tilde{L}_t = 
\left({[L_t]_{l^{(-)}}^{l^{(+)}} -l^{(-)} }\right)
/
\left({l^{(+)}-l^{(-)}} \right),
\end{align}
where $l^{(-)}$ and $l^{(+)}$ are the lower and upper thresholds for truncating the variation factor, and
$
    [L_t]_{l^{(-)}}^{l^{(+)}}  =\max\{l^{(-)}, \min\{L_t, l^{(+)} \}
    \}
    $.
 That is, $\tilde{L}_t$ normalizes $L_t$ into $[0,1]$ to capture different ranges of $L_t$. 
 To achieve good performance, the truncation thresholds should be appropriately chosen to achieve sufficient exploration. Empirical results show that a wide range of threshold values can lead to good performance of AdaLinUCB.
 Furthermore, these thresholds can  be learned online in practice without prior knowledge on the distribution of $L_t$, as discussed in Sec.~\ref{se:numerical_results} and Appendix~\ref{se:ap_simulations}. 
 Note that $\tilde{L}_t$ is only used in AdaLinUCB algorithm. The actual rewards and regrets are based on $L_t$, not $\tilde{L}_t$.
 
 In Algo.~\ref{alg:AdaLinUCB}, for each time slot, the algorithm updates a matrix $A$ and a vector $b$. 
 The $A$ is updated in step~\ref{line:AdaLinUCB_A_update}, which is denoted as $A_t=I_d + \sum_{\tau=1}^{t}x_{\tau, a_\tau} x_{\tau, a_\tau}^\top$ in the following analysis.
Note that $A_t$ is a positive-definite matrix for any $t$, and that $A_0=I_d$.
The $b$ is updated in step~\ref{line:AdaLinUCB_b_update}, which is denoted as $b_t = \sum_{\tau=1}^{t} 
x_{\tau, a_\tau} r_{\tau,a_\tau}$ in the following analysis.
Then, we have $\theta_t  = A_t^{-1} b_t $ (see step~\ref{line:AdaLinUCB_theta}), which is the estimation of the unknown parameter $\theta_\star$ based on historical observations.
Specifically, 
$\theta_t$ is the result of a ridge regression for estimating $\theta_\star$, which minimizes a penalized residual sum of squares, i.e.,
$\theta_t = \arg\min_\theta \left\{
\sum_{\tau=1}^{t} \left(
r_{\tau,a_\tau} - \langle \theta, x_{\tau,a_\tau} \rangle
\right)^2
+ \lVert \theta \rVert_2^2
\right\}$.

In general, the AdaLinUCB algorithm explores more when the variation factor is relatively low, and exploits more when the variation factor is relatively high. 
To see this, note that the first term of the index $p_{t,a}$ in step~\ref{line:AdaLinUCB_index},
i.e., $\theta_{t\!-\!1}^\top x_{t,a}$, is the estimation of the corresponding reward; while the second part is an adaptive upper confidence bound modulated by $\tilde{L}_t$, which determines the level of exploration.
At one example, when $L_t$ is at its lowest level with $L_t \leq l^{(-)}$,  $\tilde{L}_t=0$, and the index $p_{t,a}$ 
is the same as that of the LinUCB algorithm, and then the  algorithm selects arm in the same way as the conventional LinUCB.
At the other extreme, when $\tilde{L}_t = 1$, i.e., $L_t \geq l^{(+)}$, the index  $p_{t,a}=\theta_{t-1}^\top x_{t,a}$, which is the estimation of the corresponding reward. 
That is, when the variation factor is at its highest level, the AdaLinUCB algorithm purely exploits the existing knowledge and selects the current best arm.

\section{Performance Analysis}\label{se:Performance Analysis}

We first summarize the technical assumptions needed for performance analysis:
i. Noise satisfies $C_\text{noise}$-sub-Gaussian condition, as explained later in \eqref{as:sub_Gaussian}; ii. The unknown parameter $\theta_\star$ satisfies $||\theta_\star||_2 \leq C_\text{theta}$; iii. For $\forall t, \forall a\in \mathcal{D}_t$, $\lVert x_{t,a} \rVert_2 \leq C_\text{context} $ holds; iv. $\lambda_{\min}(I_d)\geq \max\{1, C_\text{context}^2 \}$; v. the nominal reward $r_{t,a_t}$ is independent of the variation factor $L_t$, conditioned on $a_t$.

We note that assumptions i.-iv. are widely used in contextual bandit analysis \cite{Auer2002_2003,Chu2011_LinUCB_analysis,Abbasi2011,Wu2016,Wang2016}.

Specifically, the sub-Gaussian condition in assumption i. is a constraint on the tail property of the noise distribution, as that in \cite{Abbasi2011}.
That is, for the noise $\eta_t$, we assume that,
\begin{align}\label{as:sub_Gaussian}
\forall \zeta \in \mathbb{R}, \quad \mathbb{E}[e^{\zeta \eta_t} | 
x_{t,a_t}, \mathcal{H}_{t-1}
] 
\leq \exp\left(\frac{\zeta^2 C_\text{noise}^2}{2} \right),
\end{align}
with $\mathcal{H}_{t-1}= (x_{1,a_1}, \eta_1, \cdots, x_{t-1,a_{t-1}}, \eta_{t-1})$ and $C_\text{noise}>0$. 
Note that the 
sub-Gaussian condition requires both  \eqref{as:sub_Gaussian} and $\mathbb{E}[\eta_t | x_{t,a_t}, \mathcal{H}_{t-1}] = 0$.
Further, this condition indicates that $\mathrm{Var}[\eta_t | F_{t-1}]\leq C_\text{noise}^2 $, where $\{F_t\}_{t=0}^\infty$ is the filtration of $\sigma$-algebras for selected context vectors and noises, i.e., $F_t = \sigma( x_{1,a_1}, x_{2,a_2}, \cdots, x_{t+1,a_{t+1}}, \eta_1, \eta_2, \cdots, \eta_t )$. Thus, $C_\text{noise}^2$ can be viewed as the (conditional) variance of the noise.

Examples for the distributions that satisfies the sub-Gaussian condition are:
1) A  zero-mean Gaussian noise with variance at most  $C_\text{noise}^2$; 2) A bounded noise with zero-mean and lying in an interval of length at most $2 C_\text{noise}$.



Assumption iv.~can be relaxed by changing the value of $A_0$ in Algo.~\ref{alg:AdaLinUCB} from the current identity matrix $I_d$ to a positive-definite matrix with a higher minimum eigenvalue (see Appendix~\ref{se:LemmaProofs} for more details).

Assumption v. is valid in many application scenarios. 
For example, 
in the network configuration scenario, since the total load resembles the peak load, the network performance, i.e., the nominal reward $r_{t,a_t}$, is independent of the real load $L_t$, conditioned on configuration $a_t$.
Also, in the recommendation scenario, the click-through rate (i.e., reward $r_{t,a}$) can be independent of the user influence (i.e., variation factor $L_t$).



\subsection{Problem-Dependent Bounds}\label{se:problem_dependent_setting}

We focus on problem-dependent performance analysis here because it can lead to a tighter bound albeit under stronger   assumptions.
To derive the problem-dependent bound, we assume that there are a finite number of possible context values, and denote this number as $N$. 
Then, let $\Delta_{\min}$ denote the minimum nominal reward difference between the best and the ``second best'' arms. That is,
$
\Delta_{\min} =
\min_t \left\{ 
r_{t,\star} - \max_{a\in \mathcal{D}_t , r_{t,a}\neq r_{t,\star} } r_{t,a} \right\} $. 
Similarly, let $\Delta_{\max}$ denote the maximum nominal reward difference between arms. That is, 
$
\Delta_{\max} = \max_t \left\{ 
r_{t,\star} - \min_{a\in \mathcal{D}_t  } r_{t,a} \right\} $. 

As in existing literature for problem-dependent analysis of linear bandits\cite{Abbasi2011}, we assume that single optimal context condition holds here. Specifically, for different time slot $t=1,2,\cdots$, there is a single optimal context value. That is, there exists $x_\star \in \mathbb{R}^d$, such that,
$
x_\star = x_{t,a_t^\star}, \forall t$.

\subsection{AdaLinUCB under  Binary-Valued Variation}

We first introduce the result under a random binary-valued variation factor. We assume that the variation factor $L_t$ is i.i.d. over time, with $L_t \in\{\epsilon_0, 1- \epsilon_1 \}$, where $\epsilon_0, \epsilon_1\geq 0$ and $\epsilon_0<1-\epsilon_1$. Let $\rho$ denote the probability that the  variation factor is low, i.e., $\mathbb{P}\{L_t = \epsilon_0 \}=\rho$.

Firstly, we note that, for a $\tilde{\delta} \in (0,1) $, there exists a positive integer $C_\text{slots}$ such that,  
\begin{align}\label{eq:Cslots_definition}
\nonumber
& \forall t\geq C_\text{slots},~
\rho t - \sqrt{\frac{t}{2} \log \frac{\tilde{\delta}}{2} }
- \frac{16 C_\text{noise}^2 C_\text{theta}^2 }{\Delta_{\min}^2} \bigg[\log(C_\text{context} t) \\ \nonumber
& + 2 (d-1) \log \Big(
d\log\frac{d+ t C_\text{context}^2}{d} + 2 \log\frac{2}{\tilde{\delta}}
\Big) + 2 \log\frac{2}{\tilde{\delta}}
\\ 
& +(d-1) \log\frac{64 C_\text{noise}^2 C_\text{theta}^2 C_\text{context} }{\Delta_{\min}^2} 
\bigg]^2
\geq 
\frac{4 d}{\Delta_{\min}^2}.
\end{align}
To see such an integer $C_\text{slots}$  exists, note that for large enough $t$,  in the left-hand side of the inequality \eqref{eq:Cslots_definition}, the dominant positive term is $O(t)$ while the dominant negative term is $O(\sqrt{t})$.

To interpret $C_\text{slots}$,
it is an integer that is large enough so that during $C_\text{slots}$-slot period, 
enough exploration is done in the time slots when variation factor is relatively low, such that to have a relatively tight bound for the estimation of the optimal reward.

Then, we have the following results. 

\begin{theorem}\label{th:AdaLinUCB_problem_dependent}
Consider the opportunistic contextual bandits with linear payoffs and binary-valued variation factor. 
With probability at least $1-\tilde{\delta}$, the accumulated regret (regarding actual reward) of AdaLinUCB algorithm satisfies,
\begin{align*}
\tilde{\mathbf{R}}_\mathrm{total}(T)
& \leq \epsilon_0 \cdot \frac{16 C_\mathrm{noise}^2 C_\mathrm{theta}^2 }{\Delta_{\min}}
\bigg[
\log(C_\mathrm{context} T)
+ 2 \log\frac{2}{\tilde{\delta}} \\
& + 2 (d-1) \log \Big(
d\log\frac{d+ T C_\mathrm{context}^2}{d} + 2 \log\frac{2}{\tilde{\delta}}
\Big) \\
& +(d-1) \log\frac{64 C_\mathrm{noise}^2 C_\mathrm{theta}^2 C_\mathrm{context} }{\Delta_{\min}^2} 
\bigg]^2\\
&
+ (1-\epsilon_1)   \Big[
\big(
\Delta_{\max} C_\mathrm{slots} 
+
4  d \frac{N-1}{\Delta_{\min}} \big) \\
&~~~ \cdot
\Big(
C_\mathrm{noise}\sqrt{d\log
	\frac{2+2TC_\mathrm{context}^2}{\tilde{\delta}} } + C_\mathrm{theta}
\Big)^2 \bigg]
,
\end{align*}
where $C_\mathrm{slots}$ is a constant satisfying \eqref{eq:Cslots_definition}.
\end{theorem}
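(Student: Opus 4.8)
The plan is to split the accumulated actual regret according to the binary value of $L_t$ and to analyze the two regimes separately. With the natural truncation $l^{(-)}=\epsilon_0$, $l^{(+)}=1-\epsilon_1$, a low slot ($L_t=\epsilon_0$) gives $\tilde L_t=0$, so the index in step~\ref{line:AdaLinUCB_index} reduces to the ordinary LinUCB upper-confidence index, whereas a high slot ($L_t=1-\epsilon_1$) gives $\tilde L_t=1$, so the index becomes the pure greedy exploit $p_{t,a}=\theta_{t-1}^\top x_{t,a}$. Writing $\tilde{\mathbf R}_\text{total}(T)=\epsilon_0\sum_{t:L_t=\epsilon_0}\mathbb E[r_{t,\star}-r_{t,a_t}]+(1-\epsilon_1)\sum_{t:L_t=1-\epsilon_1}\mathbb E[r_{t,\star}-r_{t,a_t}]$ already matches the two-term shape of the statement, the $\epsilon_0$-weighted part being exploration (LinUCB) regret and the $(1-\epsilon_1)$-weighted part being exploitation regret. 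The standing tool is the self-normalized confidence bound of \cite{Abbasi2011}: on an event of probability at least $1-\tilde\delta/2$ we have $|\langle\theta_{t-1}-\theta_\star,x\rangle|\le\beta\,\lVert x\rVert_{A_{t-1}^{-1}}$ for all $t\le T$ and all $x$, where $\beta=C_\text{noise}\sqrt{d\log\frac{2+2TC_\text{context}^2}{\tilde\delta}}+C_\text{theta}$ (this choice of $\tilde\delta/2$ is exactly what produces the $\log\frac{2+2TC_\text{context}^2}{\tilde\delta}$ inside the stated $\beta$). Since $A_{t-1}$ accumulates the updates of step~\ref{line:AdaLinUCB_A_update} from both low and high slots, using low-slot LinUCB estimates here is legitimate as these extra updates only shrink widths.

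For the low slots I would run the problem-dependent LinUCB argument on the exploration subsequence. Taking $\alpha\ge\beta$ makes the index a valid UCB, so a suboptimal pull forces its confidence width above the gap, $\lVert x_{t,a_t}\rVert_{A_{t-1}^{-1}}\ge\Delta_{\min}/(2\beta_t)$; because repeated pulls of a fixed context shrink $\lVert x\rVert_{A^{-1}}^2$ at the Sherman--Morrison rate $O(1/n)$, each suboptimal context is pulled at most $O(\beta_n^2/\Delta_{\min}^2)$ times. Since $\beta_n$ grows only logarithmically in the pull count, this is a self-referential inequality $n\lesssim\beta_n^2/\Delta_{\min}^2$ whose solution is precisely the bracketed logarithmic factor appearing in \eqref{eq:Cslots_definition}, yielding a suboptimal-exploration-pull count of $\tfrac{16C_\text{noise}^2C_\text{theta}^2}{\Delta_{\min}^2}[\cdots]^2$. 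Weighting by the gap and then by $\epsilon_0$ reproduces the first term.

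The high slots are the crux. Let $S_t$ count the low slots among the first $t$; since $L_t$ is i.i.d.\ with $\mathbb P(L_t=\epsilon_0)=\rho$, Hoeffding's inequality gives, on an event of probability at least $1-\tilde\delta/2$, $S_t\ge\rho t-\sqrt{(t/2)\log(2/\tilde\delta)}$ uniformly in $t$. Subtracting the suboptimal-exploration-pull count bounded above lower-bounds the number of pulls of the optimal context $x_\star$ (well defined by the single-optimal-context assumption); by the very definition~\eqref{eq:Cslots_definition}, for every $t\ge C_\text{slots}$ this count is at least $4d/\Delta_{\min}^2$, which via Sherman--Morrison drives $\lVert x_\star\rVert_{A_{t-1}^{-1}}$ below the threshold $\Delta_{\min}/(2\beta)$ at which the greedy rule necessarily selects $x_\star$ whenever it is available. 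Hence the high-slot regret before $C_\text{slots}$ is bounded crudely by $\Delta_{\max}$ per slot, i.e.\ $\Delta_{\max}C_\text{slots}$; after $C_\text{slots}$ a greedy mistake can only involve one of the $N-1$ suboptimal contexts and only while its own width still exceeds $\Delta_{\min}/(2\beta)$, and a per-direction potential argument in dimension $d$ over these $N-1$ contexts yields the $4d\tfrac{N-1}{\Delta_{\min}}$ factor. Factoring out the common $\beta^2$ and weighting by $(1-\epsilon_1)$ gives the second term; a union bound over the two probability-$\tilde\delta/2$ events gives the overall $1-\tilde\delta$ guarantee.

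The main obstacle is the post-burn-in control of the greedy regret: unlike LinUCB, the high slots carry no self-correcting optimism, so their one-slot regret cannot be bounded by a UCB inequality and must instead be charged to exploration performed on the low slots. The delicate point is making ``enough low slots have occurred'' rigorous, since $S_t$ is a random Binomial quantity; the Hoeffding lower bound on $S_t$ must be combined with the exploration-pull count in exactly the balance encoded by \eqref{eq:Cslots_definition} to certify $\lVert x_\star\rVert_{A_{t-1}^{-1}}$ is small from $C_\text{slots}$ onward, and it is the single-optimal-context assumption that lets this shrinking width guarantee correct greedy selection. Aligning the two-phase split and the residual-mistake count over the finite context set with the stated constants is where the real work lies.
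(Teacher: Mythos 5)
Your overall architecture coincides with the paper's own proof: the same decomposition $\tilde{\mathbf{R}}_\mathrm{total}(T)=\epsilon_0\,\mathbf{R}_\mathrm{total}^{\mathrm{(low)}}(T)+(1-\epsilon_1)\,\mathbf{R}_\mathrm{total}^{\mathrm{(high)}}(T)$, the same self-normalized confidence set at level $\tilde\delta/2$ with radius $\alpha_T=C_\mathrm{noise}\sqrt{d\log\frac{2+2TC_\mathrm{context}^2}{\tilde\delta}}+C_\mathrm{theta}$, the same problem-dependent LinUCB argument on the low slots (one-step regret $\le 2\alpha\lVert x_{t,a_t}\rVert_{A_{t-1}^{-1}}$ plus $R_t\le R_t^2/\Delta_{\min}$), and for the high slots the same mechanism: Hoeffding lower bound on the number of low slots, subtraction of the suboptimal low-slot pulls (bounded by the low-slot regret divided by $\Delta_{\min}$), a per-context width bound $\lVert x_{(n)}\rVert_{A_{t-1}^{-1}}\le\sqrt{d/m_{t-1,(n)}}$ (the paper derives this by a trace identity, its Lemma~\ref{th:lemma_UCBwidth}, rather than Sherman--Morrison, but it is the same inequality), and the single-optimal-context assumption to certify that the greedy index then picks $x_\star$.

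There is, however, one step that fails as written. You claim that the pull count $4d/\Delta_{\min}^2$ guaranteed by \eqref{eq:Cslots_definition} for $t\ge C_\mathrm{slots}$ drives $\lVert x_\star\rVert_{A_{t-1}^{-1}}$ below the threshold $\Delta_{\min}/(2\beta)$. It does not: $\sqrt{d/m}\le\Delta_{\min}/(2\beta)$ requires $m\ge 4d\beta^2/\Delta_{\min}^2$, a factor $\beta^2=\Theta(d\log T)$ more pulls than your count provides; with only $4d/\Delta_{\min}^2$ pulls the width is merely below $\Delta_{\min}/2$, which is useless once multiplied by $\beta$ in the greedy-correctness condition. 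The paper resolves this by running the burn-in clock to $t=\alpha_T^2 C_\mathrm{slots}$ rather than $C_\mathrm{slots}$: at that horizon Hoeffding is applied to get $m^{\mathrm{(low)}}_{\alpha_T^2C_\mathrm{slots},\mathrm{all}}\gtrsim\rho\,\alpha_T^2 C_\mathrm{slots}$, the suboptimal low-slot pulls remain polylogarithmic, and the (approximate) homogeneity of \eqref{eq:Cslots_definition} under scaling by $\alpha_T^2$ upgrades the guaranteed optimal-pull count to $4d\alpha_T^2/\Delta_{\min}^2$, which is what the width lemma actually needs. Correspondingly, the burn-in regret is $\Delta_{\max}\,\alpha_T^2 C_\mathrm{slots}$: this is where the factor $\big(C_\mathrm{noise}\sqrt{d\log\frac{2+2TC_\mathrm{context}^2}{\tilde\delta}}+C_\mathrm{theta}\big)^2$ multiplying $\Delta_{\max}C_\mathrm{slots}$ in the theorem comes from. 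In your write-up the burn-in term is $\Delta_{\max}C_\mathrm{slots}$ with no $\beta^2$ attached, so ``factoring out the common $\beta^2$'' is not legitimate for that term; the $\beta^2$ must be generated by lengthening the burn-in horizon itself. With that repair your argument is exactly the paper's and yields the stated bound.
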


\textit{Proof Sketch:} Although the proof for Theorem~\ref{th:AdaLinUCB_problem_dependent}
 is complicated, the key  is to treat the slots with low variation factor and the slots with high variation factor separately. 
For slots with low variation factor,  the one-step regret is upper bounded by the weighted $2$-norm of the selected context vectors, i.e., $R_t \mathbbm{1} \{L_t = \epsilon_0 \}\leq 2 \alpha \lVert x_{t,a_t} \rVert_{A_{t-1}^{-1}}$, and then the accumulated regret can be analyzed accordingly.
For the slots with high variation factor, by matrix analysis, we can show that when a particular context value has been selected enough times, its estimated reward is accurate enough in an appropriate sense. 
Further, it can benefit from regret bound for low variation factor slots that the optimal context has been selected enough time with high probability. Then, we combine these to prove the result. 
More details are shown in Appendix~\ref{se:Proof_AdaLinUCB}.

\begin{remark}\label{rm:AdaLinUCB_binary}
For the regret bound in Theorem~\ref{th:AdaLinUCB_problem_dependent}, the first three lines cover the accumulated regret that is incurred during time slots when the variation factor is relatively low, i.e., during slots $t$ with $L_t=\epsilon_0$, while the last two lines cover the accumulated regret that is incurred during time slots when the variation factor is relatively high, i.e., during slots $t$ with $L_t= 1- \epsilon_1$.
Further, when $T$ is large enough, the dominant term for the first three lines is $O\left((\log T)^2\right)$, while the dominant term for the last two lines is $O\left(\log T\right)$. 
That is, the bound for the accumulated regret during slots when the variation factor is relatively high actually increases slower than the bound for the accumulated regret during slots when the variation factor is relatively low.
This is in consistent with the motivation of AdaLinUCB design: explore more when the variation factor is relatively low, and exploit more when the variation factor is relatively high.

Furthermore, beside parameter $T$, which is the time horizon, the regret bound in Theorem~\ref{th:AdaLinUCB_problem_dependent} is also affected by problem-dependent parameters: 
it is affected by $N$, which is the number of possible context values, $\Delta_{\min}$, which is the minimum nominal reward difference between the best and the ``second best" arms, and $\Delta_{\max}$, which is the maximum nominal reward difference between arms.
In general, a larger number of possible context values, i.e., a larger $N$, may lead to a larger $\Delta_{\max}$ and a smaller $\Delta_{\min}$, and in this way, results in a larger regret bound.
\end{remark}


\subsection{AdaLinUCB under  Continuous Variation}
We now study AdaLinUCB in opportunistic contextual bandits under continuous variation factor. Under continuous variation factor, it is difficult to obtain regret bound for general values of $l^{(-)}$ and $l^{(+)}$ because exploration and exploitation mix in a complex fashion when $l^{(-)}\!<\! L_t \!<\!l^{(+)}$. Instead, inspired by the insights obtained from the binary-valued variation factor case, we illustrate the advantages of AdaLinUCB for special case with $l^{(-)}=l^{(+)}$.


In the special case of $l^{(-)}=l^{(+)}$, the normalized variation factor $\tilde{L}_t$ in \eqref{eq:L_tilde} is redefined as $\tilde{L}_t=0$ when $L_t \leq l^{(-)}$ and as $\tilde{L}_t=1$ when $L_t > l^{(+)}=l^{(-)}$.

\begin{theorem}\label{th:continuous_AdaLinUCB_problem_dependent}
In the opportunistic contextual bandits with linear payoffs and continuous variation factor that is i.i.d. over time, under AdaLinUCB with  $\mathbb{P}\{ L_t \leq l^{(-)} \}= \rho >0$ and $l^{(-)}=l^{(+)}$, with probability at least $1-\tilde{\delta}$, the accumulated regret (regarding actual reward) 
satisfies,
\begin{align*}
\tilde{\mathbf{R}}_\mathrm{total}(T)
& \leq \!
\mathbb{E}\big[\!L_t \!| L_t \! \leq \! l^{(-)} \!
\big]\!
\frac{16 C_\mathrm{noise}^2\! C_\mathrm{theta}^2 }{\Delta_{\min}}\!
\bigg[\! \log(C_\mathrm{context} T) \\
& + 2 (d-1) \log \Big(
d\log\frac{d+ T C_\mathrm{context}^2}{d} + 2 \log\frac{2}{\tilde{\delta}}
\Big) \\
& +\!(d\!-\!1) \log\frac{64 C_\mathrm{noise}^2 C_\mathrm{theta}^2 C_\mathrm{context} }{\Delta_{\min}^2} \!+\! 2 \log\frac{2}{\tilde{\delta}}
\bigg]^2 \\
& + \mathbb{E}[L_t | L_t > l^{(-)}]
\cdot \bigg[
\Big(
\Delta_{\max} C_\mathrm{slots} 
+
4  d \frac{N-1}{\Delta_{\min}} \Big) \\
& \cdot
\Big(
C_\mathrm{noise}\sqrt{d\log
	\frac{2+2TC_\mathrm{context}^2}{\tilde{\delta}} } + C_\mathrm{theta}
\Big)^2 \bigg]
,
\end{align*}
where $C_\mathrm{slots}$ is a constant satisfying \eqref{eq:Cslots_definition}.
\end{theorem}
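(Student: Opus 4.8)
The plan is to reduce the continuous case with $l^{(-)}=l^{(+)}$ to the binary-valued setting already handled in Theorem~\ref{th:AdaLinUCB_problem_dependent}. The crucial observation is that when $l^{(-)}=l^{(+)}$, the redefined normalized variation factor $\tilde{L}_t$ takes only two values: $\tilde{L}_t=0$ when $L_t \leq l^{(-)}$, and $\tilde{L}_t=1$ when $L_t > l^{(-)}$. Since the AdaLinUCB index $p_{t,a}$ in step~\ref{line:AdaLinUCB_index} depends on $L_t$ only through $\tilde{L}_t$, the algorithm's arm-selection rule --- and hence the realized nominal regret $R_t = r_{t,\star}-r_{t,a_t}$ --- depends on $L_t$ only through the binary indicator $\mathbbm{1}\{L_t \leq l^{(-)}\}$. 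In other words, the decision process is indistinguishable from the binary case with $\rho = \mathbb{P}\{L_t \leq l^{(-)}\}$, so every nominal-regret estimate established in the proof of Theorem~\ref{th:AdaLinUCB_problem_dependent} applies here verbatim, and the constant $C_\mathrm{slots}$ of \eqref{eq:Cslots_definition} is reused with this same $\rho$.

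First I would decompose the accumulated actual regret by partitioning the time slots according to this binary indicator,
\begin{align*}
\tilde{\mathbf{R}}_\mathrm{total}(T) = \sum_{t : L_t \leq l^{(-)}} \mathbb{E}[L_t R_t] + \sum_{t : L_t > l^{(-)}} \mathbb{E}[L_t R_t].
\end{align*}
For each group I would factor the expectation of the product $L_t R_t$ into a product of expectations. Conditioned on the event $\{L_t \leq l^{(-)}\}$, the action $a_t$ is selected by a rule (pure LinUCB) that does not depend on the exact value of $L_t$, and by Assumption~v the nominal reward $r_{t,a_t}$ is conditionally independent of $L_t$ given $a_t$; therefore $R_t$ and $L_t$ are conditionally independent on this event, yielding $\mathbb{E}[L_t R_t | L_t \leq l^{(-)}] = \mathbb{E}[L_t | L_t \leq l^{(-)}]\cdot\mathbb{E}[R_t | L_t \leq l^{(-)}]$, and symmetrically on $\{L_t > l^{(-)}\}$. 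These conditional means $\mathbb{E}[L_t | L_t \leq l^{(-)}]$ and $\mathbb{E}[L_t | L_t > l^{(-)}]$ are precisely what replace the fixed multipliers $\epsilon_0$ and $1-\epsilon_1$ of the binary case.

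Having factored out the conditional means, I would then invoke the two nominal-regret bounds from Theorem~\ref{th:AdaLinUCB_problem_dependent}. The sum of $\mathbb{E}[R_t | L_t \leq l^{(-)}]$ over the low-variation slots is bounded by $\frac{16 C_\mathrm{noise}^2 C_\mathrm{theta}^2}{\Delta_{\min}}[\cdots]^2$ exactly as in the first three lines of Theorem~\ref{th:AdaLinUCB_problem_dependent}, using the one-step bound $R_t \leq 2\alpha\lVert x_{t,a_t}\rVert_{A_{t-1}^{-1}}$ valid in the pure-LinUCB slots. The sum over the high-variation slots is bounded by $(\Delta_{\max} C_\mathrm{slots} + 4d\frac{N-1}{\Delta_{\min}})(\cdots)^2$, reusing the same high-probability guarantee that, once the single optimal context $x_\star$ has been pulled sufficiently often during the low-variation slots, the greedy choice in the high-variation slots is correct. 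The concentration bound on the number of low-variation slots (the Hoeffding term in \eqref{eq:Cslots_definition}) carries over unchanged, since $\mathbbm{1}\{L_t \leq l^{(-)}\}$ is i.i.d.\ Bernoulli$(\rho)$.

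The main obstacle I anticipate is rigorously justifying the conditional-independence factorization of $\mathbb{E}[L_t R_t]$. The paper itself flags the general pitfall that $\mathbb{E}[L_t r_{t,a_t}] \neq \bar{L}\,\mathbb{E}[r_{t,a_t}]$ because the action $a_t$ can depend on $L_t$; the whole point of partitioning by $\mathbbm{1}\{L_t \leq l^{(-)}\}$ is that \emph{within} each block the action no longer depends on the exact value of $L_t$, which is what restores the factorization. Making this precise requires verifying, against the filtration generated by past contexts, actions, and noises, that the i.i.d.\ assumption on $L_t$ prevents any feedback from past values of $L_t$ into the current decision beyond the binary indicator, and that Assumption~v then renders $L_t$ independent of the current nominal regret on each conditioning event. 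Once this is established, the remainder is a direct transcription of the binary-case analysis with $\epsilon_0$ and $1-\epsilon_1$ replaced by the respective conditional expectations of $L_t$.
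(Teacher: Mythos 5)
Your proposal is correct and follows essentially the same route as the paper: reduce the $l^{(-)}=l^{(+)}$ case to the binary-valued case of Theorem~\ref{th:AdaLinUCB_problem_dependent} by observing that the algorithm sees $L_t$ only through the binary indicator $\mathbbm{1}\{L_t \leq l^{(-)}\}$, then replace the multipliers $\epsilon_0$ and $1-\epsilon_1$ by $\mathbb{E}[L_t \mid L_t \leq l^{(-)}]$ and $\mathbb{E}[L_t \mid L_t > l^{(-)}]$. In fact, you supply more detail than the paper does, since its proof is a brief argument by analogy, whereas you make explicit the conditional-independence factorization of $\mathbb{E}[L_t R_t]$ (via Assumption~v and the i.i.d.\ property of $L_t$) that justifies this replacement.
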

\begin{proof}
Recall that for the special case with $l^{(+)}=l^{(-)}$, we have  $\tilde{L}_t=0$ when $L_t \leq l^{(-)}$ and as $\tilde{L}_t=1$ when $L_t > l^{(+)}$. 
Thus, this theorem can be proved analogically to the proof of Theorem~\ref{th:AdaLinUCB_problem_dependent}, by noting the following: When $L_t \leq l^{(-)}$, we have $\tilde{L}_t=0$ which corresponds to the case of $L_t = \epsilon_0$ ($\tilde{L}_t=0$) in the binary-valued variation factor case; while when  $L_t > l^{(+)}$ ($\tilde{L}_t=1$) corresponds to the case of $L_t=1-\epsilon_1$ under binary-valued variation factor case.
The conclusion of the theorem then follows by using the fact that all variation factor below $l^{(-)}$ are treated same by AdaLinUCB, i.e.,  $\tilde{L}_t=0$ for $L_t \leq l^{(-)}$; while all variation factor above $l^{(-)}$ are treated same by AdaLinUCB, i.e.,  $\tilde{L}_t=1$ for $L_t \leq l^{(+)}$.
\end{proof}

\begin{remark}
Similar to Remark~\ref{rm:AdaLinUCB_binary} for
Theorem~\ref{th:AdaLinUCB_problem_dependent}, the regret bound in Theorem~\ref{th:continuous_AdaLinUCB_problem_dependent} can be divided into two parts:
the first three lines cover the accumulated regret that is incurred during time slots when $L_t\leq l^
{(-)}$  and is $O\left((\log T)^2\right)$, while the last two lines cover the accumulated regret for time slots when $L_t> l^
{(-)}$ and is $O\left((\log T)\right)$.
Furthermore, a larger $N$, i.e., a larger number of possible context values, can lead to a larger regret bound.
\end{remark}

\subsection{Regret Bound of  LinUCB}\label{se:LinUCB_in_mainPpaer}
To the best of our knowledge, there exists no problem-dependent bound on LinUCB. (The initial analysis of LinUCB presents a more general and looser performance bound for a modified version of LinUCB. The modification is needed to satisfy the independent requirement by applying Azuma/Hoeffding inequality \cite{Chu2011_LinUCB_analysis}.) Furthermore,
we note that one can directly apply LinUCB to opportunistic contextual bandits using the linear relationship $
\mathbb{E}[r_{t,a}|x_{t,a}] = \langle x_{t,a}, \theta_\star \rangle$, which is called LinUCBExtracted in numerical results. Therefore, we derive the regret upper bound for LinUCB here, both as an individual contribution as well as for comparison purpose.


\begin{theorem}\label{th:LinUCB_problem_dependent_bound}
In the opportunistic contextual bandits with linear payoffs and continuous variation factor that is i.i.d. over time with mean $\bar{L}$, with probability at least $1-\delta$, the accumulated $T$-slot regret (regarding actual reward) of LinUCB satisfies,
\begin{align*}
\tilde{\mathbf{R}}_\mathrm{total}(T)
& \leq \frac{16 \bar{L} C_\mathrm{noise}^2 
	C_\mathrm{theta}^2}{\Delta_{\min}} 
\bigg[
\log(C_\mathrm{context} T )
+ 2\log \frac{1}{\delta} \\
& + 2(d-1) \log \Big(
d \log\frac{d+TC_\mathrm{context}^2}{d}
+2\log\frac{1}{\delta}
\Big) \\
&
+ (d-1) \log \frac{64 C_\mathrm{noise}^2 
	C_\mathrm{theta}^2 C_\mathrm{context}}{\Delta_{\min}^2}
\bigg]^2 .
\end{align*}
\end{theorem}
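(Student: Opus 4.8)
The plan is to decouple the actual regret from the nominal regret and then bound the nominal regret by a standard problem-dependent LinUCB argument. First I would observe that LinUCB never observes the variation factor, so its chosen arm $a_t$ is a function of the contexts and the past nominal observations alone, hence independent of $L_t$. Combined with assumption~v (so $r_{t,a_t}$ is independent of $L_t$ given $a_t$) and the i.i.d., mean-$\bar L$ structure of $L_t$, each actual one-slot regret factorizes as $\mathbb{E}[L_t(r_{t,\star}-r_{t,a_t})]=\bar L\,\mathbb{E}[r_{t,\star}-r_{t,a_t}]$, giving $\tilde{\mathbf{R}}_{\mathrm{total}}(T)=\bar L\,\mathbf{R}_{\mathrm{total}}(T)$. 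This is exactly the step where LinUCB fails to exploit the opportunistic structure (for AdaLinUCB $a_t$ depends on $L_t$, so no such clean factorization holds, as the paper already notes), and it is what makes $\bar L$ a flat multiplier in front of the whole bound.

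It then remains to bound the nominal regret on a single high-probability confidence event. I would establish the confidence ellipsoid for the ridge estimate $\theta_t=A_t^{-1}b_t$ via the self-normalized martingale bound of \cite{Abbasi2011}: using assumptions i--iii and $A_0=I_d$, with probability at least $1-\delta$ one has $\lVert\theta_t-\theta_\star\rVert_{A_t}\le\beta_t$ for all $t$, where $\beta_t=C_{\mathrm{noise}}\sqrt{d\log\frac{1+tC_{\mathrm{context}}^2}{\delta}}+C_{\mathrm{theta}}$. On this event, choosing $\alpha$ large enough that LinUCB's index dominates $\beta_{t-1}\lVert x_{t,a}\rVert_{A_{t-1}^{-1}}$ makes the index optimistic, so $p_{t,a_t^\star}\ge r_{t,\star}$, and the usual two-sided argument yields the instantaneous bound $r_{t,\star}-r_{t,a_t}\le 2\beta_{t-1}\lVert x_{t,a_t}\rVert_{A_{t-1}^{-1}}$.

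The problem-dependent part is the heart of the proof. Since any suboptimal pull incurs gap at least $\Delta_{\min}$, I would multiply through to obtain $\Delta_{\min}\sum_t(r_{t,\star}-r_{t,a_t})\le\sum_t(r_{t,\star}-r_{t,a_t})^2\le 4\beta_T^2\sum_t\lVert x_{t,a_t}\rVert_{A_{t-1}^{-1}}^2$, and then control the potential sum by the elliptical-potential / determinant-trace inequality $\sum_t\lVert x_{t,a_t}\rVert_{A_{t-1}^{-1}}^2\le 2\log\det A_T\le 2d\log\frac{d+TC_{\mathrm{context}}^2}{d}$. Invoking the single-optimal-context assumption lets me separate the direction $x_\star$ (which is pulled almost always) from the remaining $d-1$ directions spanned by suboptimal contexts, so that the determinant growth that actually matters is $(d-1)$-dimensional; this is what keeps $N$ and $\Delta_{\max}$ out of the bound and produces the $(d-1)$ factors inside the bracket. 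Finally I would bound $\beta_T^2$ and collect the logarithmic factors into the stated $C_{\mathrm{noise}}^2C_{\mathrm{theta}}^2$ coefficient, then solve the resulting inequality for the regret; this inequality is mildly self-referential because the effective pull count reappears inside the logarithm of $\beta_T$, and back-substituting the count bound is what generates the $\log\frac{64C_{\mathrm{noise}}^2C_{\mathrm{theta}}^2C_{\mathrm{context}}}{\Delta_{\min}^2}$ term. Multiplying the resulting $O((\log T)^2)$ nominal bound by $\bar L$ from the first step gives the claim.

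I expect the main obstacle to be this last step: carrying out the determinant/eigenvalue accounting so that the optimal direction is isolated and the self-referential logarithm is resolved cleanly, which is precisely the machinery behind the low-variation (always-explore) portion of Theorem~\ref{th:AdaLinUCB_problem_dependent}; indeed the entire nominal-regret analysis here is the special case of that proof in which every slot is treated as an exploration slot (with $\epsilon_0$ replaced by the averaged modulation $\bar L$). A secondary subtlety is ensuring optimism holds uniformly in $t$ through the choice of $\alpha$, since the confidence radius $\beta_t$ grows with $t$.
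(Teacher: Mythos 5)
Your proposal is correct and follows essentially the same route as the paper: the paper likewise first uses the fact that LinUCB's arm selection is independent of $L_t$ (together with the i.i.d.\ assumption on $L_t$) to write $\tilde{\mathbf{R}}_\mathrm{total}(T) = \bar{L}\,\mathbf{R}_\mathrm{total}(T)$, and then bounds the nominal regret by the problem-dependent argument $\sum_t R_t \leq \sum_t R_t^2/\Delta_{\min}$ combined with the confidence-ellipsoid, elliptical-potential, and single-optimal-context machinery of Abbasi-Yadkori et al.\ (Theorem 5 there), exactly as you outline. The only difference is one of exposition: you spell out the determinant accounting and the resolution of the self-referential logarithm, which the paper delegates to the cited proof.
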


\begin{figure*}[bt]
\begin{center}
\begin{minipage}{0.66\textwidth}
\begin{center}
\captionsetup{justification=centering,margin=0.2cm}
\subfigure[Binary-valued variation factor]{\includegraphics[angle = 0,width = 0.47\linewidth]{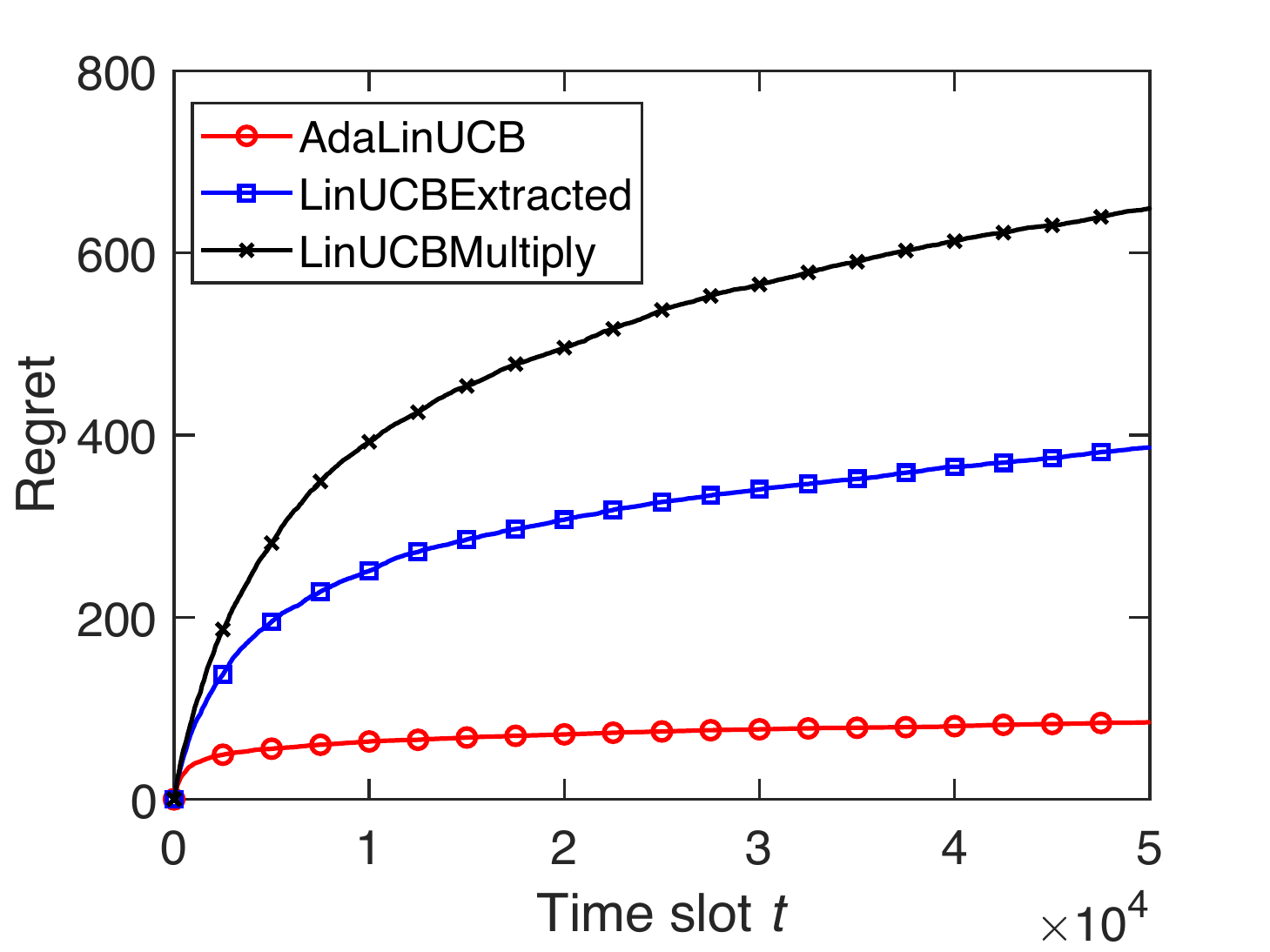}
\label{fig:BinaryValue_L}}
\subfigure[Beta distributed variation factor]
{\includegraphics[angle = 0,width = 0.47\linewidth]{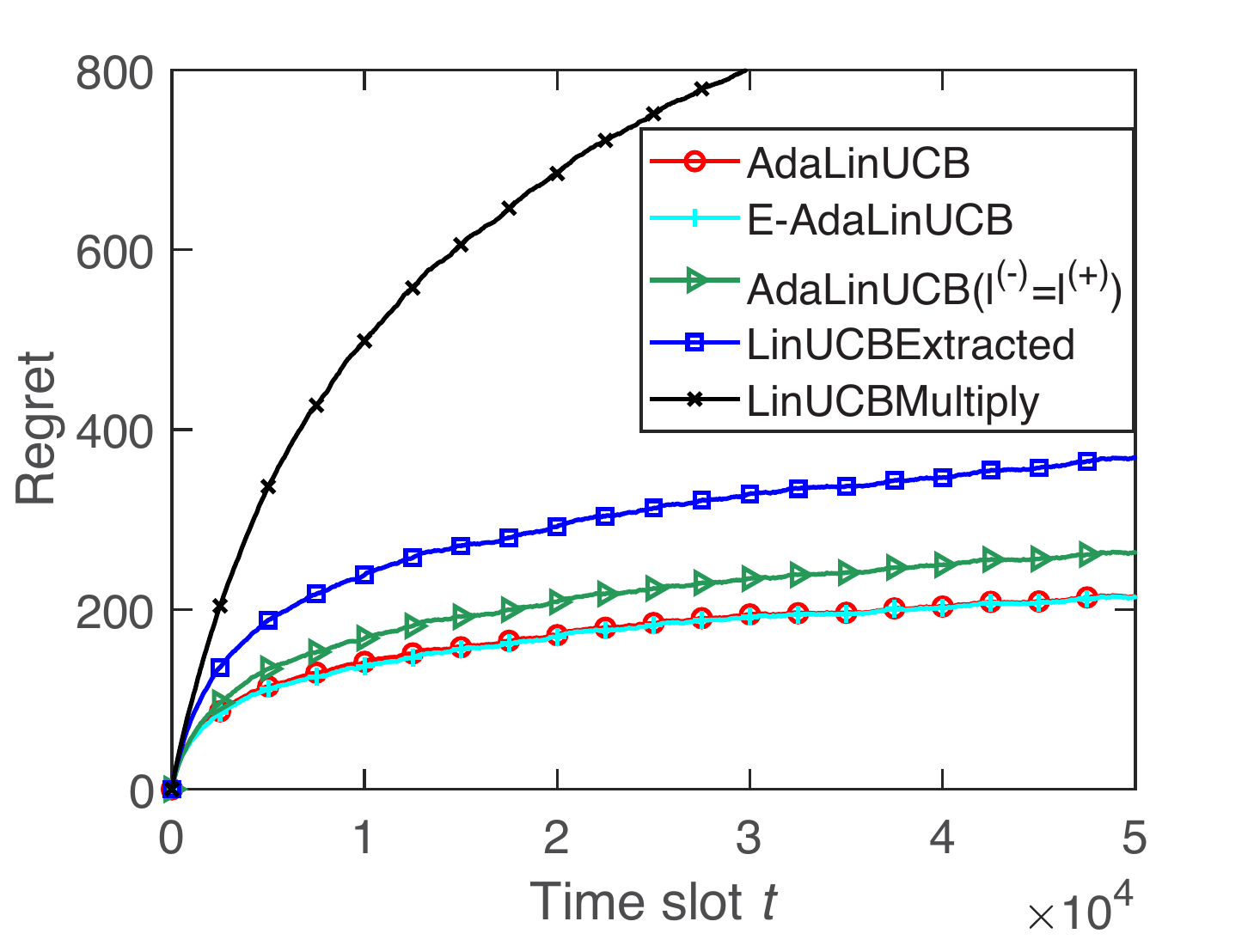}
\label{fig:beta_L}}
\vspace{-.25cm}
\caption{Regret under Synthetic Scenarios. In (a), $\epsilon_0 = \epsilon_1 = 0,  \rho = 0.5$.  In (b),  AdaLinUCB: $l^{(-)} \!\! =\! l^{(-)}_0$,  $l^{(+)}\!\! = l^{(+)}_0$; AdaLinUCB: $(l^{(-)} \! \!=l^{(+)})$, $l^{(-)}\!\!=l^{(+)} \!\!=l^{(-)}_{0.5}$}
\label{fig:synthetic_scenarios}
\end{center}
\end{minipage}
\begin{minipage}{0.33\textwidth}
\begin{center}
\captionsetup{justification=centering}
{\includegraphics[angle = 0,width = 0.94\linewidth]{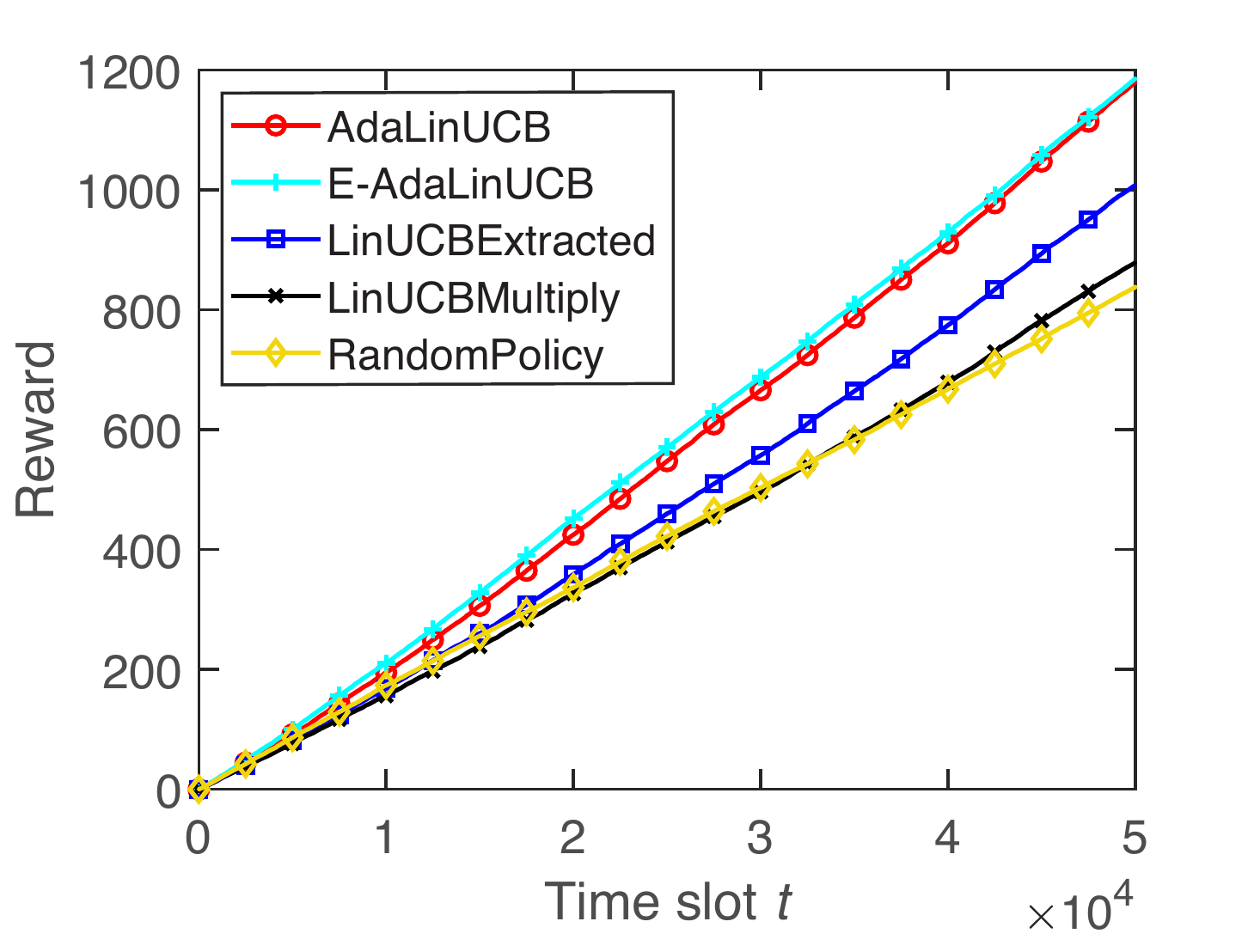}
\vspace{0.25cm}
\caption{Rewards for Yahoo! Today Module 
$l^{(-)}=l_0^{(-)}$,     $l^{(+)}=l_{0.3}^{(+)}$}
\vspace{0.25cm}
\label{fig:Yahoo_Module}}
\end{center}
\end{minipage}
\end{center}
\vspace{-0.45cm}
\end{figure*}

The regret bound for LinUCB under non-opportunistic case can be shown by simply having $\bar{L}=1$ in the above result. 
Here, note that problem-dependent bound analysis is a setting that allows a better bound to be achieved with stronger assumptions. Recall that the assumptions are discussed in Sec.~\ref{se:problem_dependent_setting}. As a result, the problem-dependent bound of LinUCB is much better than its general bound, such as the bound for a modified version of LinUCB in  \cite{Chu2011_LinUCB_analysis}.
More results for LinUCB and the proof of Theorem~\ref{th:LinUCB_problem_dependent_bound} can be found in Appendix~\ref{se:Appendix_LinUCB}.

\begin{remark}
Theorem~\ref{th:LinUCB_problem_dependent_bound} and Theorem~\ref{th:AdaLinUCB_problem_dependent} show  that the problem-dependent regret bounds (regarding actual reward) for LinUCB and AdaLinUCB are both $O\left( (\log T)^2\right)$.
Further, for binary-valued variation factor, the asymptotically dominant term for the bound of LinUCB is
$
\frac{1-\epsilon_1 +\epsilon_0}{2}  \cdot
\frac{16  C_\mathrm{noise}^2 
	C_\mathrm{theta}^2}{\Delta_{\min}} \left(\log T\right)^2$.
In comparison, for AdaLinUCB, it is
$
\epsilon_0  \cdot
\frac{16  C_\mathrm{noise}^2 
	C_\mathrm{theta}^2}{\Delta_{\min}} \left(\log T\right)^2$.
Because $\epsilon_0 < 1- \epsilon_1$, in the scenario of binary-valued variation factor, the AdaLinUCB algorithm has a better asymptotic problem-dependent upper bound than that of the LinUCB algorithm.
Similarly, in scenario with continuous variation factor, 
the AdaLinUCB algorithm with $l^{(+)}=l^{(-)}$ has a better problem-dependent bound than LinUCB algorithm as long as $\mathbb{E}[L_t | L_t \leq l^{(-)}]< \bar{L}$, which holds in most cases.
\end{remark}

\subsection{Discussions on the Disjoint Model}
The seminal paper on LinUCB  \cite{Li2010_LinUCB} introduces different models for 
contextual bandits.
The opportunistic learning applies to these different models.
One of them is the joint model discussed above. 
Another model is the disjoint model, which assumes that,
$
\mathbb{E}[r_{t,a}|x_{t,a}] = \langle x_{t,a}, \theta_\star^{(a)} \rangle
$,
where $x_{t,a}$ is a  context vector and $\theta_\star^{(a)}$ is the unknown coefficient vector for arm $a$. This model is called disjoint since the parameters are not shared among different arms.
There is also a hybrid model that combines the joint model and the disjoint model.

In this paper, we focus on the design and analysis of opportunistic contextual bandits using the joint model.
However, it should be noted that, the AdaLinUCB algorithm in Algo.~\ref{alg:AdaLinUCB}
can be modified slightly and applied to the disjoint model, see Appendix~\ref{se:Appendix_Disjoint} for more details.
Also, the analysis of the joint model can be extended to the disjoint one. Note that the disjoint model can be converted to a joint model when the number of possible arms is finite.
Specifically, for an arbitrary disjoint-model opportunistic contextual bandit problem  with $\theta_\star^{(a)}~\forall a$, an equivalent joint-model problem exists with the joint unknown parameter as  $\theta_\star=([\theta_\star^{(1)}]^\top, [\theta_\star^{(2)}]^\top, \cdots)^\top$ and the context vectors modified accordingly.
Thus, the previous analytical results are valued for the disjoint model with appropriate modifications.

\section{Numerical Results}\label{se:numerical_results}

We present numerical results to demonstrate the performance of the AdaLinUCB algorithm using both synthetic scenario and real-world datasets.
We have implemented the following algorithms: 
1) AdaLinUCB in Algo.~\ref{alg:AdaLinUCB};
2) LinUCB(Extracted) in Sec.~\ref{se:LinUCB_in_mainPpaer};
3) \textbf{LinUCBMultiply}, another way to directly apply LinUCB in opportunistic case,
where we use $ L_t \cdot x_{t,a}$ as context vector;
4) \textbf{E-AdaLinUCB}, an algorithm that 
adjusts the threshold $l^{(+)}$ and  $l^{(-)}$ based on the empirical distribution of $L_t$. 
In all the algorithms, we set $\alpha=1.5$ to make a fair comparison.

We have also experimented  \textbf{LinUCBCombine} algorithm, where we use $\tilde{x}_{t,a}= [L_t, x_{t,a}^\top]^\top$ as context vector to directlty apply LinUCB, and find that LinUCBCombine has a much worse performance compared to other algorithms. 

Meanwhile, we also notice that the opportunistic linear contextual bandits can be regarded as a special case of non-linear contextual bandits by viewing the variation factor $L_t$ as a part of context vector. Along this line of thinking, we have also experimented \textbf{KernelUCB} algorithm \cite{valko2013finite}, which is a general algorithm for non-linear contextual bandits. However, we find that 
KernelUCB is less competitive in performance and suffers from extremely high computational complexity (see Appendix~\ref{se:ap_simu_kernel} for more details).
One reason is that a general contextual bandit algorithm such as KernelUCB does not take advantage of the opportunistic nature of the problem, and can, therefore, have a worse performance than AdaLinUCB.

\subsection{Experiments on Synthetic Scenarios}

The synthetic scenario has a total of $20$ possible arms, each associated with a disjoint unknown coefficient   $\theta_\star^{(a)}$.
The simulator generates $5$ possible groups of context vectors, and each group has context vectors associated with all the possible arms.
At each time slot, 
a context group is presented before the decision.
Further, each unknown coefficient $\theta_\star^{(a)}$ and each context $x_{t,a}$ is a $6$-dimension vector, with elements in each dimension generated randomly, and is normalized such that the L2-norm of $\theta_\star^{(a)}$ or $x_{t,a}$ is $1$. 

Fig.~\ref{fig:BinaryValue_L} shows the regret for different algorithms under random binary-value variation factor with $\epsilon_0=\epsilon_1=0$ and $\rho=0.5$. AdaLinUCB significantly reduces the regret in this scenario. Specifically, at time slots $t=5\times 10^4$, AdaLinUCB achieves a regret that is only  $10.3\%$ of that of LinUCBMultiply, and 17.6\% of that of LinUCBExtracted. 

For continuous variation factor, Fig.~\ref{fig:beta_L} compares the regrets for the algorithms under a beta distributed variation factor. 
Here, we define $l_\rho^{(-)}$ as the lower threshold such that $\mathbb{P}\{L_t\! \leq \! l_\rho^{(-)} \!=\!\rho \}$, and $l_\rho^{(+)}$ as the higher threshold such that $\mathbb{P}\{L_t \! \geq\! l_\rho^{(+)}\! =\! \rho \}$.
It is shown that AdaLinUCB still outperforms other algorithms, 
and AdaLinUCB has a regret $41.8\%$ lower than that of LinUCBExtracted.
Furthermore, its empirical version, E-AdaLinUCB has a similar performance to that of AdaLinUCB. 
Even in the special case with a single threshold $l^{(-)}\!\!=l^{(+)}\!\!=l^{(-)}_{0.5}$, AdaLinUCB still outperforms LinUCBExtracted, reducing the regret by $28.6\%$.

We have conducted more simulations to evaluate the impact of environment and algorithm parameters such as variation factor fluctuation and the thresholds for variation factor truncation, and find that AdaLinUCB works well in different scenarios (see Appendix~\ref{se:ap_simu_binary} and \ref{se:ap_simu_beta} ).

\subsection{Experiments on Yahoo! Today Module}

We also test the performance of the algorithms using the data from Yahoo! Today Module. 
This dataset contains over $4$ million user visits to the Today module in a ten-day period in May 2009
\cite{Li2010_LinUCB}.
To evaluate contextual bandits using offline data, the experiment uses the unbiased offline evaluation protocol proposed in \cite{Li2011_unbiased_LinUCB_simu}.
For the variation factor, we use a real trace - the sales of a popular store. It includes everyday turnover in two years \cite{Ross}.

In this real recommendation scenario, because we do not know the ground truth; i.e., which article is best for a specific user, we cannot calculate the regret. Therefore, all the results are measured using the reward, as  shown in Fig. \ref{fig:Yahoo_Module}. 
We note that AdaLinUCB increases the reward by  $17.0\%$, compared to LinUCBExtracted, and by $40.8\%$ compared to the random policy.  We note that an   increase in accumulated reward is typically much more substantial than the same decrease in regret. 
We also note that E-AdaLinUCB, where one does not assume prior knowledge on the variation factor distribution, achieves a similar performance.  
This experiment demonstrates the
effectiveness of AdaLinUCB and E-AdaLinUCB in practical situations,
where the variation factor are continuous and
are possibly non-stationary, and the candidate arms are time-varying. More details on the datasets and evaluation under different parameters can be found in Appendix~\ref{se:ap_simu_real}. 


\section{Conclusions}
In this paper, we study opportunistic contextual bandits where the exploration cost is time-varying depending on external conditions such as network load or return variation in recommendations. 
We propose AdaLinUCB that opportunistically chooses between exploration and exploitation based on that external variation factor, i.e., taking the slots with low variation factor as opportunities for more explorations. 
We prove that AdaLinUCB achieves $O\left((\log T)^2 \right)$ problem-dependent regret upper bound, which has a smaller coefficient thatn that of the traditional LinUCB algorithm.
Extensive experiment results based on both synthetic and real-world database demonstrate the significant benefits of opportunistic exploration under large exploration cost fluctuations.
\section*{Acknowledgment}
The work is partially supported supported by NSF
through grants CNS-1547461, CNS-1718901, and IIS-1838207.

\bibliographystyle{named}
\bibliography{OppContextBandit}

\newpage
\appendix

\section{Preparatory Results for Analysis}\label{se:LemmaProofs}

We begin with some preparatory analysis results. 

\begin{lemma}\label{th:lemma_confidence_set}
Assume that the noise satisfies the $C_\mathrm{noise}$-sub-Gaussian condition in \eqref{as:sub_Gaussian}, and assume that $||\theta_\star||_2 \leq C_\mathrm{theta}$. Then, the following results hold:
\begin{enumerate}[1)]
\item For any $\delta \! \in\! (0,1)$, with probability at least $1\!-\!\delta$, for all $t\geq 0$,  $\theta_\star \in  \big\{ 
\theta \in \mathbb{R}^d :
\left \|  \theta_t - \theta_\star \right \|_{A_t} \leq \det(I_d)^{\frac{1}{2}}  C_\mathrm{theta} $
$+ C_\mathrm{noise} 
\sqrt{
2 \log \bigg(
\frac{\det(A_t)^{\frac{1}{2}} \det(I_d)^{-\frac{1}{2}} }{\delta}
\bigg)
} 
\bigg\}$.
\item Further, if $\lVert x_{t,a} \rVert_2 \leq C_\mathrm{context} $ holds for $\forall t, \forall a\in \mathcal{D}_t$, then, with probability at least $1-\delta$,
$\theta_\star \in   \big\{
\theta \in \mathbb{R}^d :$
$\left \lVert  \theta_t - \theta_\star \right \rVert_{A_t} \leq C_\mathrm{theta} +
C_\mathrm{noise} \sqrt{
	d \log \bigg(
	\frac{1 + t C_\mathrm{context}^2 }{\delta}
	\bigg)
} 
\bigg\}$.
\end{enumerate}
\end{lemma}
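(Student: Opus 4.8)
The plan is to reduce both claims to a single self-normalized concentration inequality for the vector-valued noise martingale, following the method-of-mixtures technique whose tools are credited to \cite{Abbasi2011}. First I would establish the algebraic decomposition. Writing $S_t = \sum_{\tau=1}^t x_{\tau,a_\tau}\eta_\tau$ and substituting the linear reward model $r_{\tau,a_\tau} = \langle x_{\tau,a_\tau},\theta_\star\rangle + \eta_\tau$ into $b_t = \sum_{\tau=1}^t x_{\tau,a_\tau} r_{\tau,a_\tau}$, I get $b_t = (A_t - I_d)\theta_\star + S_t$, since $\sum_\tau x_{\tau,a_\tau}x_{\tau,a_\tau}^\top = A_t - I_d$. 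Hence $\theta_t = A_t^{-1}b_t = \theta_\star - A_t^{-1}\theta_\star + A_t^{-1}S_t$, and using $A_t^{-1}A_t A_t^{-1} = A_t^{-1}$ this gives the clean identity $\|\theta_t - \theta_\star\|_{A_t} = \|S_t - \theta_\star\|_{A_t^{-1}}$. The triangle inequality then splits the right-hand side into a bias term $\|\theta_\star\|_{A_t^{-1}}$ and a noise term $\|S_t\|_{A_t^{-1}}$.

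The bias term is elementary: because $A_t = I_d + \sum_\tau x_{\tau,a_\tau}x_{\tau,a_\tau}^\top \succeq I_d$, we have $A_t^{-1}\preceq I_d$, so $\|\theta_\star\|_{A_t^{-1}} \le \|\theta_\star\|_2 \le C_{\mathrm{theta}}$, which (using $\det(I_d)=1$) matches the first summand in claim 1). The crux, and the step I expect to be the main obstacle, is controlling the noise term $\|S_t\|_{A_t^{-1}}$ \emph{uniformly} over all $t\ge 0$. The difficulty is that the normalizing matrix $A_t$ is itself random and adapted to the same filtration that drives $S_t$, so a fixed-weight Azuma/Hoeffding argument with a union bound does not apply cleanly. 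Instead I would build the exponential process $M_t^\lambda = \exp\!\big(\langle\lambda,S_t\rangle - \tfrac{C_{\mathrm{noise}}^2}{2}\|\lambda\|_{A_t - I_d}^2\big)$; the sub-Gaussian assumption \eqref{as:sub_Gaussian} applied with $\zeta = \langle\lambda,x_{t,a_t}\rangle$ shows that $M_t^\lambda$ is a supermartingale for each fixed $\lambda$. Integrating $M_t^\lambda$ against a Gaussian prior on $\lambda$ (the method of mixtures) produces a scalar nonnegative supermartingale whose value is $\det(A_t)^{-1/2}\exp\!\big(\tfrac{1}{2C_{\mathrm{noise}}^2}\|S_t\|_{A_t^{-1}}^2\big)$ up to the $\det(I_d)$ normalization; the maximal inequality then yields, with probability at least $1-\delta$ and simultaneously for all $t$, the self-normalized bound $\|S_t\|_{A_t^{-1}}^2 \le 2C_{\mathrm{noise}}^2\log\!\big(\det(A_t)^{1/2}\det(I_d)^{-1/2}/\delta\big)$. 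Combining this with the bias bound establishes claim 1).

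Finally, to pass from 1) to 2) I only need to eliminate the determinant using the context-norm assumption. Letting $\lambda_1,\dots,\lambda_d \ge 0$ be the eigenvalues of $\sum_\tau x_{\tau,a_\tau}x_{\tau,a_\tau}^\top$, the trace identity gives $\sum_i \lambda_i = \sum_\tau \|x_{\tau,a_\tau}\|_2^2 \le t C_{\mathrm{context}}^2$, and since each factor satisfies $1+\lambda_i \le 1 + \sum_j \lambda_j$, we obtain $\det(A_t) = \prod_i (1+\lambda_i) \le \big(1 + t C_{\mathrm{context}}^2\big)^d$. Substituting $\log\det(A_t) \le d\log(1+tC_{\mathrm{context}}^2)$ into the radius of claim 1) and consolidating the $\log(1/\delta)$ contribution into the $d\log(\,\cdot\,/\delta)$ form yields the simplified confidence radius $C_{\mathrm{theta}} + C_{\mathrm{noise}}\sqrt{d\log\big((1+tC_{\mathrm{context}}^2)/\delta\big)}$, which is claim 2). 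I would emphasize that the only genuinely technical ingredient is the supermartingale construction together with its uniform-in-$t$ control via the mixture; everything else reduces to positive-definiteness of $A_t$ and the determinant--trace inequality.
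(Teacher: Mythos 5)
Your proposal is correct and takes essentially the same approach as the paper: the paper's own proof is a one-line appeal to the ridge-regression structure of $\theta_t$ and to Theorem~2 of \cite{Abbasi2011}, whose argument is exactly the bias/self-normalized-noise decomposition, the method-of-mixtures supermartingale bound, and the determinant--trace step that you reconstruct. The only difference is that you write out in full what the paper delegates to the citation (and your slightly cruder bound $\det(A_t)\le(1+tC_{\mathrm{context}}^2)^d$ still suffices for claim~2).
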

\begin{proof}
it simply follows from the fact that $\theta_t$ is the result of a ridge regression, and that the sub-Gaussian condition is assumed. The technique is as in  \cite{Abbasi2011} (specifically, Theorem 2 in \cite{Abbasi2011}). 
\end{proof}

Lemma \ref{th:lemma_confidence_set} shows that the estimation $\theta_t$ is close to the unknown parameter $\theta_\star$ in an appropriate sense. 

\begin{lemma}\label{th:lemma_inequality_reward_estimate}
For $\forall t \geq 1$, $\forall a \in \mathcal{D}_t$, the following result holds,
\begin{align*}
\lvert 
\langle \theta_\star, x_{t,a} \rangle -
\langle \theta_{t-1}, x_{t,a}
\rangle
\rvert
\leq 
\lVert  \theta_{t-1} - \theta_\star  \rVert_{A_{t-1}}
\cdot 
\lVert x_{t,a} \rVert_{A_{t-1}^{-1}}.
\end{align*}
\end{lemma}
\begin{proof}
We have the following,
\begin{align*}
 &\lvert 
\langle \theta_\star, x_{t,a} \rangle -
\langle \theta_{t-1}, x_{t,a}
\rangle
\rvert \\
 =&
\lvert \left( \theta_\star - \theta_{t-1}  \right)^\top x_{t,a}  \rvert \\
 = & \lvert \left( \theta_\star - \theta_{t-1}  \right)^\top 
A_{t-1}^{\frac{1}{2}} A_{t-1}^{-\frac{1}{2}}
x_{t,a}  \rvert \\
 =& \big \lvert ~\langle 
A_{t-1}^{\frac{1}{2}} \left( \theta_\star - \theta_{t-1}  \right),~
A_{t-1}^{-\frac{1}{2}} x_{t,a}
\rangle~ \big \rvert \\
\leq & \lVert A_{t-1}^{\frac{1}{2}} ( \theta_\star - \theta_{t-1}  ) \rVert_2 \cdot
\lVert A_{t-1}^{-\frac{1}{2}} x_{t,a} \rVert_2 \\
 =& 
 \resizebox{.91\linewidth}{!}{$
 \sqrt{
( \theta_\star - \theta_{t-1}  )^\top
A_{t-1}^{\frac{1}{2}} A_{t-1}^{\frac{1}{2}}
( \theta_\star - \theta_{t-1}  ) 
}
\cdot
\sqrt{
x_{t,a}^\top
A_{t-1}^{-\frac{1}{2}} A_{t-1}^{-\frac{1}{2}}
x_{t,a}
} $
}\\
 =& \lVert  \theta_{t-1} - \theta_\star  \rVert_{A_{t-1}}
\cdot 
\lVert x_{t,a} \rVert_{A_{t-1}^{-1}},
\end{align*}
where the third equality holds by noting that a positive-definite matrix $A_{t-1}$ is symmetric; the inequality holds by Cauchy–Schwarz inequality.
\end{proof}

 Lemma~\ref{th:lemma_inequality_reward_estimate} presents an upper bound on the estimation error of the reward corresponding to a given context vector. 
Combining Lemma~\ref{th:lemma_confidence_set}, the first term of this upper bound, i.e., $\lVert  \theta_{t-1} - \theta_\star  \rVert_{A_{t-1}}$, can be upper bounded with a high probability. To further consider the property of the second term, i.e., $\lVert x_{t,a} \rVert_{A_{t-1}^{-1}}$, we bound the summation by the following result.

\begin{lemma}\label{th:lemma_summation_xt}
Assume that $\lVert x_{t,a} \rVert_2 \leq C_\mathrm{context} $ holds for $\forall t, \forall a\in \mathcal{D}_t$, and assume that $\lambda_{\min}(I_d)\geq \max\{1, C_\mathrm{context}^2 \}$, then the following results hold,
\begin{align}\label{eq:lemma_summation_xt}
& \sum_{t=1}^{T} \lVert x_{t,a_t} \rVert_{A_{t-1}^{-1}}^2
\leq 2 \log \Big( \det (A_T) /\det (I_d) \Big) \\ \nonumber
& \leq  2 \left[
d \log\Big( \frac{
\mathrm{trace} (I_d) + T C_\mathrm{context}^2
}{d} \Big)
-\log\det(I_d)
\right].
\end{align}
\end{lemma}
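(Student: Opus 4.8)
The plan is to establish the two inequalities in \eqref{eq:lemma_summation_xt} by the standard determinant-telescoping (``elliptical potential'') technique. First I would record the one-step determinant recursion. Since $A_t = A_{t-1} + x_{t,a_t} x_{t,a_t}^\top$, the matrix determinant lemma (rank-one update) gives $\det(A_t) = \det(A_{t-1})\bigl(1 + x_{t,a_t}^\top A_{t-1}^{-1} x_{t,a_t}\bigr) = \det(A_{t-1})\bigl(1 + \lVert x_{t,a_t}\rVert_{A_{t-1}^{-1}}^2\bigr)$, so that each summand satisfies $1 + \lVert x_{t,a_t}\rVert_{A_{t-1}^{-1}}^2 = \det(A_t)/\det(A_{t-1})$. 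This is well defined since each $A_t$ is positive-definite, and it immediately turns the sum into a telescoping product once I control each term.

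The pivotal step, and the one where assumption iv.\ enters, is to show that every summand is at most $1$. Because $A_{t-1} = I_d + \sum_{\tau=1}^{t-1} x_{\tau,a_\tau} x_{\tau,a_\tau}^\top$, the matrix $A_{t-1} - I_d$ is positive semidefinite, hence $\lambda_{\min}(A_{t-1}) \geq \lambda_{\min}(I_d) \geq \max\{1, C_\mathrm{context}^2\}$. Consequently $\lVert x_{t,a_t}\rVert_{A_{t-1}^{-1}}^2 \leq \lVert x_{t,a_t}\rVert_2^2 / \lambda_{\min}(A_{t-1}) \leq C_\mathrm{context}^2 / \max\{1, C_\mathrm{context}^2\} \leq 1$. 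Having each term in $[0,1]$ licenses the elementary bound $u \leq 2\log(1+u)$ valid on $[0,1]$ (the function $2\log(1+u) - u$ vanishes at $u=0$ and has derivative $(1-u)/(1+u) \geq 0$ there, so it stays nonnegative). Applying this termwise and inserting the recursion, I telescope: $\sum_{t=1}^{T} \lVert x_{t,a_t}\rVert_{A_{t-1}^{-1}}^2 \leq 2\sum_{t=1}^{T} \log\bigl(\det(A_t)/\det(A_{t-1})\bigr) = 2\log\bigl(\det(A_T)/\det(A_0)\bigr)$, and since $A_0 = I_d$ this is exactly the first inequality.

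For the second inequality I would bound $\det(A_T)$ via the arithmetic–geometric mean inequality applied to the eigenvalues $\lambda_1,\dots,\lambda_d$ of $A_T$: $\det(A_T) = \prod_{i=1}^d \lambda_i \leq \bigl(\tfrac{1}{d}\sum_{i=1}^d \lambda_i\bigr)^d = \bigl(\mathrm{trace}(A_T)/d\bigr)^d$. Then $\mathrm{trace}(A_T) = \mathrm{trace}(I_d) + \sum_{\tau=1}^{T} \lVert x_{\tau,a_\tau}\rVert_2^2 \leq \mathrm{trace}(I_d) + T C_\mathrm{context}^2$, and taking logarithms yields $\log\det(A_T) \leq d\log\bigl((\mathrm{trace}(I_d) + T C_\mathrm{context}^2)/d\bigr)$; subtracting $\log\det(I_d)$ and multiplying by $2$ gives the stated bound. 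The only genuine subtlety here is the role of assumption iv.: it is precisely what forces each summand below $1$ so that the clean constant factor of $2$ in front of the log holds uniformly; without it one would have to replace $2$ by a term-dependent constant, so I would flag that the assumption is used exactly at the $\lambda_{\min}$ estimate and nowhere else.
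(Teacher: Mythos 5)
Your proof is correct and follows exactly the standard elliptical-potential argument (matrix determinant lemma, the bound $u \leq 2\log(1+u)$ on $[0,1]$ enabled by assumption iv., telescoping, then trace--determinant AM--GM), which is precisely the argument behind Lemma 11 of \cite{Abbasi2011} that the paper cites as its entire proof. In effect you have written out in full the proof the paper outsources to that citation, with the role of assumption iv.\ correctly identified as the step that keeps each summand below $1$.
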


Lemma~\ref{th:lemma_summation_xt}  directly follows from \cite{Abbasi2011} (specifically, Lemma 11 of \cite{Abbasi2011}).

\begin{remark} (\textbf{Relax assumption iv. in Sec.~\ref{se:Performance Analysis}.})
Now, we briefly discuss the way to 
relax the assumption $\lambda_{\min}(I_d)\geq \max\{1, C_\text{context}^2 \}$ in Lemma~\ref{th:lemma_summation_xt} and the theorems using this Lemma. 
For this assumption, it can be relaxed by changing the initial value of matrix $A$ in Algo.~\ref{alg:AdaLinUCB}. Currently, we have the initial matrix value $A_0 = I_d$, leading to the current  results in Lemma~\ref{th:lemma_summation_xt}. 
If the initial value $A_0$ is changed to a positive-definite matrix with a higher minimum eigenvalue, then with a modified assumption $\lambda_{\min}(A_0)\geq \max\{1, C_\text{context}^2 \}$, Lemma~\ref{th:lemma_summation_xt} still holds after substituting the matrix $I_d$ by the new $A_0$. One example of such a new $A_0$ can be $C_\text{context} \cdot I_d$ with $C_\text{context} > 1$. Also, we note that Lemma~\ref{th:lemma_inequality_reward_estimate} still holds after changing $A_0$ to another positive-definite matrix. Further, when changing $A_0$ to another positive-definite matrix, the first statement of Lemma~\ref{th:lemma_confidence_set} still holds after substituting the matrix $I_d$ by the new matrix $A_0$, while the second statement following from the first one and can be changed accordingly. Thus, for assumption $\lambda_{\min}(I_d)\geq \max\{1, C_\text{context}^2 \}$, we can modify the choice of $A_0$ to relax this assumption.
\end{remark}

\section{Proof for Theorem~\ref{th:AdaLinUCB_problem_dependent}}\label{se:Proof_AdaLinUCB}
Firstly, note that we have some preparatory analysis results, as shown in Appendix~\ref{se:LemmaProofs}.

We begin with some notations. 
Let $\mathbf{R}_\text{total}^\text{(low)}(T)$ denote the accumulated regret regarding nominal rewards when the variation factor is low, i.e.,
$\mathbf{R}_\text{total}^\text{(low)}(T) 
=\sum_{t=1}^{T} R_t \mathbbm{1}\{
L_t = \epsilon_0\}$, where $ \mathbbm{1}\{ \cdot \}$ is the indicator function.
The $\mathbf{R}_\text{total}^\text{(high)}(T)$ is defined similarly as  $\mathbf{R}_\text{total}^\text{(high)}(T) 
=\sum_{t=1}^{T} R_t \mathbbm{1}\{
L_t = 1-\epsilon_1\}$.

Then, we have that, 
\begin{align*}
\tilde{\mathbf{R}}_\text{total}(T)
= \epsilon_0 \cdot \mathbf{R}_\text{total}^\text{(low)}(T) 
+ (1-\epsilon_1) \cdot
\mathbf{R}_\text{total}^\text{(high)}(T).
\end{align*}
As a result, to prove this theorem, it is sufficient to prove that,
with probability at least $1-\tilde{\delta}$, both of the following results hold:
\begin{align}
\label{eq:AdaProof_low level total}
\nonumber
\mathbf{R}_\text{total}^\text{(low)}(T) &\leq 
\frac{16 C_\text{noise}^2 C_\text{theta}^2 }{\Delta_{\min}}
\Bigg[\log(C_\text{context} T) + 2 \log\frac{2}{\tilde{\delta}}
\\ \nonumber & + 2 (d-1) \log \left(
d\log\frac{d+ T C_\text{context}^2}{d} + 2 \log\frac{2}{\tilde{\delta}}
\right) 
\\  
&+(d-1) \log\frac{64 C_\text{noise}^2 C_\text{theta}^2 C_\text{context} }{\Delta_{\min}^2} 
\Bigg]^2, 
\end{align}

\begin{align}
\label{eq:AdaProof_high level total}
\nonumber
& \mathbf{R}_\text{total}^\text{(high)}(T)  \leq \!\!
\Bigg[
4  d \frac{N\!-\!1}{\Delta_{\min}}\! 
\bigg(\!\!
C_\text{noise}\sqrt{d\log
\!	\frac{2\!+\!2TC_\text{context}^2}{\tilde{\delta}} }\! +\! C_\text{theta} \!
\bigg)^2 
\\ 
& \quad +  \Delta_{\max} C_\text{slots} 
\bigg( \!\!
C_\text{noise}\sqrt{d\log
	\frac{2\!+\!2TC_\text{context}^2}{\tilde{\delta}} } \!+\! C_\text{theta}\!\!
\bigg)^2 
\Bigg].
\end{align}

\subsection{Regret for slots with low variation factor:}
Firstly, we focus on $\mathbf{R}_\text{total}^\text{(low)}(T)$. 
For the binary-valued variation factor,
let the lower threshold $l^{(-)}=\epsilon_0$. 
Then the variation factor $L_t=\epsilon_0$, while the normalized variation factor $\tilde{L}_t = 0$. 
As a result, the index $p_{t,a}$ in step~\ref{line:AdaLinUCB_index} of Algo.~\ref{alg:AdaLinUCB} becomes,
\begin{align*}
p_{t,a} =\theta_{t-1}^\top x_{t,a} + \alpha \sqrt{   x_{t,a}^\top A_{t-1}^{-1} x_{t,a} },
\end{align*}
Then, for $\forall t \geq 1$ with $L_t= \epsilon_0$, if $\alpha \geq \lVert
\theta_{t-1} - \theta_\star
\rVert_{A_{t-1}}$, we have,
\begin{align*}
R_t & = \langle x_{t,a_t^\star}, \theta_\star \rangle - \langle x_{t,a_t}, \theta_\star \rangle \\
& \leq 
\langle x_{t,a_t^\star}, \theta_{t-1} \rangle 
+ \alpha \lVert x_{t,a_t^\star} \rVert_{A_{t-1}^{-1}}
- \langle x_{t,a_t}, \theta_\star \rangle \\
& \leq 
\langle x_{t,a_t}, \theta_{t-1} \rangle 
+ \alpha \lVert x_{t,a_t} \rVert_{A_{t-1}^{-1}}
- \langle x_{t,a_t}, \theta_\star \rangle
\\
& =
\langle x_{t,a_t}, \theta_{t-1} \rangle 
- \langle x_{t,a_t}, \theta_\star \rangle
+ \alpha \lVert x_{t,a_t} \rVert_{A_{t-1}^{-1}}
\\
& \leq 
\lVert
\theta_{t-1} - \theta_\star
\rVert_{A_{t-1}}  \lVert x_{t,a_t} \rVert_{A_{t-1}^{-1}} + \alpha \lVert x_{t,a_t} \rVert_{A_{t-1}^{-1}} \\
& \leq 2 \alpha \lVert x_{t,a_t} \rVert_{A_{t-1}^{-1}},
\end{align*}
where the inequality in the second line holds by Lemma~\ref{th:lemma_inequality_reward_estimate} and $\alpha \geq \lVert
\theta_{t-1} - \theta_\star
\rVert_{A_{t-1}}$; the inequality in the third line holds by the design of the AdaLinUCB algorithm, specifically, by step~\ref{line:AdaLinUCB_selection} of Algo.~\ref{alg:AdaLinUCB}; the inequality in the fifth line holds by Lemma~\ref{th:lemma_inequality_reward_estimate}, and the last inequality holds by $\alpha \geq \lVert
\theta_{t-1} - \theta_\star
\rVert_{A_{t-1}}$.
As a result, we have,
\begin{align*}
R_t \mathbbm{1} \{L_t = \epsilon_0 \}
\leq 2 \alpha \lVert x_{t,a_t} \rVert_{A_{t-1}^{-1}},
\end{align*}
with $\alpha \geq \lVert
\theta_{t-1} - \theta_\star
\rVert_{A_{t-1}}$.
Then, we have,
\begin{align}\label{eq:AdaProof_t1}
\mathbf{R}_\text{total}^\text{(low)}(T) 
& =\sum_{t=1}^{T} R_t \mathbbm{1}\{
L_t = \epsilon_0\} \leq \sum_{t=1}^{T} 2 \alpha  \lVert x_{t,a_t} \rVert_{A_{t-1}^{-1}}, 
\end{align}
with $\alpha \geq \lVert
\theta_{T-1} - \theta_\star
\rVert_{A_{T-1}}$.

We also note that, by Lemma~\ref{th:lemma_confidence_set}, with probability at least $1-\frac{\tilde{\delta}}{2}$, for all $t$, 
\begin{align}\label{eq:AdaProof_t2}
\nonumber
\theta_\star \in \Big\{ 
&\theta \in \mathbb{R}^d : 
\left \lVert  \theta_t - \theta_\star \right \rVert_{A_t}\\
& \leq C_\text{theta} +
C_\text{noise} \sqrt{
	d \log \left(
	\frac{2 + 2t C_\text{context}^2 }{\tilde{\delta}}
	\right)
} 
\Bigg\},
\end{align}
which substitutes the $\delta$ in Lemma~\ref{th:lemma_confidence_set} by $\frac{\tilde{\delta}}{2}$.

Further, we note that,
\begin{align}\label{eq:AdaProof_t2p1}
\mathbf{R}_\text{total}(T) = \sum_{t=1}^{T} R_t \leq \sum_{t=1}^{T} \frac{R_t^2}{\Delta_{\min}},
\end{align}
where the inequality holds since either $R_t=0$ or $\Delta_{\min}<=R_t$.

Then, by combining  \eqref{eq:AdaProof_t1}, \eqref{eq:AdaProof_t2}, and \eqref{eq:AdaProof_t2p1}, it follows from a similar argument as  \cite{Abbasi2011} (specifically, the proof of Theorem 5 in \cite{Abbasi2011}) that \eqref{eq:AdaProof_low level total} holds with probability at least $1-\frac{\tilde{\delta}}{2}$.
Note that the proof procedure uses Lemma~\ref{th:lemma_summation_xt} and the single optimal context condition.

\subsection{Regret for slots with high variation factor}

Now, we focus on $\mathbf{R}_\text{total}^\text{(high)}(T) $. 
We begin with some notations. 
The $N$ possible values of context vectors are denoted by $x_{(1)}, x_{(2)}, \cdots, x_{(N)}$ respectively. 
Without loss of generality, we assume that $x_{(1)}$ is the optimal context value, i.e., $x_{(1)}=x_\star$.
Let $m_{t,\star}$ be the number of times that the arm with the optimal context value has been pulled before time slot $t$, i.e., 
$m_{t,\star} = \sum_{\tau=1}^t \mathbbm{1}\{x_{\tau,a_\tau}= x_\star \} $.
Similarly, let $m_{t,(n)}$ be the number of times that the arm with context value $x_{(n)}$ has been pulled before time slot $t$, i.e., 
$m_{t,(n)} = \sum_{\tau=1}^t \mathbbm{1}\{x_{\tau,a_\tau}= x_{(n)} \} $.
In addition, let $m_{t,\star}^\text{(low)}$ be the number of times when the variation factor is low and the arm with the optimal context value $x_\star$ has been pulled during $t$-slot period, i.e., 
$m_{t,\star}^\text{(low)} = \sum_{\tau=1}^t \mathbbm{1}\{x_{\tau,a_\tau}= x_\star \} \cdot \mathbbm{1}\{L_\tau=\epsilon_0 \} $.
Let $m_{t,\text{all}}^\text{(low)}$ be the number of times when the variation factor is low during $t$-slot period, i.e., 
$m_{t,\text{all}}^\text{(low)} = \sum_{\tau=1}^t \mathbbm{1}\{L_\tau=\epsilon_0 \} $.
Further, let $m_{t,\text{subopt}}^\text{(low)}$ be the number of times when the variation factor is low and the arm with a suboptimal context value has been pulled during $t$-slots, i.e., 
$m_{t,\text{subopt}}^\text{(low)} = m_{t,\text{all}}^\text{(low)} -  m_{t,\star}^\text{(low)}$.

\begin{lemma}\label{th:lemma_UCBwidth}
For the AdaLinUCB algorithm, the following inequality holds, for any $n = 1,2, \cdots, N$,
\begin{align*}
\lVert x_{(n)} \rVert_{A_{t-1}^{-1}} \leq \sqrt{\frac{d}{m_{t-1, (n)}}}.
\end{align*}
\end{lemma}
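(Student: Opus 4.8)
The plan is to exploit the additive structure of $A_{t-1} = I_d + \sum_{\tau=1}^{t-1} x_{\tau,a_\tau}x_{\tau,a_\tau}^\top$ together with monotonicity of matrix inversion in the Loewner order. Among the $t-1$ rank-one summands, exactly $m_{t-1,(n)}$ of them equal $x_{(n)}x_{(n)}^\top$ (namely the slots $\tau$ with $x_{\tau,a_\tau}=x_{(n)}$), and every remaining summand is positive semidefinite. Discarding the non-negative contributions from the other terms, I would first establish the Loewner lower bound
\begin{align*}
A_{t-1} \succeq I_d + m_{t-1,(n)}\, x_{(n)} x_{(n)}^\top =: B .
\end{align*}
Since $A_{t-1}\succeq B\succ 0$ with both matrices positive definite, operator monotonicity of inversion gives $A_{t-1}^{-1}\preceq B^{-1}$, and therefore $x_{(n)}^\top A_{t-1}^{-1} x_{(n)} \le x_{(n)}^\top B^{-1} x_{(n)}$.

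The second step is to evaluate the right-hand quadratic form exactly. Writing $s=\lVert x_{(n)}\rVert_2^2$ and $m=m_{t-1,(n)}$, the Sherman--Morrison identity yields $B^{-1}=I_d-\tfrac{m\,x_{(n)}x_{(n)}^\top}{1+ms}$, so that
\begin{align*}
x_{(n)}^\top B^{-1} x_{(n)} = s-\frac{m s^2}{1+ms}=\frac{s}{1+ms}\le \frac{1}{m}.
\end{align*}
Because $d\ge 1$, this already gives $\lVert x_{(n)}\rVert_{A_{t-1}^{-1}}^2 \le 1/m \le d/m$, and taking square roots establishes the claim. I would note in passing that this route actually delivers the strictly stronger bound $\lVert x_{(n)}\rVert_{A_{t-1}^{-1}}\le \sqrt{1/m_{t-1,(n)}}$; the factor $d$ in the statement is harmless slack that I would retain only to match the form in which the lemma is later invoked.

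Since the computations are routine, the main obstacle is merely making the two standard facts airtight: (i) operator monotonicity of inversion, $0\prec B\preceq A \Rightarrow A^{-1}\preceq B^{-1}$, and (ii) the applicability of Sherman--Morrison, which is immediate here since $1+ms\ge 1>0$ always. A minor loose end I would dispose of is the boundary case $m_{t-1,(n)}=0$, for which the stated inequality is vacuously true (infinite right-hand side). Finally I would emphasize that the identity term $I_d$ kept inside $B$ is exactly what guarantees $B\succ 0$, and hence its invertibility, so the argument requires no control on $\lVert x_{(n)}\rVert_2$ nor on $\lambda_{\min}(I_d)$ and does not lean on assumptions iii.--iv.
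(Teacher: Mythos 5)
Your proof is correct, but it takes a genuinely different route from the paper's. The paper argues via a trace identity: writing $A_{t-1} = I_d + \sum_{i=1}^{N} m_{t-1,(i)}\, x_{(i)} x_{(i)}^\top$, it expands
\begin{align*}
d = \mathrm{trace}\left(A_{t-1}^{-1} A_{t-1}\right)
= \sum_{i=1}^{N} m_{t-1,(i)}\, x_{(i)}^\top A_{t-1}^{-1} x_{(i)} + \mathrm{trace}\left(A_{t-1}^{-1}\right),
\end{align*}
and then drops every nonnegative term except the one for $i=n$, which immediately gives $m_{t-1,(n)}\, x_{(n)}^\top A_{t-1}^{-1} x_{(n)} \le d$. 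You instead discard positive semidefinite summands in the Loewner order, use the fact that inversion reverses that order, and evaluate the resulting quadratic form exactly via Sherman--Morrison. Both arguments are airtight, and both treat $m_{t-1,(n)}=0$ as a vacuous case. What the paper's route buys is brevity and self-containedness: it needs nothing beyond linearity and cyclicity of trace plus positive-definiteness of $A_{t-1}^{-1}$. What your route buys is a strictly sharper, dimension-free bound $\lVert x_{(n)} \rVert_{A_{t-1}^{-1}} \le \sqrt{1/m_{t-1,(n)}}$; if propagated through the later analysis (e.g., inequality \eqref{eq:AdaProof_t7}, the bound \eqref{eq:AdaProof_r1}, and the definition of $C_\mathrm{slots}$ in \eqref{eq:Cslots_definition}), this would replace the factors $4d/\Delta_{\min}^2$ and $4d(N-1)/\Delta_{\min}$ by $4/\Delta_{\min}^2$ and $4(N-1)/\Delta_{\min}$, improving the constants in Theorems~\ref{th:AdaLinUCB_problem_dependent} and~\ref{th:continuous_AdaLinUCB_problem_dependent}. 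Your closing remark that the lemma requires no bound on $\lVert x_{(n)}\rVert_2$ nor assumption iv.~holds equally for the paper's proof, so neither approach has an edge there.
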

\begin{proof}
We note that,
\begin{align*}
d & =\mathrm{trace}(I_d) 
= \mathrm{trace}\left(
A_{t-1}^{-1} A_{t-1}
\right) \\
& = \mathrm{trace}\left(
A_{t-1}^{-1}
\left[ \sum_{i=1}^{N}
m_{t-1,(i)} \cdot x_{(i)}x_{(i)}^\top +I_d
\right]
\right) \\
& = \mathrm{trace}\left(
 \sum_{i=1}^{N}
m_{t-1,(i)} \cdot A_{t-1}^{-1} x_{(i)}x_{(i)}^\top + A_{t-1}^{-1}
\right) \\
& = 
 \sum_{i=1}^{N}
m_{t-1,(i)} \cdot
\mathrm{trace}\left( A_{t-1}^{-1} x_{(i)}x_{(i)}^\top \right) +
\mathrm{trace}\left(
A_{t-1}^{-1}
\right) \\
& = 
 \sum_{i=1}^{N}
m_{t-1,(i)} \cdot
\mathrm{trace}\left(x_{(i)}^\top A_{t-1}^{-1} x_{(i)} \right) +
\mathrm{trace}\left(
A_{t-1}^{-1}
\right) \\
& = 
 \sum_{i=1}^{N}
m_{t-1,(i)} \cdot
x_{(i)}^\top A_{t-1}^{-1} x_{(i)} +
\mathrm{trace}\left(
A_{t-1}^{-1}
\right) \\
& \geq
m_{t-1,(n)} \cdot
x_{(n)}^\top A_{t-1}^{-1} x_{(n)}, \quad \forall n=1,2,\cdots,N,
\end{align*}
where the last inequality holds by noting that $A_{t-1}^{-1}$ is a positive-definite matrix.
Then, the results follow.
\end{proof}

In the following, we let,
\begin{align}\label{eq:AdaProof_alphaT}
\alpha_T = C_\text{noise} \sqrt{
	d \log \left(
	\frac{2 + 2 T C_\text{context}^2 }{\tilde{ \delta}}
	\right)
} + C_\text{theta}.
\end{align}
Thus, by \eqref{eq:AdaProof_t2}, with probability at least $1-\frac{\tilde{\delta}}{2}$, for all $t\leq T$, the following inequality holds.
\begin{align}\label{eq:AdaProof_t3}
\lVert  \theta_t - \theta_\star  \rVert_{A_t}\leq \alpha_T.
\end{align}
Let the $\alpha$ in AdaLinUCB algorithm be $\alpha =\alpha_T$.

\begin{lemma}\label{th:lemma_optSelect_condition}
When inequality \eqref{eq:AdaProof_t3} holds, for any slot $t$ with variation factor $L_t = 1-\epsilon_1$,
if,
\begin{align}\label{eq:AdaProof_t4}
\langle x_\star, \theta_\star \rangle -
\langle
x_{(n)}, \theta_\star
\rangle
>
\alpha_T \lVert x_\star \rVert_{A_{t-1}^{-1}}
\!+\alpha_T \lVert x_{(n)} \rVert_{A_{t-1}^{-1}}, ~~\forall n,
\end{align}
then the arm with the optimal context is pulled in slot $t$, i.e., $R_t=0$.
\end{lemma}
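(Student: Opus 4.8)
The plan is to show that condition \eqref{eq:AdaProof_t4} guarantees that the AdaLinUCB index $p_{t,\star}$ of the optimal context strictly dominates the index $p_{t,(n)}$ of every suboptimal context, so that the selection rule in step~\ref{line:AdaLinUCB_selection} must pick the optimal arm. Recall that in a high-variation slot with $L_t = 1-\epsilon_1$ we have $\tilde{L}_t = 1$, so the confidence width in the index $p_{t,a} = \theta_{t-1}^\top x_{t,a} + \alpha\sqrt{(1-\tilde{L}_t)\,x_{t,a}^\top A_{t-1}^{-1} x_{t,a}}$ collapses, giving $p_{t,a} = \langle \theta_{t-1}, x_{t,a}\rangle$ for every arm. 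Thus in this regime the algorithm purely exploits, and the claim reduces to showing that the estimated reward of the optimal context exceeds that of any suboptimal context under the stated gap condition.

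First I would write, for the optimal context $x_\star$ and an arbitrary suboptimal context $x_{(n)}$, the two-sided estimation bounds coming from Lemma~\ref{th:lemma_inequality_reward_estimate} combined with \eqref{eq:AdaProof_t3}: namely $\langle \theta_{t-1}, x_\star\rangle \geq \langle \theta_\star, x_\star\rangle - \alpha_T \lVert x_\star\rVert_{A_{t-1}^{-1}}$ and $\langle \theta_{t-1}, x_{(n)}\rangle \leq \langle \theta_\star, x_{(n)}\rangle + \alpha_T \lVert x_{(n)}\rVert_{A_{t-1}^{-1}}$. Subtracting these two inequalities gives a lower bound on the index gap,
\begin{align*}
p_{t,\star} - p_{t,(n)}
&= \langle \theta_{t-1}, x_\star\rangle - \langle \theta_{t-1}, x_{(n)}\rangle\\
&\geq \big(\langle \theta_\star, x_\star\rangle - \langle \theta_\star, x_{(n)}\rangle\big)
- \alpha_T \lVert x_\star\rVert_{A_{t-1}^{-1}} - \alpha_T \lVert x_{(n)}\rVert_{A_{t-1}^{-1}}.
\end{align*}
The right-hand side is strictly positive precisely by hypothesis \eqref{eq:AdaProof_t4}, so $p_{t,\star} > p_{t,(n)}$ for every $n$ corresponding to a suboptimal context.

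To finish, I would invoke the single optimal context condition from Sec.~\ref{se:problem_dependent_setting}, which guarantees that $x_\star$ is the context of the optimal arm $a_t^\star$ and that every arm whose context differs from $x_\star$ is suboptimal; hence the strict index domination over all suboptimal contexts means that the arm selected by step~\ref{line:AdaLinUCB_selection} must have context $x_\star$, i.e.\ it is an optimal arm, giving $R_t = 0$. The main obstacle I anticipate is bookkeeping rather than conceptual: one must be careful that the two applications of Lemma~\ref{th:lemma_inequality_reward_estimate} use the same event \eqref{eq:AdaProof_t3}, that the index width genuinely vanishes under $\tilde{L}_t=1$ (so the bound is on the point estimate $\langle\theta_{t-1},\cdot\rangle$ and not on $p_{t,a}$ with a residual confidence term), and that ties are irrelevant because the domination in \eqref{eq:AdaProof_t4} is strict. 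This lemma will then feed the high-variation regret analysis, where a counting argument on $m_{t,(n)}$ via Lemma~\ref{th:lemma_UCBwidth} converts the confidence widths into the $\sqrt{d/m_{t-1,(n)}}$ terms and bounds how many high-variation slots can still incur regret.
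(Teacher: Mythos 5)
Your proposal is correct and follows essentially the same route as the paper's own proof: collapse the index to the pure estimate $p_{t,a}=\langle\theta_{t-1},x_{t,a}\rangle$ since $\tilde{L}_t=1$ in high-variation slots, apply Lemma~\ref{th:lemma_inequality_reward_estimate} together with \eqref{eq:AdaProof_t3} to both $x_\star$ and $x_{(n)}$, and conclude strict index domination from the gap condition \eqref{eq:AdaProof_t4}. In fact your write-up is slightly cleaner than the paper's, which contains a sign typo in the upper bound for $\langle x_{(n)},\theta_{t-1}\rangle$ (it writes $-\alpha_T\lVert x_{(n)}\rVert_{A_{t-1}^{-1}}$ where it should be $+\alpha_T\lVert x_{(n)}\rVert_{A_{t-1}^{-1}}$, as you have it), though the paper's final combination uses the correct signs.
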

\begin{proof}
For the binary-valued variation factor, let the higher threshold of variation factor  $l^{(+)}=1-\epsilon_1$. Thus, when the variation factor is high, i.e., $L_t = 1-\epsilon_1$, the truncated variation factor becomes $\tilde{L}_t =1$. As a result, the index in step~\ref{line:AdaLinUCB_index} of Algo.~\ref{alg:AdaLinUCB} becomes,
\begin{align*}
p_{t,a}  = \theta_{t-1}^\top x_{t,a} = \langle  x_{t,a},\theta_{t-1} \rangle.
\end{align*}
As a result, to prove that the arm with optimal context value is selected, it is sufficient to prove that,
\begin{align*}
\langle x_\star, \theta_{t-1}
\rangle
-
\langle x_{(n)}, \theta_{t-1}
\rangle>0,~~\forall n=2,\cdots, N.
\end{align*}

When inequality \eqref{eq:AdaProof_t3} holds, by Lemma~\ref{th:lemma_inequality_reward_estimate} , we have that,
\begin{align*}
\langle x_\star, \theta_{t-1}
\rangle \geq 
\langle x_\star, \theta_\star
\rangle
- \alpha_T \lVert x_\star \rVert_{A_{t-1}^{-1}},
\end{align*}
and,
\begin{align*}
\langle x_{(n)}, \theta_{t-1}
\rangle \leq
\langle x_{(n)}, \theta_\star
\rangle
- \alpha_T \lVert x_{(n)} \rVert_{A_{t-1}^{-1}}.
\end{align*}
As a result, for any $n=2,3,\cdots, N$,
\begin{align*}
& \langle x_\star, \theta_{t-1}
\rangle
-
\langle x_{(n)}, \theta_{t-1}
\rangle \\
\geq &
\langle x_\star, \theta_\star
\rangle
-
\langle x_{(n)}, \theta_\star
\rangle
-\alpha_T \lVert x_\star \rVert_{A_{t-1}^{-1}}
-\alpha_T \lVert x_{(n)}  \rVert_{A_{t-1}^{-1}} \\>&0,
\end{align*}
where the last inequality holds by the condition \eqref{eq:AdaProof_t4}  of this Lemma, which completes the proof.
\end{proof} 

By Lemma~\ref{th:lemma_optSelect_condition}, when inequality \eqref{eq:AdaProof_t3} holds, for any slot $t$ with variation factor $L_t = 1-\epsilon_1$, if both of the following inequalities holds,
\begin{align}\label{eq:AdaProof_t5}
\alpha_T \lVert x_\star \rVert_{A_{t-1}^{-1}} \leq  \frac{\Delta_{\min}}{2},
\end{align}
\begin{align}\label{eq:AdaProof_t6}
\alpha_T \lVert x_{(n)} \rVert_{A_{t-1}^{-1}} 
<
\frac{\langle x_\star, \theta_\star
\rangle
-
\langle x_{(n)}, \theta_\star
\rangle}{2},
~~ \forall n=2,\cdots, N,
\end{align}
then the arm with the optimal context is selected with probability at least $1-\tilde{\delta}$.

Now, we analyze when \eqref{eq:AdaProof_t6} holds.
For any suboptimal context value $x_{(n)}$ with $ n\neq 1$, by Lemma~\ref{th:lemma_UCBwidth}, \eqref{eq:AdaProof_t6} holds when,
\begin{align*}
m_{t-1, (n)} > \frac{4 d \alpha_T^2}{
\left[\langle x_\star, \theta_\star
\rangle
-
\langle x_{(n)}, \theta_\star
\rangle \right]^2
}.
\end{align*}
As a result, before \eqref{eq:AdaProof_t6} is satisfied, pulling the arms with the suboptimal context values increases $\mathbf{R}_\text{total}^\text{(high)}(T) $ by at most, 
\begin{align}\label{eq:AdaProof_r1}
\sum_{n=2}^{N}
\frac{4 d \alpha_T^2}{
\langle x_\star, \theta_\star
\rangle
-
\langle x_{(n)}, \theta_\star
\rangle 
}
\leq 
(N-1)
\frac{4 d }{ \Delta_{\min}} \alpha_T^2.
\end{align}
Note that the r.h.s. of \eqref{eq:AdaProof_r1} is the first term of \eqref{eq:AdaProof_high level total} by recalling $\alpha_T$ definition in \eqref{eq:AdaProof_alphaT}.

Now, we focus on analyzing when \eqref{eq:AdaProof_t5} holds.
For optimal context value $x_{(1)}=x_\star$,
by Lemma~\ref{th:lemma_UCBwidth}, \eqref{eq:AdaProof_t5} holds when,
\begin{align}\label{eq:AdaProof_t7}
m_{t-1, \star} > \frac{4 d \alpha_T^2}{\Delta_{\min}^2
}.
\end{align}
To analyze when \eqref{eq:AdaProof_t7} holds,
we can take advantage of \eqref{eq:AdaProof_low level total}, and note that, 
\begin{align*}
m_{t,\text{subopt}}^\text{(low)}    
\leq 
\frac{\mathbf{R}_\text{total}^\text{(low)}(t)}{\Delta_{\min}} .
\end{align*}
Thus, by \eqref{eq:AdaProof_low level total}, with probability at least $1-\frac{\tilde{\delta}}{2}$, for all $t$,
\begin{align}\label{eq:AdaProof_t9}
\nonumber
& m_{t,\text{subopt}}^\text{(low)}    
 \leq   
\frac{16 C_\text{noise}^2 C_\text{theta}^2 }{\Delta_{\min}^2}
\Bigg[\log(C_\text{context} t) \\ \nonumber
& + 2 (d-1) \log \left(
d\log\frac{d+ t C_\text{context}^2}{d} + 2 \log\frac{2}{\tilde{\delta}}
\right) \\
& + (d-1) \log\frac{64 C_\text{noise}^2 C_\text{theta}^2 C_\text{context} }{\Delta_{\min}^2} + 2 \log\frac{2}{\tilde{\delta}}
\Bigg]^2, 
\end{align}
where the  probability is introduced by \eqref{eq:AdaProof_t2} when proving \eqref{eq:AdaProof_low level total}. 

Further, for the binary-valued variation factor, by Hoeffding's inequality, we have that, with probability at least $1-\frac{\tilde{\delta}}{2}$,
\begin{align*}
m_{t,\text{all}}^\text{(low)}
\geq \rho t - \sqrt{
\frac{t}{2} \log\frac{2}{\tilde{\delta}},
}
\end{align*}
which also holds when $t=\alpha_T^2 C_\text{slots}$.
Thus, with probability at least $1-\frac{\tilde{\delta}}{2}$,
\begin{align}\label{eq:AdaProof_t8}
m_{\alpha_T^2 C_\text{slots},\text{all}}^\text{(low)}
\geq \rho \cdot \alpha_T^2 C_\text{slots} - \sqrt{
\frac{\alpha_T^2 C_\text{slots}}{2} \log\frac{2}{\tilde{\delta}}.
}
\end{align}

Then, by combining \eqref{eq:AdaProof_t9} and \eqref{eq:AdaProof_t8}, and by recalling $C_\text{slots}$ definition in \eqref{eq:Cslots_definition}, 
with probability at least $1-\tilde{\delta}$,
\begin{align*}
m_{\alpha_T^2 C_\text{slots}, \star} & \geq m_{\alpha_T^2 C_\text{slots}, \star}^\text{(low)} \\
& = m_{\alpha_T^2 C_\text{slots}, \text{all}}^\text{(low)}
- m_{\alpha_T^2 C_\text{slots}, \text{subopt}}^\text{(low)} \\
& \geq 
\frac{4 d \alpha_T^2}{\Delta_{\min}^2
}.
\end{align*}
Thus, with probability at least $1-\tilde{\delta}$, for $\forall t \geq \alpha_T^2 C_\text{slots}$, the inequality  \eqref{eq:AdaProof_t5} holds.
As a result, with probability at least $1-\tilde{\delta}$,
before \eqref{eq:AdaProof_t5} is satisfied, pulling the arms with the suboptimal context values increases $\mathbf{R}_\text{total}^\text{(high)}(T) $ by at most 
$\alpha_T^2 C_\text{slots} \Delta_{\max} $.
By combining \eqref{eq:AdaProof_r1}, we have that, with probability at least $1-\tilde{\delta}$, the inequality \eqref{eq:AdaProof_high level total} for $\mathbf{R}_\text{total}^\text{(high)}(T)$ holds.

\subsection{Combine Results and Finish Proof}
Further by noting that probabilities introduced in this proof procedure only comes from two events: i) confidence set for $\theta_t$ in \eqref{eq:AdaProof_t2}; ii) lower bound for number of total slots with low variation factor as in \eqref{eq:AdaProof_t8}.
Note that each of these two events with probability at least $1-\frac{\tilde{\delta}}{2}$ and that they are independent.
Thus, with probability at least $1-\tilde{\delta}$, both inequalities \eqref{eq:AdaProof_low level total} and \eqref{eq:AdaProof_high level total} hold, which completes the proof.

\section{Performance Analysis of LinUCB}\label{se:Appendix_LinUCB}

\subsection{LinUCB Algorithm Notation}
In opportunistic contextual bandit problem, one way to select bandits is to ignore the variation factor, i.e., $L_t$, and just employ the LinUCB algorithm, as shown in Algorithm \ref{alg:LinUCBExtracted}.
This algorithm is denoted as LinUCBExtracted in numerical restuls.

\begin{algorithm}[tb]
	\caption{LinUCB(Extracted)}
	\label{alg:LinUCBExtracted} 
	\begin{algorithmic}[1]
		\STATE{Inputs:} {$\alpha \in \mathbb{R}_+$, $d \in \mathbb{N}$.}
		\STATE{$A \leftarrow \bm{I}_{d}$ \{The $d$-by-$d$ identity matrix\}
		}
		\STATE{$b \leftarrow \bm{0}_d$}
		\FOR{$t = 1, 2, 3,\cdots, T$}
		\STATE{ $\theta_{t\!-\!1} = A^{-1} b$}\label{line:LinUCBExtracted_theta}
		\STATE{Observe possible arm set $\mathcal{D}_t$, and observe associated context vectors $x_{t,a}, \forall a \in \mathcal{D}_t$.
		}
		\FOR{$a \in \mathcal{D}_t$}
		\STATE{ $p_{t,a} \leftarrow \theta_{t\!-\!1}^\top x_{t,a} + \alpha \sqrt{x_{t,a}^\top A^{-1} x_{t,a} }$  \{Computes upper confidence bound\}
		}\label{line:LinUCBExtracted_UCB_index}
		\ENDFOR
		\STATE{Choose action $a_t = \arg\max_{a\in \mathcal{D}_t} p_{t,a}$ with ties broken arbitrarily.}\label{line:LinUCBExtracted_at_choice}
		\STATE{Observe nominal reward $r_{t,a_t}$}
		\STATE{$A \leftarrow A + x_{t,a_t} x_{t,a_t}^\top $}\label{line:LinUCBExtracted_A_update}
		\STATE{ $b \leftarrow b + x_{t, a_t} r_{t,a_t} $
		}\label{line:LinUCBExtracted_b_update}
		\ENDFOR
	\end{algorithmic}
\end{algorithm} 

In Algo.~\ref{alg:LinUCBExtracted}, for each time slot, the algorithm updates an matrix $A$ and a vector $b$, so that to estimate the unknown parameter for the linear function of context vector. To make the notation clear, denote $A_t=I_d + \sum_{\tau=1}^{t}x_{\tau, a_\tau} x_{\tau, a_\tau}^\top$, which is the matrix $A$ updated in step \ref{line:LinUCBExtracted_A_update} for each time slot.
It directly follows that $A_t, \forall t\geq 0$ is a positive-definite matrix.
Denote $b_t = \sum_{\tau=1}^{t} 
x_{\tau, a_\tau} r_{\tau,a_\tau}$, which is the vector $b$ updated in step \ref{line:LinUCBExtracted_b_update} for each time slot $t$.

As a result, the estimation of the unknown parameter $\theta_\star$ is denoted by $\theta_t$, as shown in step \ref{line:LinUCBExtracted_theta},
which satisfies,
\begin{align}\label{eq:theta_ridge_1}
\theta_t & = A_t^{-1} b_t \\ \nonumber
& =
\left(
I_d + \sum_{\tau=1}^{t}x_{\tau, a_\tau} x_{\tau, a_\tau}^\top
\right)^{-1}
\sum_{\tau=1}^{t} 
x_{\tau, a_\tau} r_{\tau,a_\tau}.
\end{align}
Note that $\theta_t$ is the result of a ridge regression. That is, $\theta_t$ is the coefficient that minimize a penalized residual sum of squares, i.e.,
\begin{align}\label{eq:theta_ridge_2}
\theta_t = \arg\min_\theta \left\{
\sum_{\tau=1}^{t} \Big(
r_{\tau,a_\tau} - \langle \theta, x_{\tau,a_\tau} \rangle
\Big)^2
+ \lVert \theta \rVert_2^2
\right\}
\end{align}
Here, the complexity parameter that controls the amount of shrinkage is chosen as $1$.

Also, we note that the upper confidence index $p_{t,a}$, as shown in step \ref{line:LinUCBExtracted_UCB_index} of Algo.~\ref{alg:LinUCBExtracted} consists of two parts.
The first part $\theta_{t-1}^\top x_{t,a} = \langle
\theta_{t-1}, x_{t,a}
 \rangle$ is the estimation of the corresponding reward, using the up-to-date estimation of the unknown parameter, i.e., $\theta_{t-1}$. 
The second part, i.e., $\alpha \sqrt{x_{t,a}^\top A_{t-1}^{-1} x_{t,a} } = \alpha \lVert x_{t,a} \rVert_{A_{t-1}^{-1}}  $, is related to the uncertainty of reward estimation.

In the following, to analyze the performance of LinUCB algorithm, we assume the same assumptions as in Sec.~\ref{se:Performance Analysis}.


\subsection{General Performance Bound}

Now, we analyze the performance of LinUCB algorithm. We note that the initial analysis effort of LinUCB\cite{Chu2011_LinUCB_analysis} presents analysis result for a modified version of LinUCB to satisfied the independent requirement by applying Azuma/Hoeffding inequality \cite{Chu2011_LinUCB_analysis}. 
As a result, we firstly provide the general performance analysis of LinUCB.
We have used analysis technique as in \cite{Abbasi2011}.
(Note that \cite{Abbasi2011} provides analysis for another algorithm instead of LinUCB, but its technique is helpful.)

Firstly, we note that since $A_t, b_t, \theta_t$ has the same definition as that in AdaLinUCB Algorithm, the previous Lemma~\ref{th:lemma_confidence_set}, Lemma~\ref{th:lemma_inequality_reward_estimate}, and Lemma~\ref{th:lemma_summation_xt} also hold here for LinUCB algorithm. Then, we have the following results.

\begin{theorem}\label{th:regret_LinUCB}
(The general regret bound of LinUCB).
For the LinUCB algorithm in Algo.~\ref{alg:LinUCBExtracted}, consider traditional contextual bandits with linear payoffs, the following results hold.
\begin{enumerate}[1)]
\item 	$\forall t \geq 1$, if $\alpha \geq \lVert
\theta_{t-1} - \theta_\star
 \rVert_{A_{t-1}}$, then the one-step regret (regarding nominal reward) satisfies, 
 \begin{align*}
 R_t \leq 2 \alpha \lVert 
 x_{t,a_t} \rVert_{A_{t-1}^{-1}}.
\end{align*}
\item $\forall \delta \in (0,1)$, with probability at least $1-\delta$, the accumulated $T$-slot regret (regarding nominal reward) satisfies,
\begin{align}\label{eq:LinUCB_Raccumulate_nominal}
\nonumber
& \mathbf{R}_\mathrm{total} (T)\! \leq\! \!\sqrt{8 T }
\Bigg[\! C_\mathrm{noise} \sqrt{
	d \log \!\left(
	\frac{1 \!\!+\!\!T C_\mathrm{context}^2 }{\delta}
	\right)
} +\! C_\mathrm{theta} \Bigg] \\
& ~~ \cdot \sqrt{d \log\left[ \frac{
		\mathrm{trace} (I_d) + T C_\mathrm{context}^2
	}{d} \right]
	-\log\det(I_d)}. 
\end{align}
\end{enumerate}
\end{theorem}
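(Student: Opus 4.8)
The plan is to prove the two statements in sequence, treating part~1) as the per-slot building block for part~2).

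For part~1), I would reproduce exactly the chain of inequalities already established for the low-variation-factor slots in the AdaLinUCB analysis, observing that the LinUCB index $p_{t,a}$ in step~\ref{line:LinUCBExtracted_UCB_index} coincides with the AdaLinUCB index when $\tilde{L}_t=0$. Starting from $R_t=\langle x_{t,a_t^\star},\theta_\star\rangle-\langle x_{t,a_t},\theta_\star\rangle$, I would use Lemma~\ref{th:lemma_inequality_reward_estimate} together with the hypothesis $\alpha\geq\lVert\theta_{t-1}-\theta_\star\rVert_{A_{t-1}}$ to upper bound $\langle x_{t,a_t^\star},\theta_\star\rangle$ by the index $p_{t,a_t^\star}$ (this is the ``optimism'' step). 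The greedy selection rule in step~\ref{line:LinUCBExtracted_at_choice} gives $p_{t,a_t^\star}\leq p_{t,a_t}$, and a second application of Lemma~\ref{th:lemma_inequality_reward_estimate} bounds $\langle x_{t,a_t},\theta_{t-1}\rangle-\langle x_{t,a_t},\theta_\star\rangle$ by $\alpha\lVert x_{t,a_t}\rVert_{A_{t-1}^{-1}}$. Collecting the two terms yields $R_t\leq 2\alpha\lVert x_{t,a_t}\rVert_{A_{t-1}^{-1}}$.

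For part~2), I would first invoke the second statement of Lemma~\ref{th:lemma_confidence_set} with the given $\delta$ to obtain a single high-probability event (holding for all $t$ simultaneously) on which $\lVert\theta_{t-1}-\theta_\star\rVert_{A_{t-1}}\leq\alpha_T$, where $\alpha_T:=C_\text{theta}+C_\text{noise}\sqrt{d\log((1+TC_\text{context}^2)/\delta)}$. On this event the hypothesis of part~1) is satisfied for every $t\leq T$ with $\alpha=\alpha_T$, so summing $R_t\leq 2\alpha_T\lVert x_{t,a_t}\rVert_{A_{t-1}^{-1}}$ over $t$ gives $\mathbf{R}_\text{total}(T)\leq 2\alpha_T\sum_{t=1}^T\lVert x_{t,a_t}\rVert_{A_{t-1}^{-1}}$. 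I would then apply the Cauchy--Schwarz inequality to pass from the $\ell_1$ sum to an $\ell_2$ sum, picking up a factor $\sqrt{T}$, and bound the resulting sum of squares by Lemma~\ref{th:lemma_summation_xt}. Substituting $\alpha_T$ and collecting the numerical constants (noting $2\sqrt{2}=\sqrt{8}$, with the factor $2$ inside Lemma~\ref{th:lemma_summation_xt} absorbed into the $\sqrt{2}$) produces exactly \eqref{eq:LinUCB_Raccumulate_nominal}.

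Since every ingredient is supplied by the three preparatory lemmas, the argument is largely bookkeeping. The one point requiring genuine care is the probabilistic accounting: I must ensure the confidence-set event is invoked only once, so that the $1-\delta$ failure probability is not inflated across the $T$ slots --- this is precisely why the uniform-in-$t$ form of Lemma~\ref{th:lemma_confidence_set} is essential rather than a per-slot bound. A secondary subtlety is the use of a single constant $\alpha_T$ valid for all slots: because $\log((1+tC_\text{context}^2)/\delta)$ is increasing in $t$, the time-$T$ value majorizes all the per-slot bounds, so replacing each $\lVert\theta_{t-1}-\theta_\star\rVert_{A_{t-1}}$ by $\alpha_T$ is legitimate and loses nothing at leading order.
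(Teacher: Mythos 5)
Your proposal is correct and follows essentially the same route as the paper's proof: part~1) is the identical optimism--selection--estimation-error chain using Lemma~\ref{th:lemma_inequality_reward_estimate} and step~\ref{line:LinUCBExtracted_at_choice}, and part~2) invokes the uniform-in-$t$ confidence set of Lemma~\ref{th:lemma_confidence_set} once with the same $\alpha_T$, then combines the per-slot bound, Cauchy--Schwarz/Jensen, and Lemma~\ref{th:lemma_summation_xt} to arrive at \eqref{eq:LinUCB_Raccumulate_nominal}. The only difference is a trivial reordering (you plug in the per-slot bound before applying Cauchy--Schwarz, the paper applies Jensen to $\sum_t R_t$ first), which yields the same expression.
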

\begin{proof}
We begin by analyzing the one-step regret (regarding nominal reward) of LinUCB algorithm in Algo.~\ref{alg:LinUCBExtracted}. For $\forall t \geq 1$, with $\alpha \geq \lVert
\theta_{t-1} - \theta_\star
\rVert_{A_{t-1}}$, we have,
\begin{align*}
R_t & = \langle x_{t,a_t^\star}, \theta_\star \rangle - \langle x_{t,a_t}, \theta_\star \rangle \\
& \leq 
\langle x_{t,a_t^\star}, \theta_{t-1} \rangle 
+ \alpha \lVert x_{t,a_t^\star} \rVert_{A_{t-1}^{-1}}
- \langle x_{t,a_t}, \theta_\star \rangle \\
& \leq 
\langle x_{t,a_t}, \theta_{t-1} \rangle 
+ \alpha \lVert x_{t,a_t} \rVert_{A_{t-1}^{-1}}
- \langle x_{t,a_t}, \theta_\star \rangle
\\
& =
\langle x_{t,a_t}, \theta_{t-1} \rangle 
- \langle x_{t,a_t}, \theta_\star \rangle
+ \alpha \lVert x_{t,a_t} \rVert_{A_{t-1}^{-1}}
\\
& \leq 
\lVert
\theta_{t-1} - \theta_\star
\rVert_{A_{t-1}}  \lVert x_{t,a_t} \rVert_{A_{t-1}^{-1}} + \alpha \lVert x_{t,a_t} \rVert_{A_{t-1}^{-1}} \\
& \leq 2 \alpha \lVert x_{t,a_t} \rVert_{A_{t-1}^{-1}},
\end{align*}
where the inequality in the second line holds by Lemma~\ref{th:lemma_inequality_reward_estimate} and $\alpha \geq \lVert
\theta_{t-1} - \theta_\star
\rVert_{A_{t-1}}$; the inequality in the third line holds by the design of the LinUCB algorithm, specifically, by step~\ref{line:LinUCBExtracted_at_choice} of Algo.~\ref{alg:LinUCBExtracted}; the inequality in the fifth line holds by Lemma~\ref{th:lemma_inequality_reward_estimate}, and the last inequality holds by $\alpha \geq \lVert
\theta_{t-1} - \theta_\star
\rVert_{A_{t-1}}$.
As a result, the first statement is proved.

Now, we analyze the accumulated regret. Let, 
\begin{align*}
\alpha_T = C_\text{noise} \sqrt{
	d \log \left(
	\frac{1 + T C_\text{context}^2 }{\delta}
	\right)
} + C_\text{theta}.
\end{align*}
Then, by Lemma~\ref{th:lemma_confidence_set}, with probability at least $1-\delta$, for $\forall t \in[1,T]$,
$\alpha_T \geq \lVert
\theta_{t-1} - \theta_\star
\rVert_{A_{t-1}}$. As a result, with probability at least $1-\delta$,
\begin{align*}
\mathbf{R}_\text{total} (T)& = \sum_{t=1}^{T} R_t  \leq \sqrt{T \sum_{t=1}^{T} R_t^2} \\
& \leq \sqrt{T \cdot 4 \alpha_T^2 \sum_{t=1}^{T} \lVert x_{t,a_t} \rVert_{A_{t-1}^{-1}}^2} \\
& \leq \sqrt{8 T \alpha_T^2 } \\
& \cdot \sqrt{d \log\left[ \frac{
		\mathrm{trace} (I_d)\! +\! T C_\text{context}^2
	}{d} \right]
	-\log\det(I_d)},
\end{align*}
where the first inequality holds by Jensen's inequality; the second inequality holds by statement 1); the third inequality holds by Lemma~\ref{th:lemma_summation_xt}.
Thus, by substituting the value of $\alpha_T$, the inequality \eqref{eq:LinUCB_Raccumulate_nominal} holds.
\end{proof}

\subsection{Problem-Dependent Bound}
Now, we study the problem-dependent bound of LinUCB, and have the following results.

\begin{theorem}\label{th:LinUCB_problem_dependent_bound_0}
For the LinUCB algorithm in Algo.~\ref{alg:LinUCBExtracted}, consider traditional contextual bandit setting with linear payoffs, the accumulated $T$-slot regret (regarding nominal reward) satisfies,
\begin{align*}
\mathbf{R}_\mathrm{total}(T)
 \leq &  \frac{16 C_\mathrm{noise}^2 
C_\mathrm{theta}^2}{\Delta_{\min}} 
\Bigg\{
\log(C_\mathrm{context} T ) + 2\log \frac{1}{\delta} \\
& + 2(d-1) \log \left[
d \log\frac{d+TC_\mathrm{context}^2}{d}
+2\log\frac{1}{\delta}
\right] \\
&
+ (d-1) \log \frac{64 C_\mathrm{noise}^2 
	C_\mathrm{theta}^2 C_\mathrm{context}}{\Delta_{\min}^2}
\Bigg\}^2 .
\end{align*}
\end{theorem}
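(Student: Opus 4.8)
The plan is to mirror the low-variation-factor portion of the proof of Theorem~\ref{th:AdaLinUCB_problem_dependent} (equations \eqref{eq:AdaProof_t1}--\eqref{eq:AdaProof_low level total}), since LinUCB activates the upper-confidence term in \emph{every} slot and hence behaves exactly like AdaLinUCB restricted to the slots with $\tilde{L}_t=0$; indeed the target bound is the nominal-reward analog of \eqref{eq:AdaProof_low level total} with $\tilde{\delta}/2$ replaced by $\delta$. First I would fix $\alpha=\alpha_T:=C_\mathrm{noise}\sqrt{d\log\frac{1+TC_\mathrm{context}^2}{\delta}}+C_\mathrm{theta}$. By the second statement of Lemma~\ref{th:lemma_confidence_set}, and because that confidence width is increasing in $t$, with probability at least $1-\delta$ we have $\lVert\theta_{t-1}-\theta_\star\rVert_{A_{t-1}}\le\alpha_T$ simultaneously for all $t\le T$; I condition on this event for the rest of the argument.

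On this event, statement~1) of Theorem~\ref{th:regret_LinUCB} (equivalently Lemma~\ref{th:lemma_inequality_reward_estimate} plus the selection rule) gives the per-step bound $R_t\le 2\alpha_T\lVert x_{t,a_t}\rVert_{A_{t-1}^{-1}}$. The next idea is to use the suboptimality gap to replace the usual Cauchy--Schwarz/$\sqrt{T}$ step by a second-order sum: since $R_t$ is either $0$ or at least $\Delta_{\min}$, we have $R_t\le R_t^2/\Delta_{\min}$, and combining this with Lemma~\ref{th:lemma_summation_xt} yields
\begin{align*}
\mathbf{R}_\mathrm{total}(T)=\sum_{t=1}^T R_t\le \frac{1}{\Delta_{\min}}\sum_{t=1}^T R_t^2\le \frac{4\alpha_T^2}{\Delta_{\min}}\sum_{t=1}^T\lVert x_{t,a_t}\rVert_{A_{t-1}^{-1}}^2\le \frac{8\alpha_T^2}{\Delta_{\min}}\log\frac{\det A_T}{\det A_0}.
\end{align*}
Everything up to here is routine given the established lemmas; what remains is to estimate $\log\det A_T$ sharply.

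The crux is to bound $\log\det A_T$ so that the final answer is $O\!\left((\log T)^2\right)$ with the claimed coefficient, rather than the $O\!\left(d^2(\log T)^2\right)$ produced by the naive estimate $\log\det A_T\le d\log\frac{d+TC_\mathrm{context}^2}{d}$. This is exactly where the single-optimal-context assumption enters. I would split off the optimal direction via the matrix-determinant lemma,
\begin{align*}
\log\det A_T=\log\det A_T^{\mathrm{sub}}+\log\!\Big(1+m_{T,\star}\,x_\star^\top (A_T^{\mathrm{sub}})^{-1}x_\star\Big),\qquad A_T^{\mathrm{sub}}:=I_d+\sum_{n\ge2}m_{T,(n)}\,x_{(n)}x_{(n)}^\top,
\end{align*}
so that the $\Theta(T)$ growth is absorbed by the single term $\log(1+m_{T,\star}\,x_\star^\top(A_T^{\mathrm{sub}})^{-1}x_\star)\le\log(1+TC_\mathrm{context}^2)$, which supplies the lone $\log(C_\mathrm{context}T)$ summand. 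The remaining factor $\log\det A_T^{\mathrm{sub}}$ involves only suboptimal pulls, whose counts are controlled by Lemma~\ref{th:lemma_UCBwidth} (valid verbatim for LinUCB since $A_t$ is defined identically): from $\Delta_{\min}\le R_t\le 2\alpha_T\lVert x_{(n)}\rVert_{A_{t-1}^{-1}}\le 2\alpha_T\sqrt{d/m_{t-1,(n)}}$ one gets $m_{T,(n)}\le 4d\alpha_T^2/\Delta_{\min}^2$, so $A_T^{\mathrm{sub}}$ has only logarithmic-in-$T$ eigenvalues and contributes the $(d-1)$-weighted $\log\!\big(d\log\frac{d+TC_\mathrm{context}^2}{d}+2\log\frac1\delta\big)$ and $(d-1)\log\frac{64C_\mathrm{noise}^2C_\mathrm{theta}^2C_\mathrm{context}}{\Delta_{\min}^2}$ terms. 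Substituting these estimates and simplifying, on the probability-$(1-\delta)$ event, gives the stated bound.

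I expect the refined determinant estimate to be the main obstacle: one must verify that a single eigendirection (that of $x_\star$) carries the $\Theta(T)$ mass while the other $d-1$ directions grow only logarithmically, and then do the bookkeeping carefully enough that the $d\log T$ contribution ends up \emph{inside} a logarithm (appearing as $\log\log T$ and constants) instead of multiplying $\log T$. The confidence event, the per-step bound, and the gap/potential manipulation are all immediate from the lemmas already proved.
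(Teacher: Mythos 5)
Your overall skeleton --- confidence event, per-step bound $R_t \le 2\alpha\lVert x_{t,a_t}\rVert_{A_{t-1}^{-1}}$, the gap inequality $R_t \le R_t^2/\Delta_{\min}$, Lemma~\ref{th:lemma_summation_xt}, and a refined bound on $\log\det A_T$ exploiting the single optimal context --- matches the paper's proof, which after the gap inequality simply invokes the proof of Theorem~5 of \cite{Abbasi2011}. The genuine gap is in your very first step: you fix the \emph{worst-case} radius $\alpha_T = C_\mathrm{noise}\sqrt{d\log((1+TC_\mathrm{context}^2)/\delta)} + C_\mathrm{theta}$ from statement~2) of Lemma~\ref{th:lemma_confidence_set}. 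Then $\alpha_T^2 = \Theta(d\log T)$ is a multiplicative prefactor of your entire bound $\frac{8\alpha_T^2}{\Delta_{\min}}\log\frac{\det A_T}{\det A_0}$, so even with the sharpest determinant estimate $\log\det A_T \le \log(1+TC_\mathrm{context}^2) + O(d\log\log T)$, your leading term is of order $\frac{8C_\mathrm{noise}^2\, d}{\Delta_{\min}}(\log T)^2$ rather than the claimed $\frac{16C_\mathrm{noise}^2C_\mathrm{theta}^2}{\Delta_{\min}}(\log T)^2$: the factor $d\log T$ sits \emph{outside} every logarithm, and no later determinant bookkeeping can push it inside, so the obstacle you flag in your closing paragraph is not surmountable along your route. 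To obtain the stated constants one must use the \emph{determinant-based} radius of statement~1) of Lemma~\ref{th:lemma_confidence_set}, namely $\lVert\theta_t - \theta_\star\rVert_{A_t} \le C_\mathrm{theta} + C_\mathrm{noise}\sqrt{\log\det A_t + 2\log(1/\delta)}$, so that the single refined quantity $D := \log\det A_T$ controls \emph{both} the confidence width and the summed squared norms; this gives $\mathbf{R}_\mathrm{total}(T) \le \frac{8}{\Delta_{\min}}\bigl(C_\mathrm{theta} + C_\mathrm{noise}\sqrt{D + 2\log(1/\delta)}\bigr)^2 D$, which collapses to the claimed squared-bracket form with coefficient $\frac{16C_\mathrm{noise}^2C_\mathrm{theta}^2}{\Delta_{\min}}$. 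That is exactly what the cited proof does.

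A secondary mismatch: you control the suboptimal directions by per-context pull counts via Lemma~\ref{th:lemma_UCBwidth}, giving $m_{T,(n)} \le 4d\alpha_T^2/\Delta_{\min}^2$ for each of the $N-1$ suboptimal contexts, hence a factor $(N-1)$ (and $d$, not $d-1$, copies, since your $A_T^{\mathrm{sub}}$ contains $I_d$ and can be full rank) inside the resulting logarithm; the bound of Theorem~\ref{th:LinUCB_problem_dependent_bound_0} has no $N$-dependence at all. The proof of Theorem~5 in \cite{Abbasi2011} instead bounds the \emph{total} number of suboptimal pulls $\tilde{T}$ through the self-bounding relation $\tilde{T}\Delta_{\min}^2 \le \sum_t R_t^2$, combines it with the interlacing estimate $\det A_T \le (1+TC_\mathrm{context}^2)(1+\tilde{T}C_\mathrm{context}^2)^{d-1}$ (the rank-one optimal part absorbs one eigenvalue), and solves the resulting recursion; that is where the $(d-1)$ factors and the constant $64C_\mathrm{noise}^2C_\mathrm{theta}^2C_\mathrm{context}/\Delta_{\min}^2$ come from. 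Your per-context counting is in fact the paper's technique for the \emph{high}-variation slots of AdaLinUCB, where the $4d(N-1)/\Delta_{\min}$ term legitimately appears in Theorem~\ref{th:AdaLinUCB_problem_dependent}, but it does not reproduce the LinUCB bound stated here.
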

\begin{proof}
We note that,
\begin{align*}
\mathbf{R}_\text{total}(T) = \sum_{t=1}^{T} R_t \leq \sum_{t=1}^{T} \frac{R_t^2}{\Delta_{\min}},
\end{align*}
where the inequality holds since either $R_t=0$ or $\Delta_{\min}<=R_t$.
Then, the results follows from same proof as in \cite{Abbasi2011} (see the proof of Theorem 5 in \cite{Abbasi2011}). Note that the proof procedure uses Lemma~\ref{th:lemma_summation_xt} and the single optimal context condition.
\end{proof}

\subsection{Performance for Opportunistic Case - Proof of Theorem~\ref{th:LinUCB_problem_dependent_bound}}
Note that the arm selection strategy in LinUCB in Algo.~\ref{alg:LinUCBExtracted} is independent of the value of $L_t$.
Thus, when $L_t$ is i.i.d. over time, we have that $\tilde{\mathbf{R}}_\text{total} (T) = \bar{L} \mathbf{R}_\text{total} (T)$.
As a result, Theorem~\ref{th:LinUCB_problem_dependent_bound} directly follows from  Theorem~\ref{th:LinUCB_problem_dependent_bound_0}.

\subsection{Another Way to Apply LinUCB in Opportunistic Linar Bandits}\label{se:LinUCBMultiply}
Beside the LinUCBExtracted algorithm in Algo.~\ref{alg:LinUCBExtracted}, we also note that there is another way to directly apply in LinUCB in opportunistic contextual bandit environment.
Recall that the LinUCBExtracted algorithm in  Algo.~\ref{alg:LinUCBExtracted} is based on the linear relationship, $\mathbb{E}[r_{t,a}|x_{t,a}] = \langle x_{t,a}, \theta_\star \rangle$.
We can also apply the LinUCBMultiply algorithm in Algo.~\ref{alg:LinUCBMul}, which is based on the linear relationship,
$\mathbb{E}[L_t \cdot r_{t,a}|x_{t,a}, L_t] = \langle L_t \cdot x_{t,a}, \theta_\star \rangle$, i.e., regarding $L_t \cdot x_{t,a}$ as context vector.

\begin{algorithm}[tb]
	\caption{LinUCB(Multiply)}
	\label{alg:LinUCBMul} 
	\begin{algorithmic}[1]
		\STATE{Inputs:} {$\alpha \in \mathbb{R}_+$, $d \in \mathbb{N}$.}
		\STATE{$A \leftarrow \bm{I}_{d}$ \{The $d$-by-$d$ identity matrix\}
		}
		\STATE{$b \leftarrow \bm{0}_d$}
		\FOR{$t = 1, 2, 3,\cdots, T$}
		\STATE{ $\theta_{t\!-\!1} = A^{-1} b$}
		\STATE{Observe possible arm set $\mathcal{D}_t$, and observe associated context vectors $x_{t,a}, \forall a \in \mathcal{D}_t$.
		}
		\STATE{
		Observe $L_t$, and get $\tilde{x}_{t,a} = L_t \cdot x_{t,a}, \forall a\in \mathcal{D}_t$.
		}
		\FOR{$a \in \mathcal{D}_t$}
		\STATE{ $p_{t,a} \leftarrow \theta_{t\!-\!1}^\top \tilde{x}_{t,a} + \alpha \sqrt{\tilde{x}_{t,a}^\top A^{-1} \tilde{x}_{t,a} }$  \{Computes upper confidence bound\}
		}
		\ENDFOR
		\STATE{Choose action $a_t = \arg\max_{a\in \mathcal{D}_t} p_{t,a}$ with ties broken arbitrarily.}
		\STATE{Observe nominal reward $r_{t,a_t}$ and get actual reward $\tilde{r}_{t,a_t}
		= L_t \cdot r_{t,a_t}
		$.}
		\STATE{$A \leftarrow A + \tilde{x}_{t,a_t} \tilde{x}_{t,a_t}^\top $}
		\STATE{ $b \leftarrow b + \tilde{x}_{t, a_t} \tilde{r}_{t,a_t} $
		}
		\ENDFOR
	\end{algorithmic}
\end{algorithm} 
Thus, we have also implemented LinUCBMultiply in the numerical results. However, from the experiment results, LinUCBExtracted algorithm has a better performance than LinUCBMultiply.

\section{AdaLinUCB for Disjoint Model}\label{se:Appendix_Disjoint}

In above, we focus on the design and analysis of opportunistic contextual bandit for the joint model.
However, it should be noted that, the AdaLinUCB algorithm in Algo.~\ref{alg:AdaLinUCB}
can be modified slightly and then be applied to the disjoint model, which is shown in the Algo.~\ref{alg:disjoint_AdaLinUCB}.

\begin{algorithm}[tb]
	\caption{AdaLinUCB - Disjoint Model}
	\label{alg:disjoint_AdaLinUCB} 
	\begin{algorithmic}[1]
		\STATE{Inputs:} {$\alpha \in \mathbb{R}_+$, $d \in \mathbb{N}$, $l^{(+)}$, $l^{(-)}$.}
		\STATE{$A^{(a)} \leftarrow \bm{I}_{d},~\forall a$
		}
		\STATE{$b^{(a)} \leftarrow \bm{0}_d,~\forall a$}
		\FOR{$t = 1, 2, 3,\cdots, T$}
		\STATE{Observe possible arm set $\mathcal{D}_t$, and observe associated context vectors $x_{t,a}, \forall a \in \mathcal{D}_t$.
		}
		\STATE{Observe $L_t$ and calculate $\tilde{L}_t$ by \eqref{eq:L_tilde}.}
		\FOR{$a \in \mathcal{D}_t$}
		\STATE{ $\theta_{t-1}^{(a)} = [A^{(a)}]^{-1} b^{(a)}$}
		\STATE{ $p_{t,a} \leftarrow [\theta_{t-1}^{(a)}]^\top x_{t,a} + \alpha \sqrt{ (1-\tilde{L}_t)  x_{t,a}^\top [A^{(a)}]^{-1} x_{t,a} }$ 
		}
		\ENDFOR
		\STATE{Choose action $a_t = \arg\max_{a\in\mathcal{D}_t
		} p_{t,a}$ with ties broken arbitrarily.}
		\STATE{Observe nominal reward $r_{t,a_t}$.}
		\STATE{$A^{(a)} \leftarrow A^{(a)} + x_{t,a_t} x_{t,a_t}^\top $}
		\STATE{ $b^{(a)} \leftarrow b^{(a)} + x_{t, a_t} r_{t,a_t} $
		}
		\ENDFOR
	\end{algorithmic}
\end{algorithm} 

Here, we note that the joint model is the model introduced in Sec.~\ref{se:system model}:, which assumes that,
\begin{align*}
\mathbb{E}[r_{t,a}|x_{t,a}] = \langle x_{t,a}, \theta_\star \rangle,
\end{align*}
where $x_{t,a}$ is a context vector and $\theta_\star$ is the unknown coefficient vector.
Another model is the disjoint model, which assumes that,
\begin{align*}
\mathbb{E}[r_{t,a}|x_{t,a}] = \langle x_{t,a}, \theta_\star^{(a)} \rangle,
\end{align*}
where $x_{t,a}$ is a  context vector and $\theta_\star^{(a)}$ is the unknown coefficient vector for arm $a$. This model is called disjoint since the parameters are not shared among different arms.

The joint and disjoint models correspond to different models for linear contextual bandit problems,
as introduced in the seminal paper on LinUCB  \cite{Li2010_LinUCB}.

\section{More Numerical Results} \label{se:ap_simulations}
We have implemented AdaLinUCB (as in Algo.~\ref{alg:AdaLinUCB}), LinUCBExtracted (as in Algo.~\ref{alg:LinUCBExtracted}), and LinUCBMultiply (as in Algo.~\ref{alg:LinUCBMul}).
We have also implemented \textbf{E-AdaLinUCB} algorithm, which is an algorithm that
adjusts the threshold $l^{(+)}$ and  $l^{(-)}$ based on the empirical distribution of $L_t$. 
Specifically, the E-AdaLinUCB algorithm maintains the empirical histogram for the variation factors (or its moving average version for non-stationary cases), and selects $l^{(+)}$ and  $l^{(-)}$ accordingly.
Furthermore, the results for \textbf{KernelUCB} is shown in Appendix~\ref{se:ap_simu_kernel}.

\subsection{Synthetic Scenario with Binary-Valued variation Factor}
\label{se:ap_simu_binary}

Fig.~\ref{fig:app_binary_L} shows the performance of different algorithms with binary-valued variation factor for different value of $\rho$. From the simulation result, the AdaLinUCB algorithm significantly outperforms other algorithms for different values of $\rho$.

\begin{figure*}[thbp]
\begin{center}
\begin{minipage}[t]{\textwidth}
\begin{center}
\subfigure[$\rho = 0.1$]{\includegraphics[angle = 0,height = 0.23\linewidth,width = 0.31\linewidth]{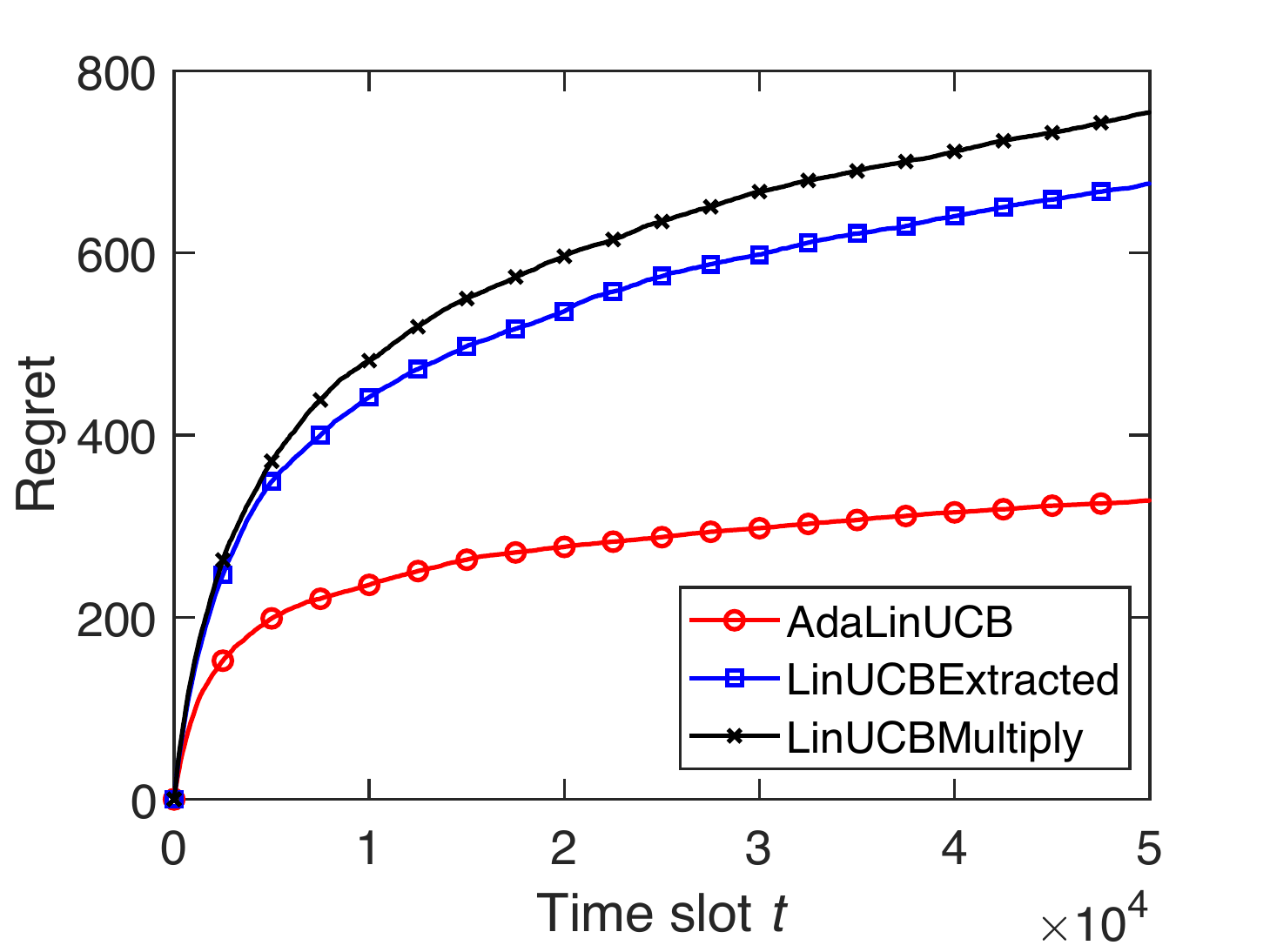}
\label{fig:app_binary_rhoBar_0p9}}
\subfigure[$\rho = 0.5$]{\includegraphics[angle = 0,height = 0.23\linewidth,width = 0.31\linewidth]{binary_rhoBar_0p5}
\label{fig:app_binary_rhoBar_0p5}}
\subfigure[$\rho = 0.9$]{\includegraphics[angle = 0,height = 0.23\linewidth,width = 0.31\linewidth]{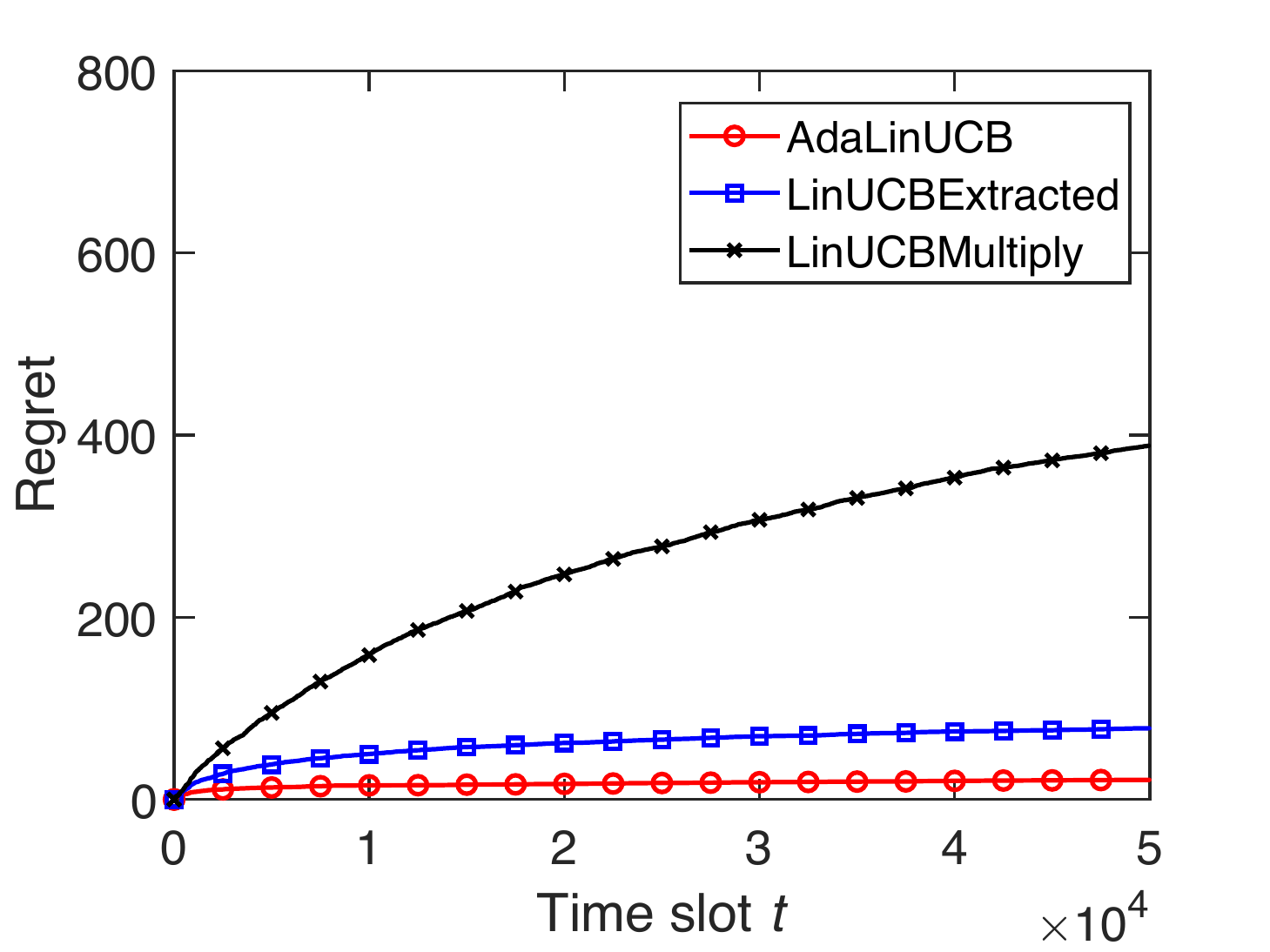}
\label{fig:app_binary_rhoBar_0p1}}
\vspace{-.25cm}
\caption{Regret under binary-valued variation factor.}
\label{fig:app_binary_L}
\end{center}
\end{minipage}
\end{center}
\end{figure*}

\subsection{Synthetic Scenario with Beta Distributed variation Factor}\label{se:ap_simu_beta}


Here, we define $l_\rho^{(-)}$ as the lower threshold such that $\mathbb{P}\{L_t\leq l_\rho^{(-)} =\rho \}$, and $l_\rho^{(+)}$ as the lower threshold such that $\mathbb{P}\{L_t\geq l_\rho^{(+)} =\rho \}$.
The simulation results demonstrate that, with appropriately chosen parameters, the proposed AdaLinUCB algorithm (and its empirical version E-AdaLinUCB) achieves good performance by leveraging the variation factor fluctuation in opportunistic contextual bandits. Furthermore, it turns out that, for a large range of $l^{(+)}$ and $l^{(-)}$ values, AdaLinUCB performs well. Meanwhile, E-AdaLinUCB has a similar performance as that of AdaLinUCB in different scenarios.

In Fig. \ref{fig:beta_load}, we implement both AdaLinUCB with a single threshold $l^{(-)}=l^{(+)}$, and
AdaLinUCB (and E-AdaLinUCB) with two different threshold values.
We find that AdaLinUCB and E-AdaLinUCB perform well for all these appropriate choices of $l^{(-)}$ and $l^{(+)}$. In addition, even in the special case with a single threshold $l^{(-)}=l^{(+)}$, AdaLinUCB has a better performance than other algorithms.

We evaluate the impact of $l^{(-)}$ and $l^{(+)}$ separately with the other one fixed in Fig. \ref{fig:beta_load_-} and Fig. \ref{fig:beta_load_+}, respectively.
Compared them, we can see that the impact of threshold values under continuous variation factor is insignificant (when $l^{(+)}$ and $l^{(-)}$ are changing in wide appropriate ranges),
and the regret of AdaLinUCB is significantly 
lower than that of  LinUCBExtracted and LinUCBMultiple.


\begin{figure*}[thbp]
\begin{center}

\begin{minipage}[t]{\textwidth}
\begin{center}

\subfigure[AdaLinUCB: $l^{(-)}=l_{0.05}^{(-)},l^{(+)}=l_{0.05}^{(+)}$; AdaLinUCB$(l^{(-)}=l^{(+)})$:$l^{(-)}=l^{(+)}=0.45$ ]{\includegraphics[angle = 0,height = 0.23\linewidth,width =0.31\linewidth]{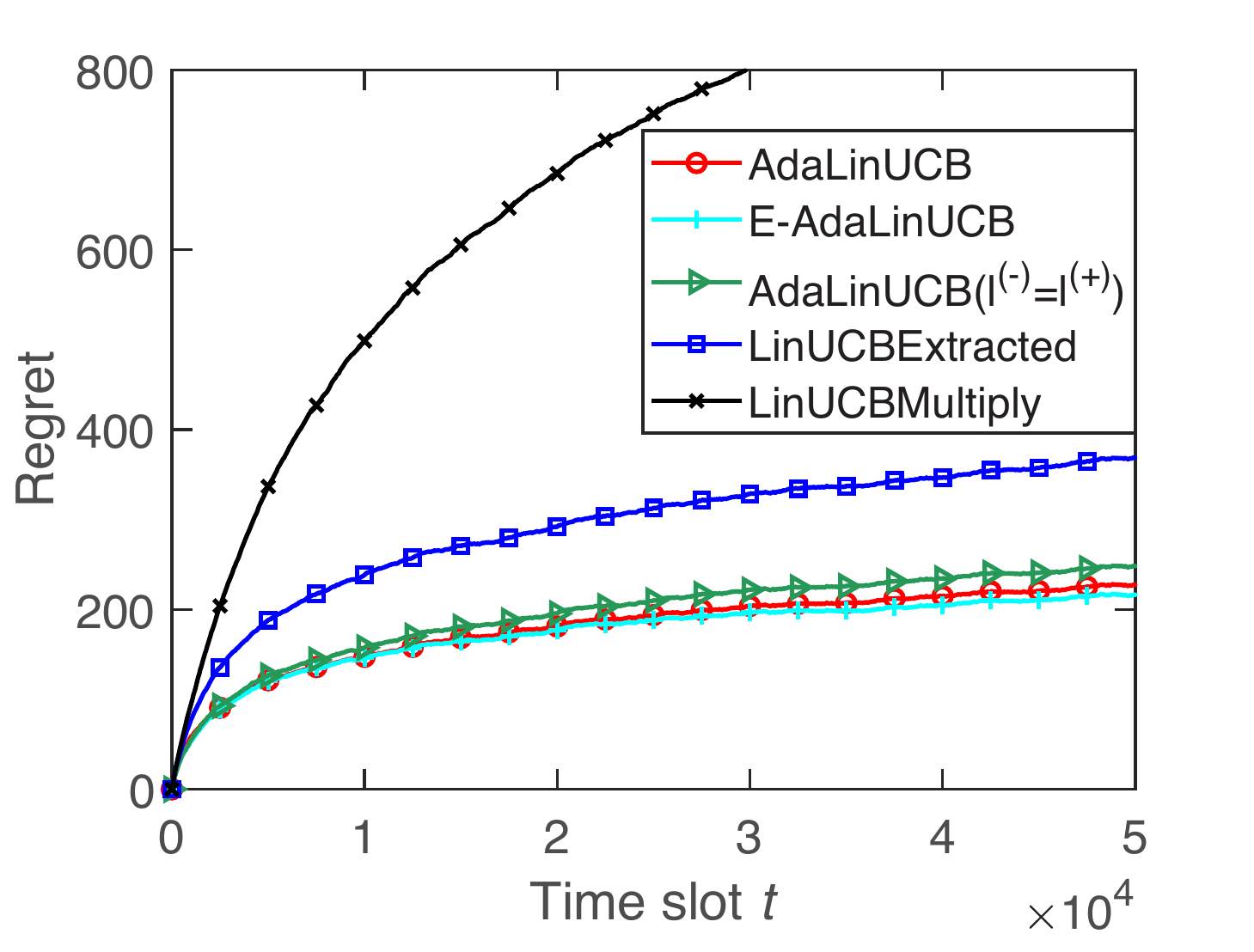}
\label{fig:beta_L05U05J45}}
\quad
\subfigure[AdaLinUCB: $l^{(-)}=l_{0}^{(-)},l^{(+)}=l_{0}^{(+)}$; AdaLinUCB$(l^{(-)}=l^{(+)})$: $l^{(-)}=l^{(+)}=0.5$ ]{\includegraphics[angle = 0,height = 0.23\linewidth,width = 0.31\linewidth]{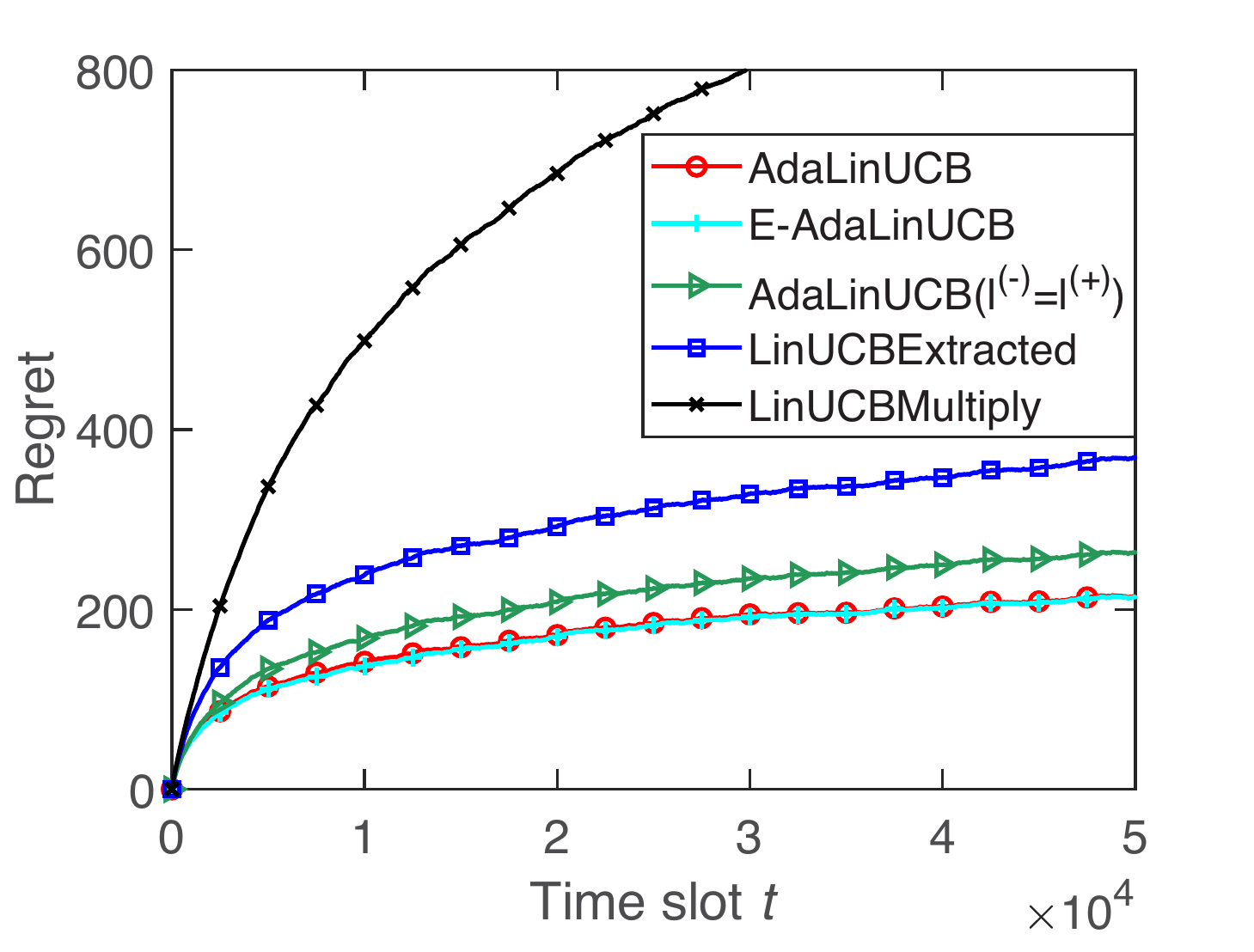}
\label{fig:beta_L00U00J50}}
\quad
\subfigure[AdaLinUCB: $l^{(-)}=l_{0.1}^{(-)},l^{(+)}=l_{0.1}^{(+)}$; AdaLinUCB$(l^{(-)}=l^{(+)})$: $l^{(-)}=l^{(+)}=0.55$]{\includegraphics[angle = 0,height = 0.23\linewidth,width = 0.31\linewidth]{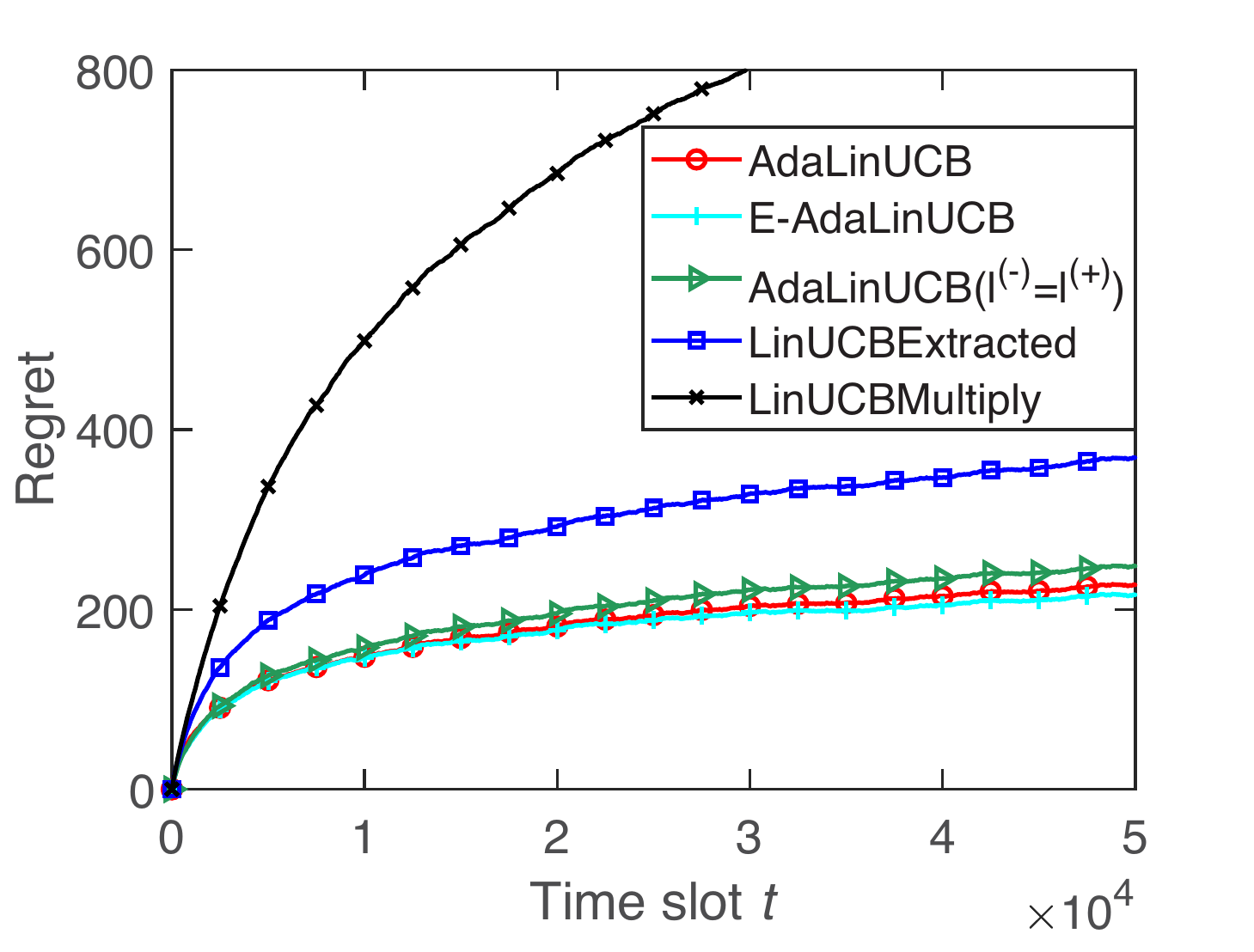}
\label{fig:beta_L10U10J55}}

\caption{Regret under beta distributed variation factor with a single threshold.}
\label{fig:beta_load}
\end{center}
\end{minipage}
\end{center}
\end{figure*}

\begin{figure*}[thbp]
\begin{center}

\begin{minipage}[t]{\textwidth}
\begin{center}
\subfigure[$l^{(-)}=l_{0.05}^{(-)},l^{(+)}=l_0^{(+)}$]{\includegraphics[angle = 0,height = 0.23\linewidth,width = 0.31\linewidth]{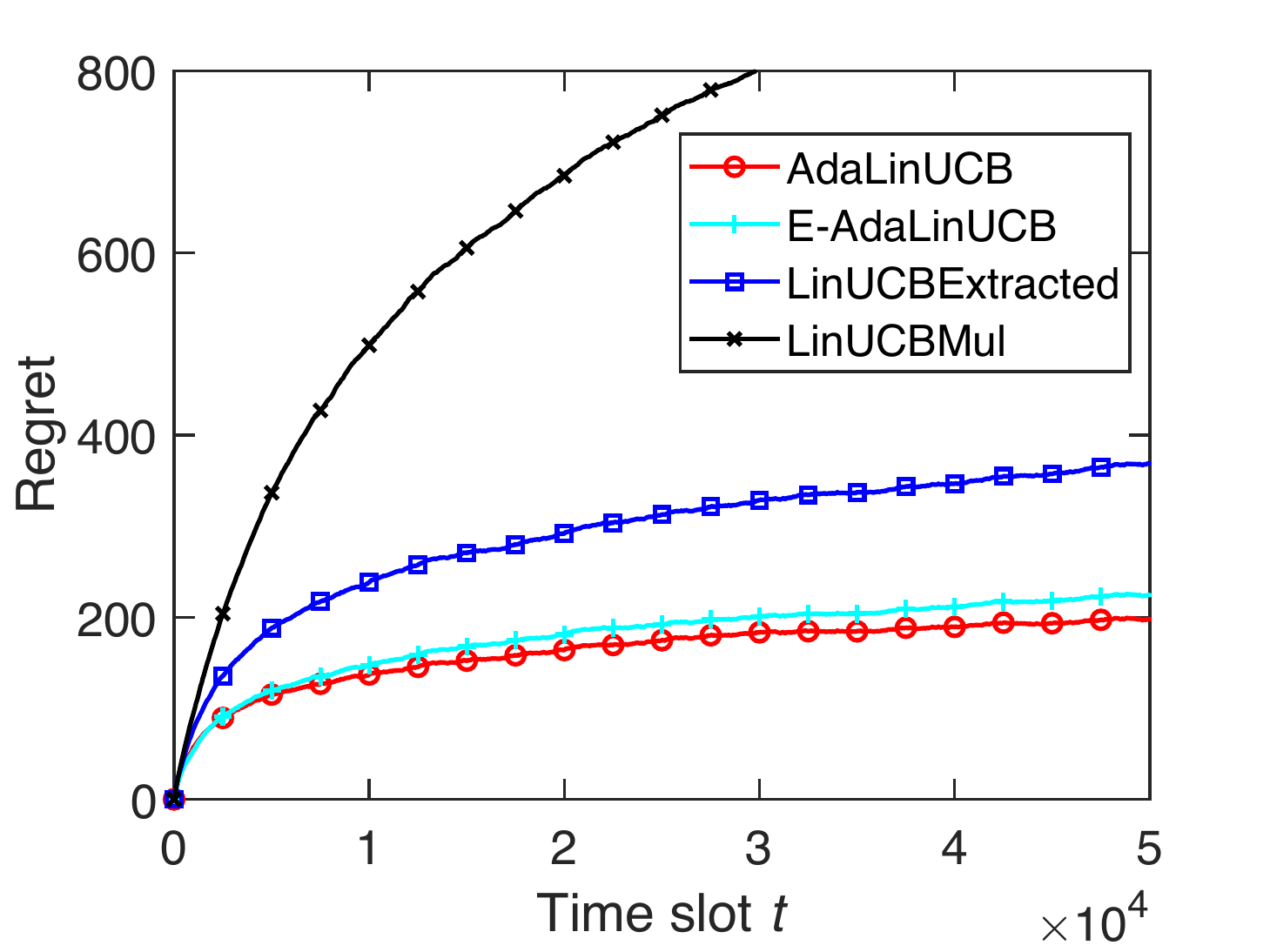}
\label{fig:beta_L05U00}}
\quad
\subfigure[$l^{(-)}=l_{0.1}^{(-)},l^{(+)}=l_0^{(+)}$]{\includegraphics[angle = 0,height = 0.23\linewidth,width = 0.31\linewidth]{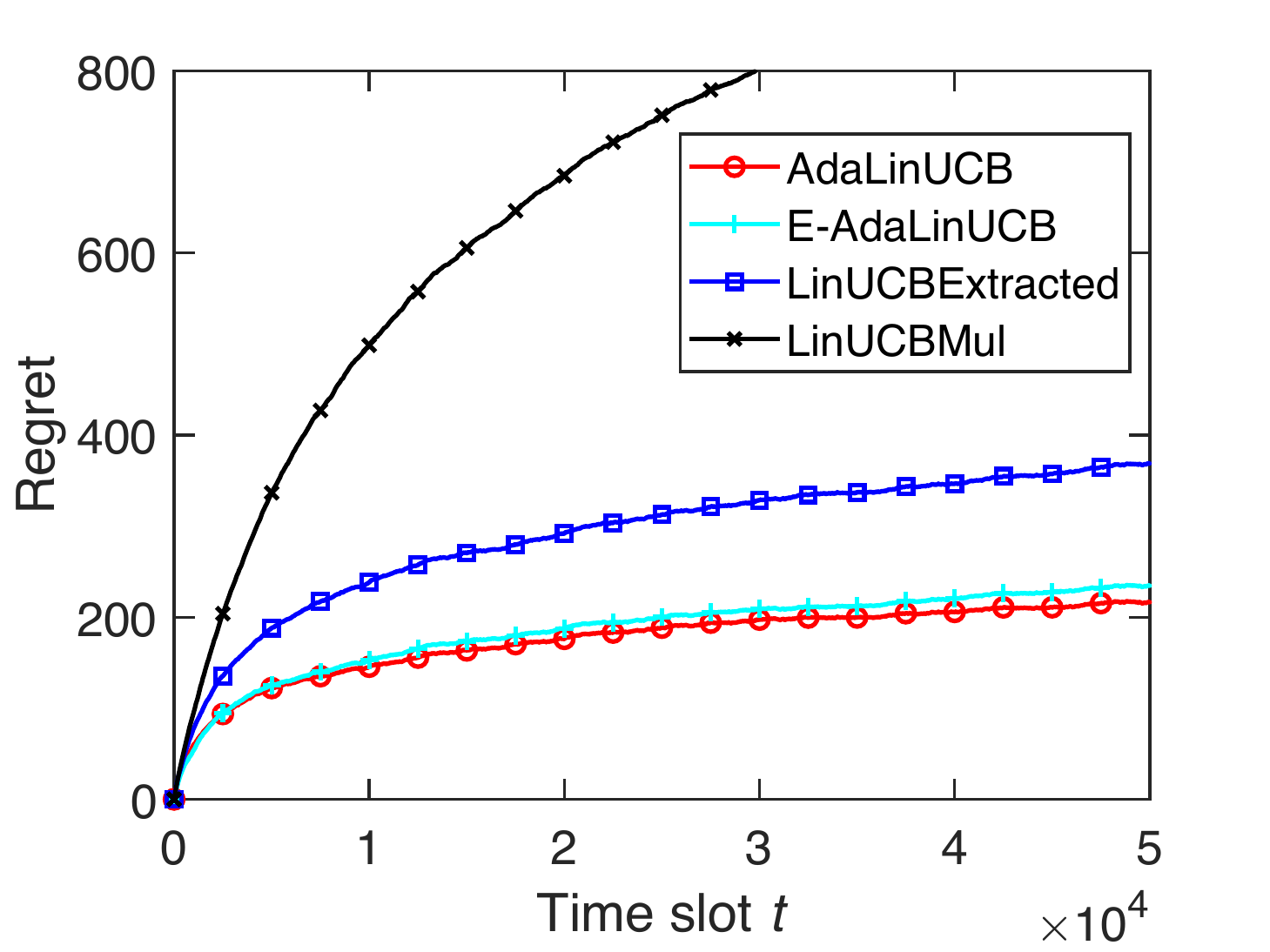}
\label{fig:beta_L10U00}}
\quad
\subfigure[$l^{(-)}=l_{0.15}^{(-)},l^{(+)}=l_0^{(+)}$]{\includegraphics[angle = 0,height = 0.23\linewidth,width = 0.31\linewidth]{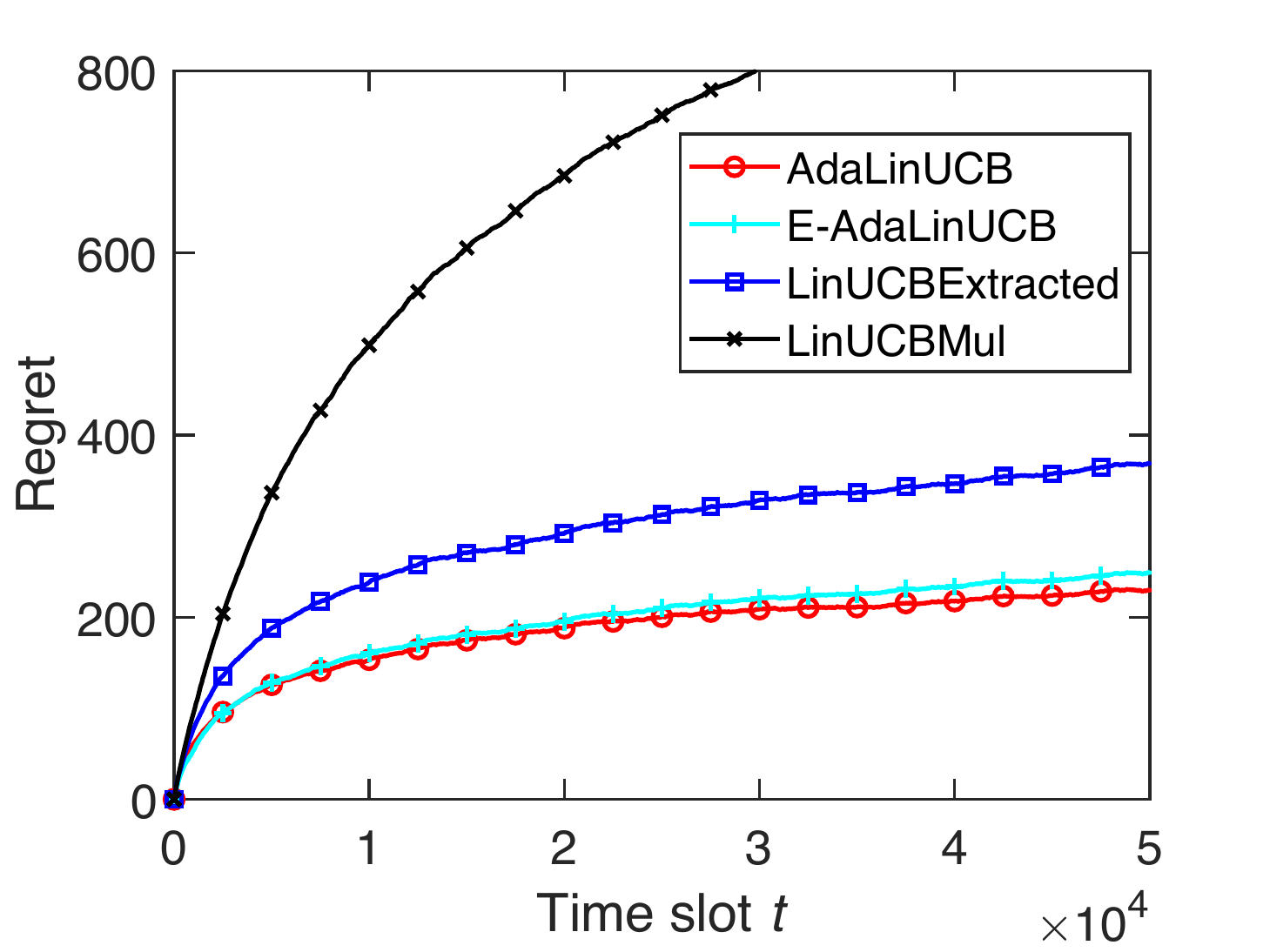}
\label{fig:beta_L15U00}}
\caption{ Regret under beta distributed variation factor with different values of $l^{(-)}$.}
\label{fig:beta_load_-}
\end{center}
\end{minipage}
\end{center}
\end{figure*}

\begin{figure*}[thbp]
\begin{center}
\begin{minipage}[t]{\textwidth}
\begin{center}
\subfigure[$l^{(-)}=l_{0}^{(-)},l^{(+)}=l_{0.05}^{(+)}$]{\includegraphics[angle = 0,height = 0.23\linewidth,width = 0.31\linewidth]{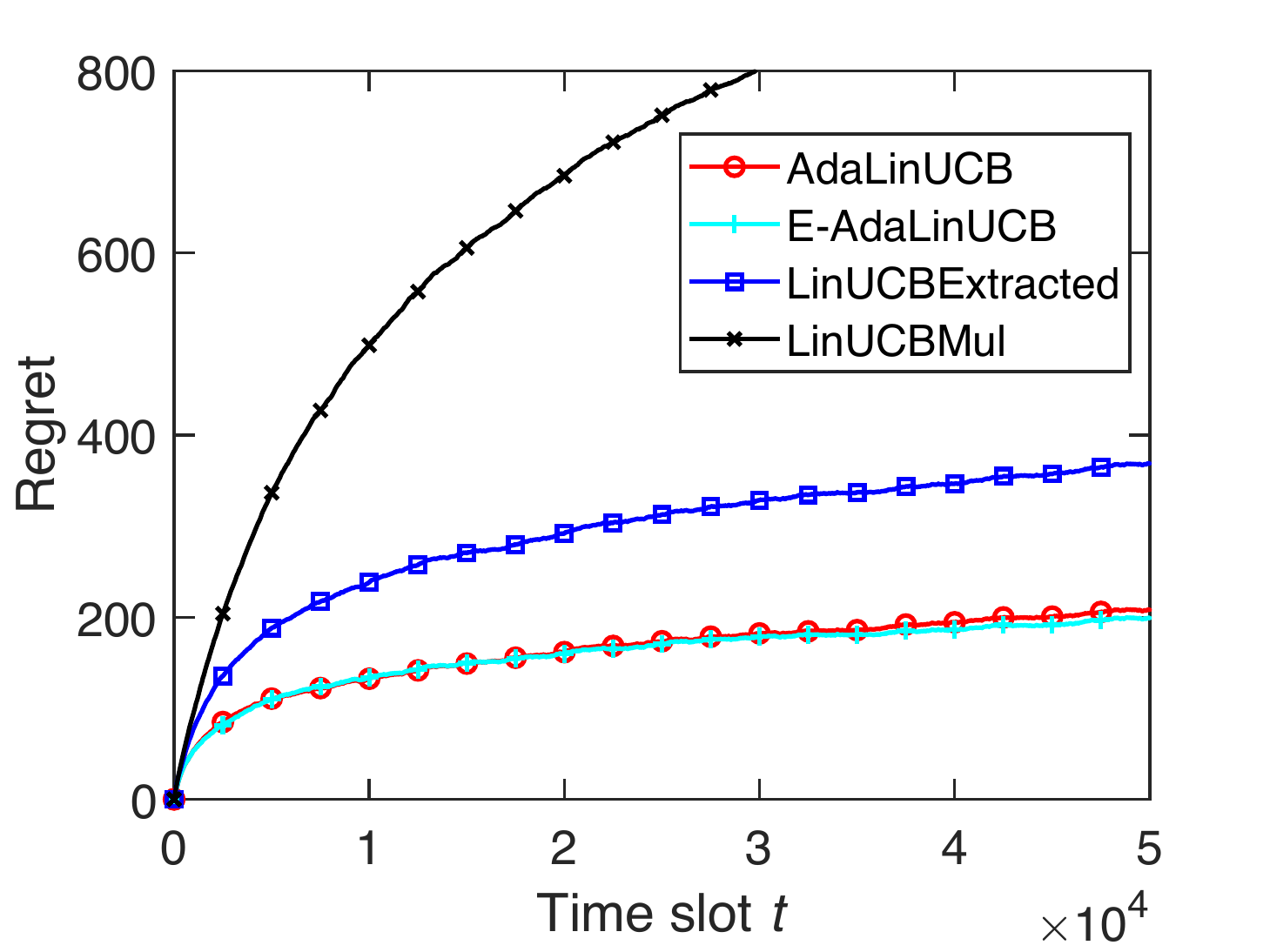}
\label{fig:beta_L00U05}}
\quad
\subfigure[$l^{(-)}=l_{0}^{(-)},l^{(+)}=l_{0.1}^{(+)}$]{\includegraphics[angle = 0,height = 0.23\linewidth,width = 0.31\linewidth]{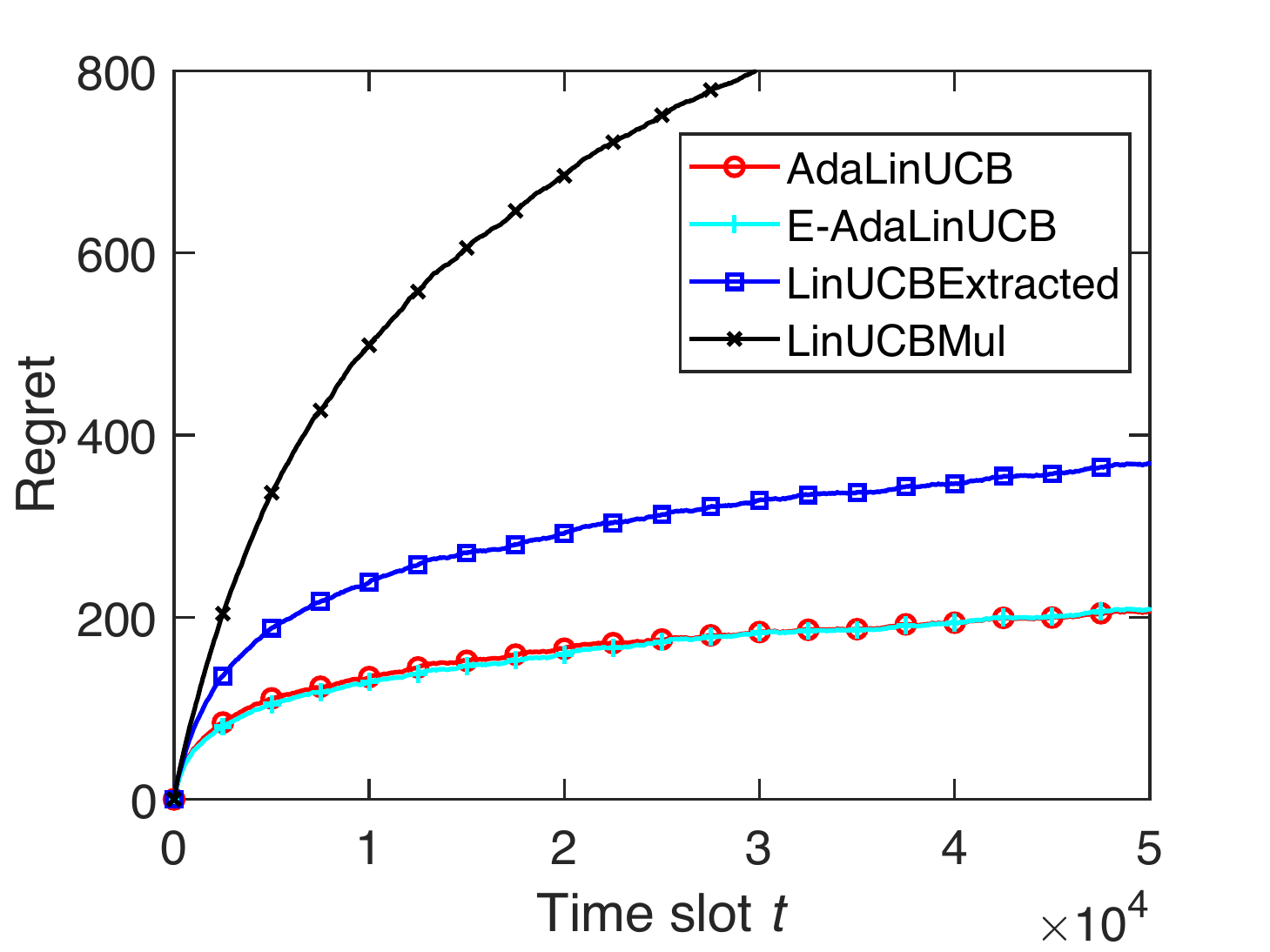}
\label{fig:beta_L00U10}}
\quad
\subfigure[$l^{(-)}=l_{0}^{(-)},l^{(+)}=l_{0.15}^{(+)}$]{\includegraphics[angle = 0,height = 0.23\linewidth,width = 0.31\linewidth]{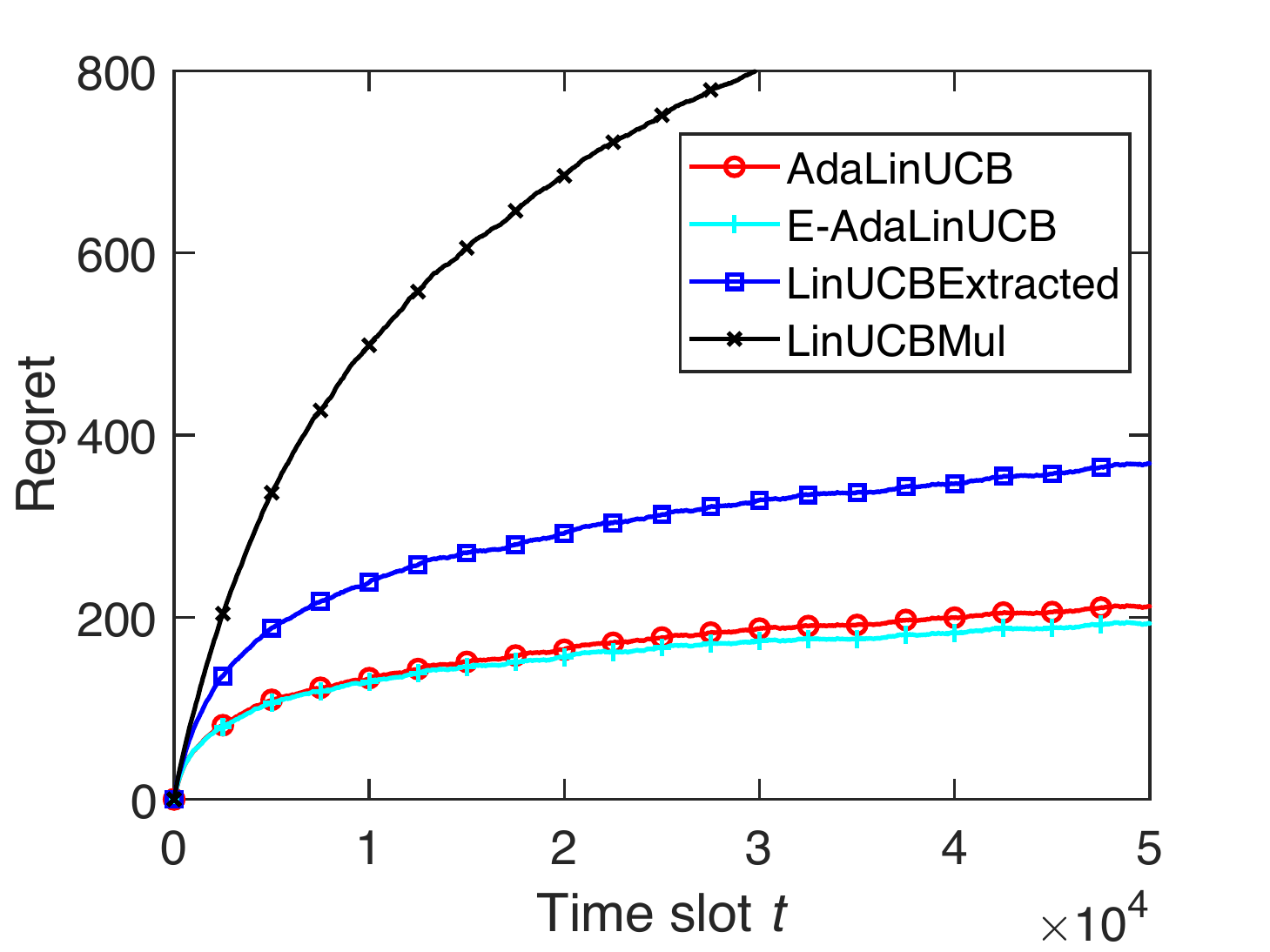}
\label{fig:beta_L00U15}}

\caption{Regret under beta distributed variation factor with different values of $l^{(+)}$.}
\label{fig:beta_load_+}
\end{center}
\end{minipage}

\end{center}
\end{figure*}

\subsection{Compare with KernelUCB}\label{se:ap_simu_kernel}

We have also implemented KernelUCB \cite{valko2013finite} which is a kernel-based upper confidence bound algorithm. It applies for general contextual bandits with non-linear payoffs. It can characterize general non-linear relationship between the context vector and reward based on the kernel that defines the similarity between two data points. There are many widely used kernels, such as Gaussian kernel, Laplacian kernel and polynomial kernel \cite{rasmussen2004gaussian}.

We demonstrate KernelUCB in 
Algo.~\ref{alg:KernelUCB1}.
The algorithm is based on paper \cite{valko2013finite}. Furthermore, in line $10$, we have actually used the technique of Schur complement \cite{zhang2006schur} to  update of kernel matrix $\bm{K}_t$ so as to boost the implementation of KernelUCB.


Fig.~\ref{fig:KernelUCB} demonstrates the performance of AdaLinUCB, LinUCBExtracted, and KernelUCB (with carefully selected hyper-parameters) for different scenarios. 
Note that the performance of KernelUCB highly depends on the choice of hyper-parameter. To make a fair comparison, we test the performance of KernelUCB for different hyper-parameter values, and chooses the hyper-parameters with the best performance (among the hyper-parameter values that we have experimented), i.e., $\Gamma_\text{kernel}=2$ for Gaussian kernel $k(z_1,z_2) = \exp(-\Gamma_\text{kernel}||z_1-z_2||^2)$,  $\lambda_\text{regularization}=0.5$ for kernel ridge regression.
As shown in Fig.~\ref{fig:KernelUCB}, AdaLinUCB outperforms KernelUCB under both binary-valued variation factor and continuous variation factor.

Besides the less competitive performance, as show in Fig.~\ref{fig:KernelUCB}, there are two other severe drawbacks that prevents the application of KernelUCB in many practical scenarios.
Firstly, its performance is highly sensitive to the choice of hyper-parameters.
As discussed above, we have  tested the performance of KernelUCB for different hyper-parameter values, and chooses the hyper-parameters with the best performance for a fair comparison.
However, even when the hyper-parameters just changes slightly (or environment such as variation factor fluctuation changes slightly),
the performance of KernelUCB can deteriorate severely such that it performs even worse then LinUCBExtracted.

Secondly, KernelUCB suffers from the high computational complexity problem.
Even if we have used the technique of Schur complement \cite{zhang2006schur} to  update of $\bm{K}_t$ so as to boost the implementation of KernelUCB as paper \cite{valko2013finite} , it still suffers from prohibitively high computational complexity even for moderately long time horizon. This is also the reason why Fig.~\ref{fig:KernelUCB} has a shorter time horizon than other figures.
Specifically, even to run a $10^4$-slot simulation, the time to run KernelUCB algorithm is at least $70$ times longer than the time to run AdaLinUCB algorithm.
In addition, when the time horizon is even larger, the time to run KernelUCB can be prohibitively long. 
This is because KernelUCB needs more computation with more existing data samples.
As a result, KernelUCB is not applicable for applications with large number of data samples in practice.

\begin{algorithm}[tb]
	\caption{KernelUCB}
	\label{alg:KernelUCB1} 
	\begin{algorithmic}[1]
		\STATE{Inputs:} {$\alpha \in \mathbb{R}_+$, $d \in \mathbb{N}$, $k(\cdot,\cdot)$, $\lambda=\lambda_\mathrm{regularization}$.}
	
		\FOR{$t = 1, 2, 3,\cdots, T$}
		\STATE{Observe possible arm set $\mathcal{D}_t$, and observe associated context vectors $x_{t,a}, \forall a \in \mathcal{D}_t$.}
		\STATE{
		Observe $L_t$, and get augment context $\tilde{x}_{t,a} = [L_t, x_{t,a}^\top]^\top, \forall a\in \mathcal{D}_t$.
		}
        \IF {$t=1$}
        \STATE{Choose the first actions $a_t \in \mathcal{D}_t$ (at start first action is pulled) }
        \ELSE 
        \FOR{$a \in \mathcal{D}_t$}
		\STATE{ 
$    k_{t,a} \leftarrow[ k(\tilde{x}_{t,a},\tilde{x}_{1,a_1}), k(\tilde{x}_{t,a},\tilde{x}_{2,a_2}),$  \\ 
    \quad \quad \quad \quad $ \cdots,k(\tilde{x}_{t,a},\tilde{x}_{{t-1},a_{t-1}})]^\top$		
    }
		\STATE { $\bm{K}_{t} \leftarrow $
		kernel matrix of   $(\tilde{x}_{1,a_1},\cdots,\tilde{x}_{t-1,a_{t-1}})$}
		\STATE{ $p_{t,a} \leftarrow k_{t,a}^\top [\bm{K}_{t}+ \lambda \bm{I}]^{-1}y_{t-1} $  \\
		~~~~$+ \alpha  \sqrt{k(\tilde{x}_{a,t},\tilde{x}_{a,t})-k_{t,a}^T [\bm{K}_{t} \! +\! \lambda \bm{I}]^{-1}k_{t,a}}$ 
		}
		\ENDFOR
		\STATE{Choose action $a_t = \arg\max_{a\in\mathcal{D}_t
		} p_{t,a}$ with ties broken arbitrarily.}
        \ENDIF
        \STATE{Observe nominal reward $r_{t,a_t}$.}
		\STATE $y_t \leftarrow [r_{1,a_1},r_{2,a_2},\cdots,r_{t,a_t}]^\top$
        \ENDFOR
	\end{algorithmic}
\end{algorithm}

\begin{figure*}[thbp]
\begin{center}
\begin{minipage}[t]{\textwidth}
\begin{center}
\subfigure[Binary-valued $L_t$ with $\rho = 0.9$. ($\epsilon_0=\epsilon_1=0$.)]{\includegraphics[angle = 0,height = 0.23\linewidth,width = 0.31\linewidth]{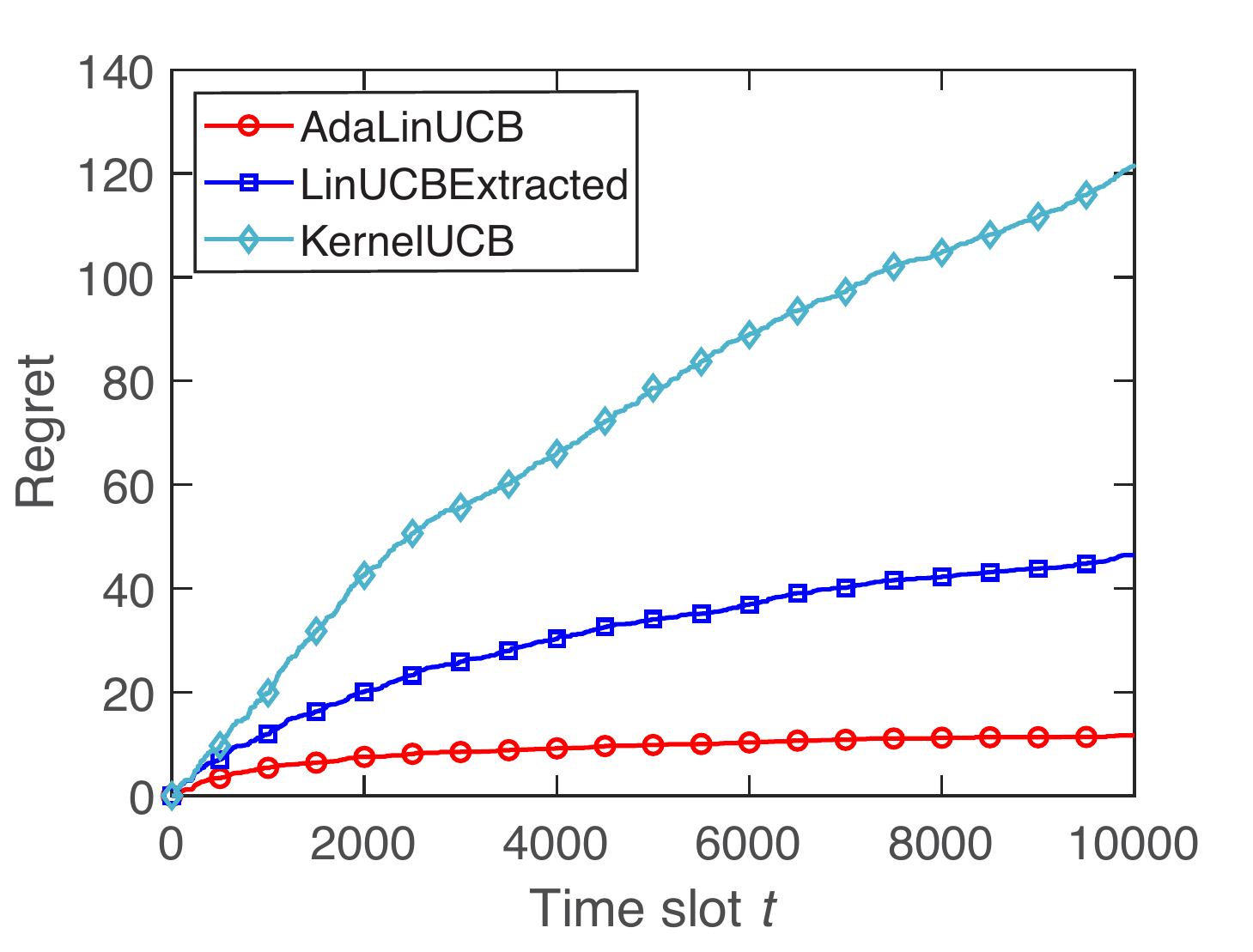}
\label{fig:app_k_binary_rho_0p9}}
\quad
\subfigure[Binary-valued $L_t$ with $\rho = 0.5$. ($\epsilon_0=\epsilon_1=0$.)]{\includegraphics[angle = 0,height = 0.23\linewidth,width = 0.31\linewidth]{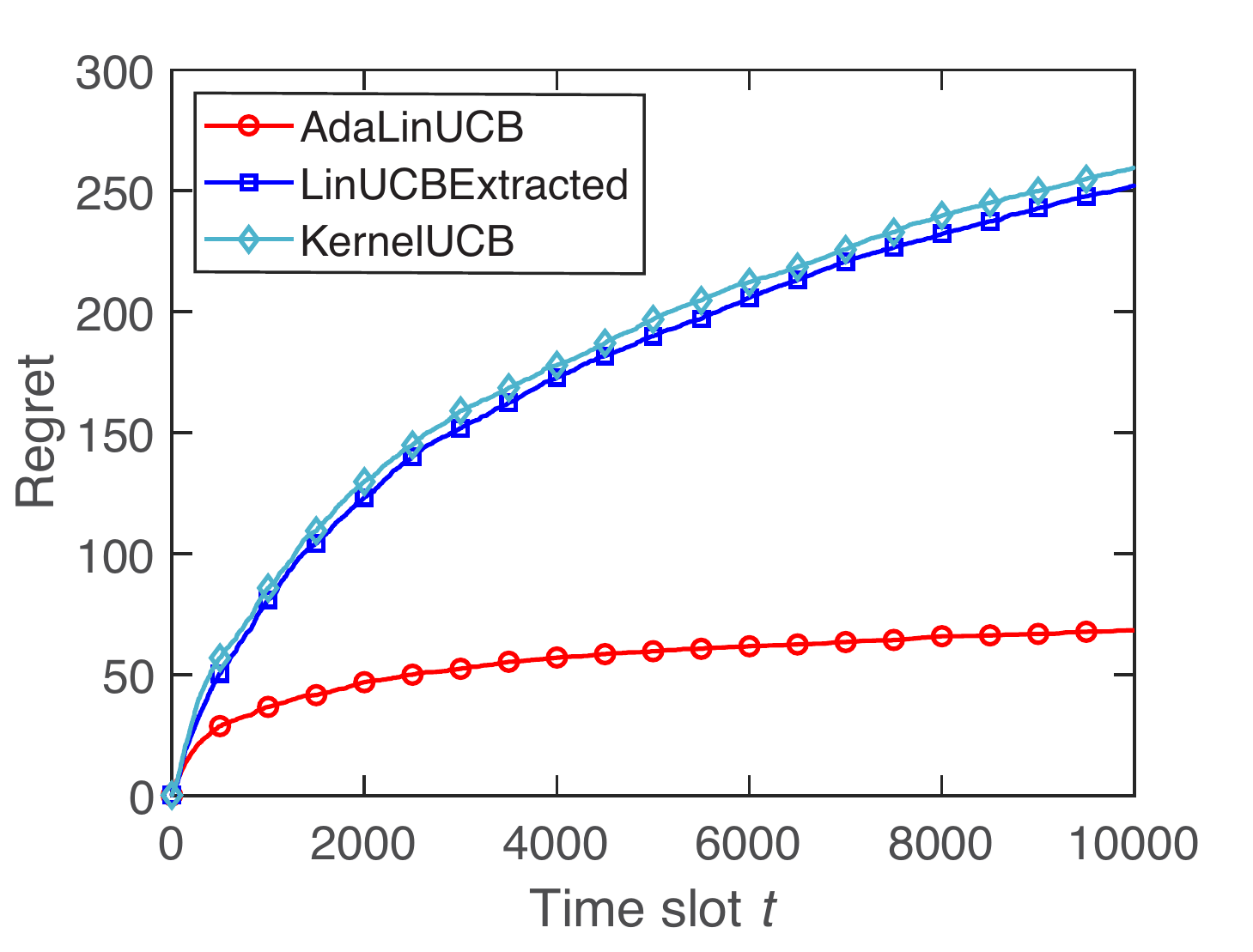}
\label{fig:app_k_binary_rho_0p5}}
\quad
\subfigure[Binary-valued $L_t$ with $\rho = 0.1$. ($\epsilon_0=\epsilon_1=0$.)]{\includegraphics[angle = 0,height = 0.23\linewidth,width = 0.31\linewidth]{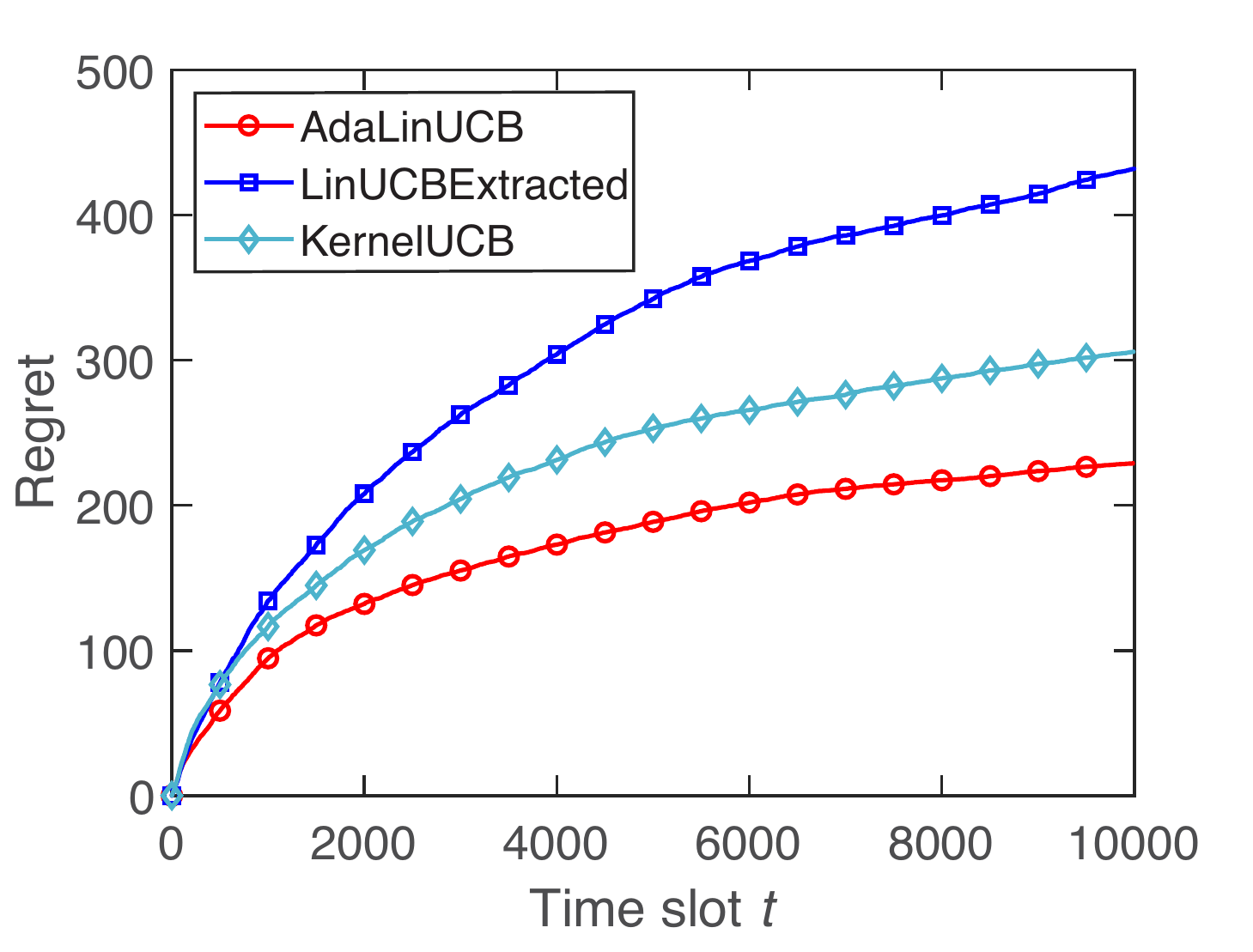}
\label{fig:app_k_binary_rho_0p1}}
\quad
\subfigure[Beta distributed variation factor; AdaLinUCB with $l^{(-)}=0, l^{(+)}=l_{0}^{(+)}$.]{\includegraphics[angle = 0,height = 0.23\linewidth,width = 0.31\linewidth]{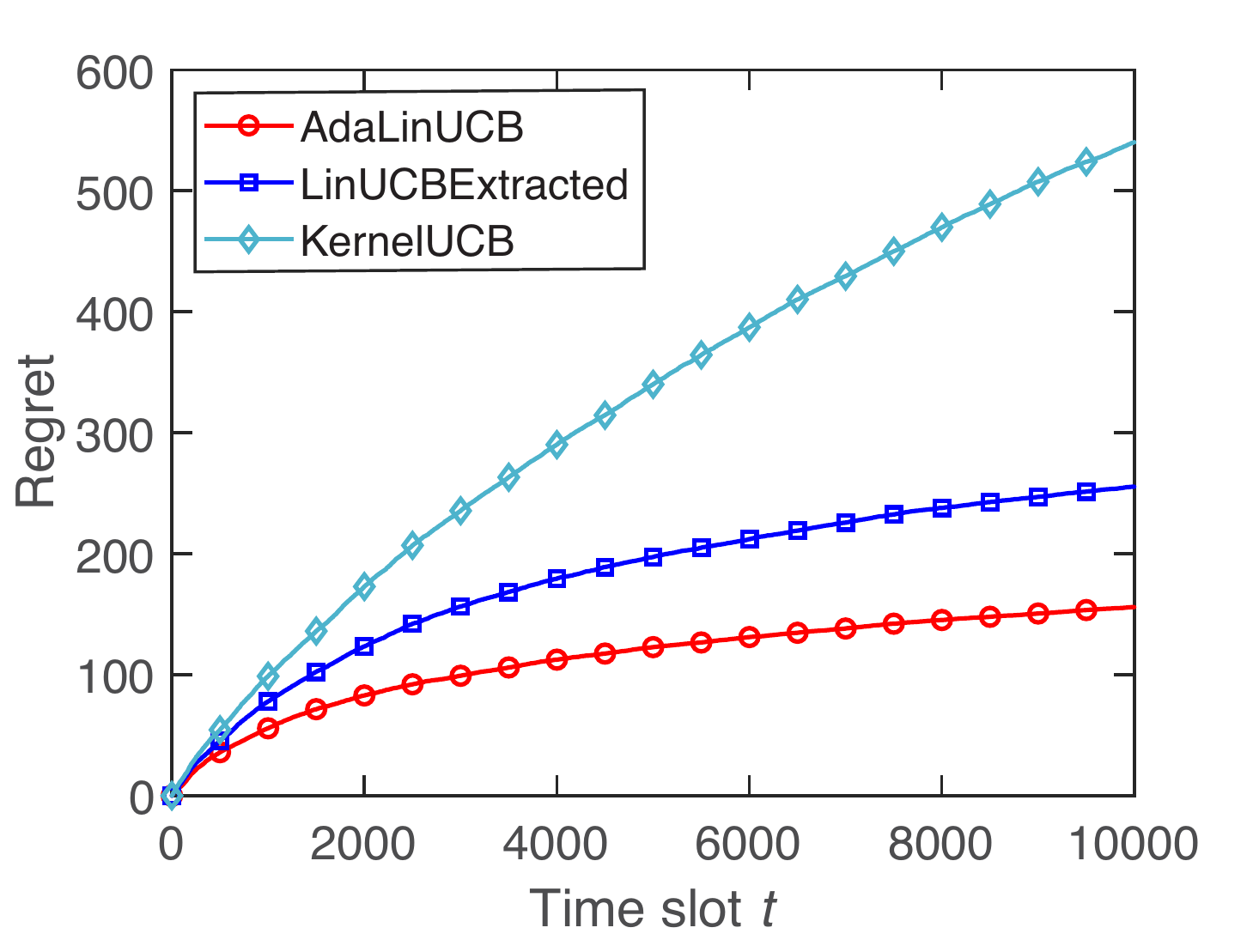}
\label{fig:app_k_beta}}
\vspace{-.25cm}
\caption{Performance Comparison with KernelUCB.}
\label{fig:KernelUCB}
\end{center}
\end{minipage}
\end{center}
\end{figure*}

\subsection{More for experiments on Yahoo! Today Module}\label{se:ap_simu_real}
We also test the performance of the algorithms using the data from Yahoo! Today Module. This dataset contains over $4$ million user visits to the Today module in a ten-day period in May 2009
\cite{Li2010_LinUCB}.
To evaluate contextual bandits using offline data, the experiment uses the unbiased offline evaluation protocol proposed in \cite{Li2011_unbiased_LinUCB_simu}.

In Yahoo! Today Module, for each user visit, there are $10$ candidate articles to be selected. The candidate articles are updated in a timely manner and are different for different time slots. Further, both the user and each of the candidate articles are associated with a $6$-dimensional feature vector, which are generated by a conjoint analysis with a bilinear model \cite{Chu2009_LinUCB_feature}.

\begin{figure*}[thbp]
\begin{center}
\begin{minipage}[t]{\textwidth}
\begin{center}
\subfigure[$l^{(-)}=l_0^{(-)},l^{(+)}=l_{0.1}^{(+)}$]{\includegraphics[angle = 0,height = 0.23\linewidth,width = 0.31\linewidth]{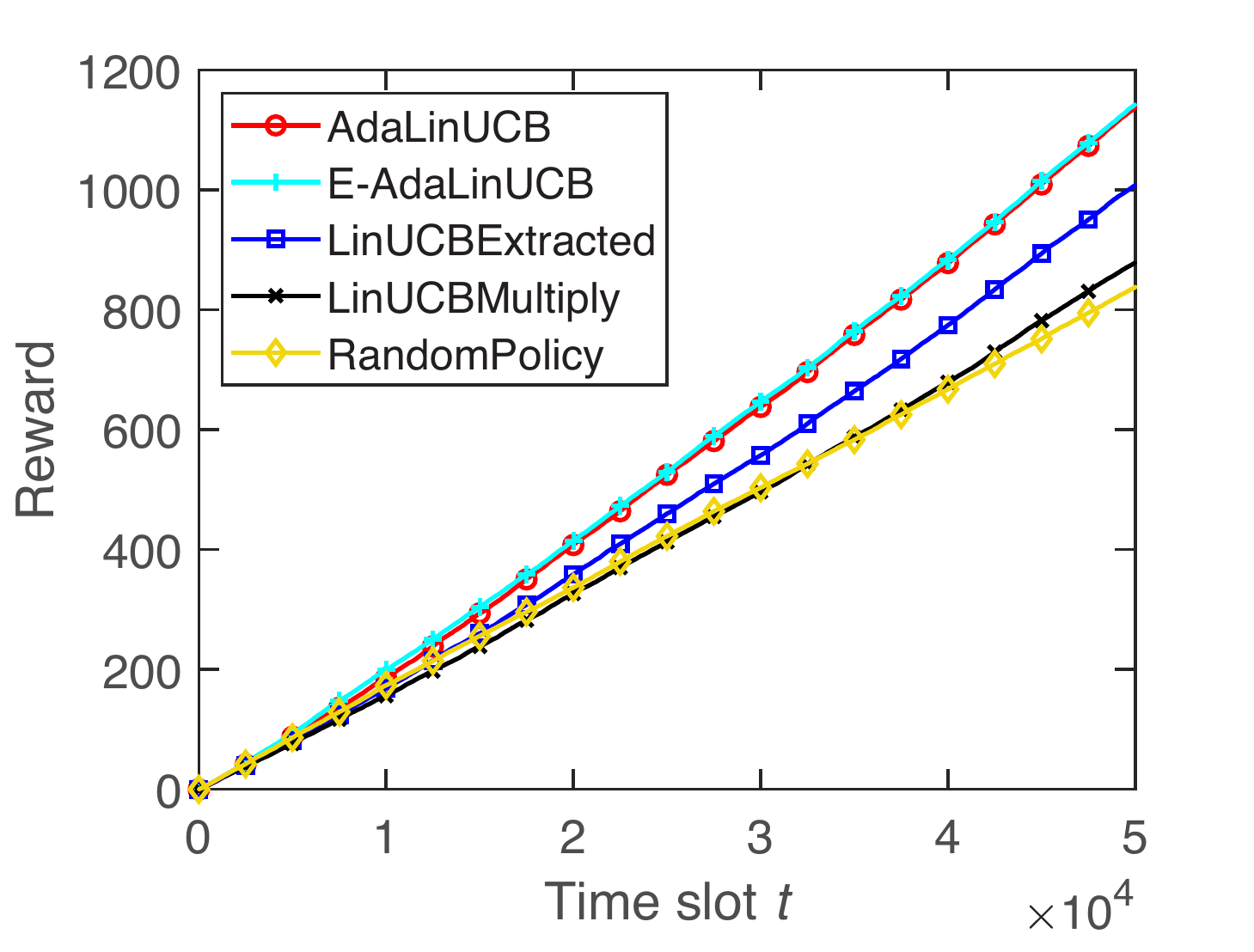}
\label{fig:real_L0U1}}
\quad
\subfigure[$l^{(-)}=l_0^{(-)},l^{(+)}=l_{0.2}^{(+)}$]{\includegraphics[angle = 0,height = 0.23\linewidth,width = 0.31\linewidth]{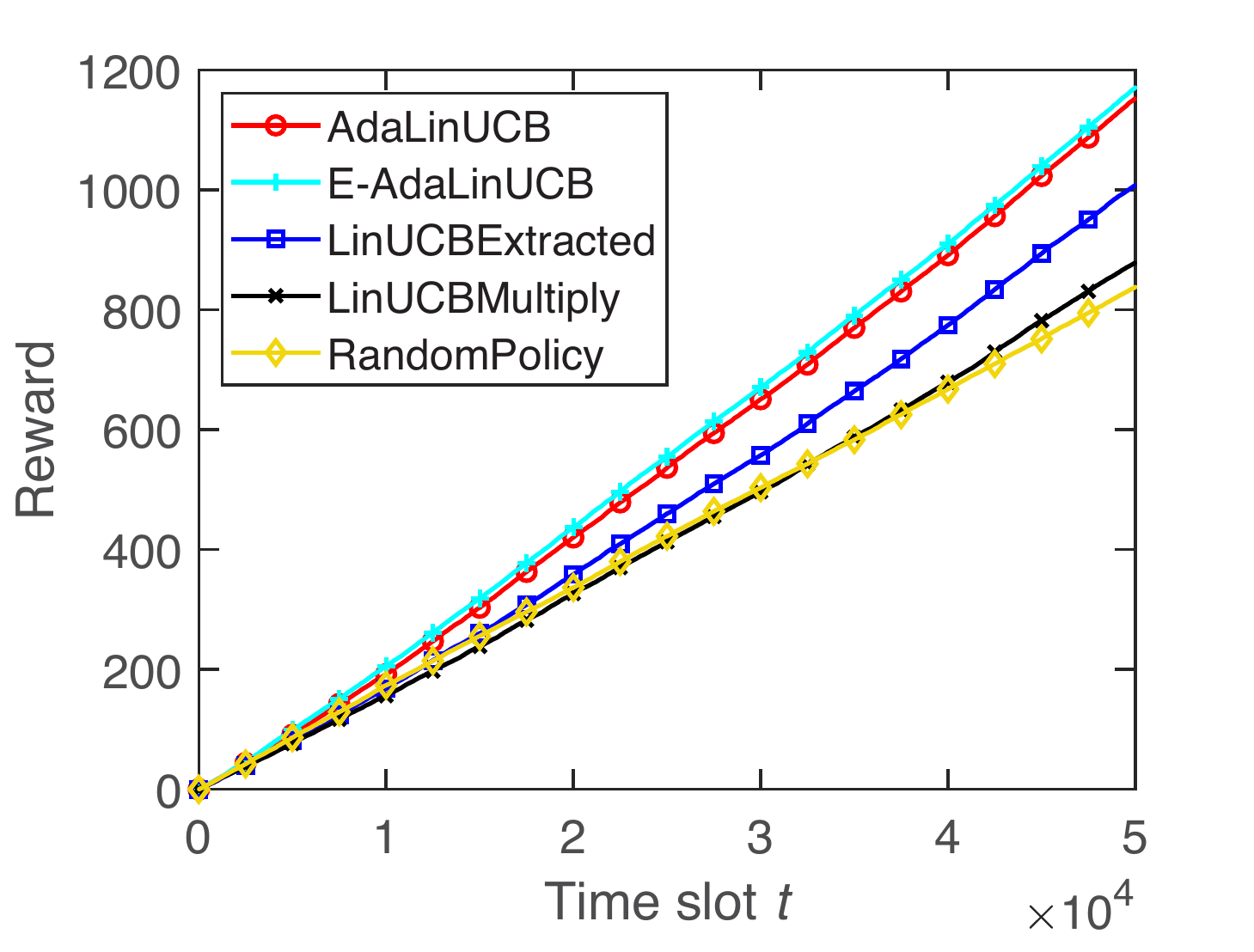}
\label{fig:real_L0U2}}
\quad
\subfigure[$l^{(-)}=l_0^{(-)},l^{(+)}=l_{0.3}^{(+)}$]{\includegraphics[angle = 0,height = 0.23\linewidth,width = 0.31\linewidth]{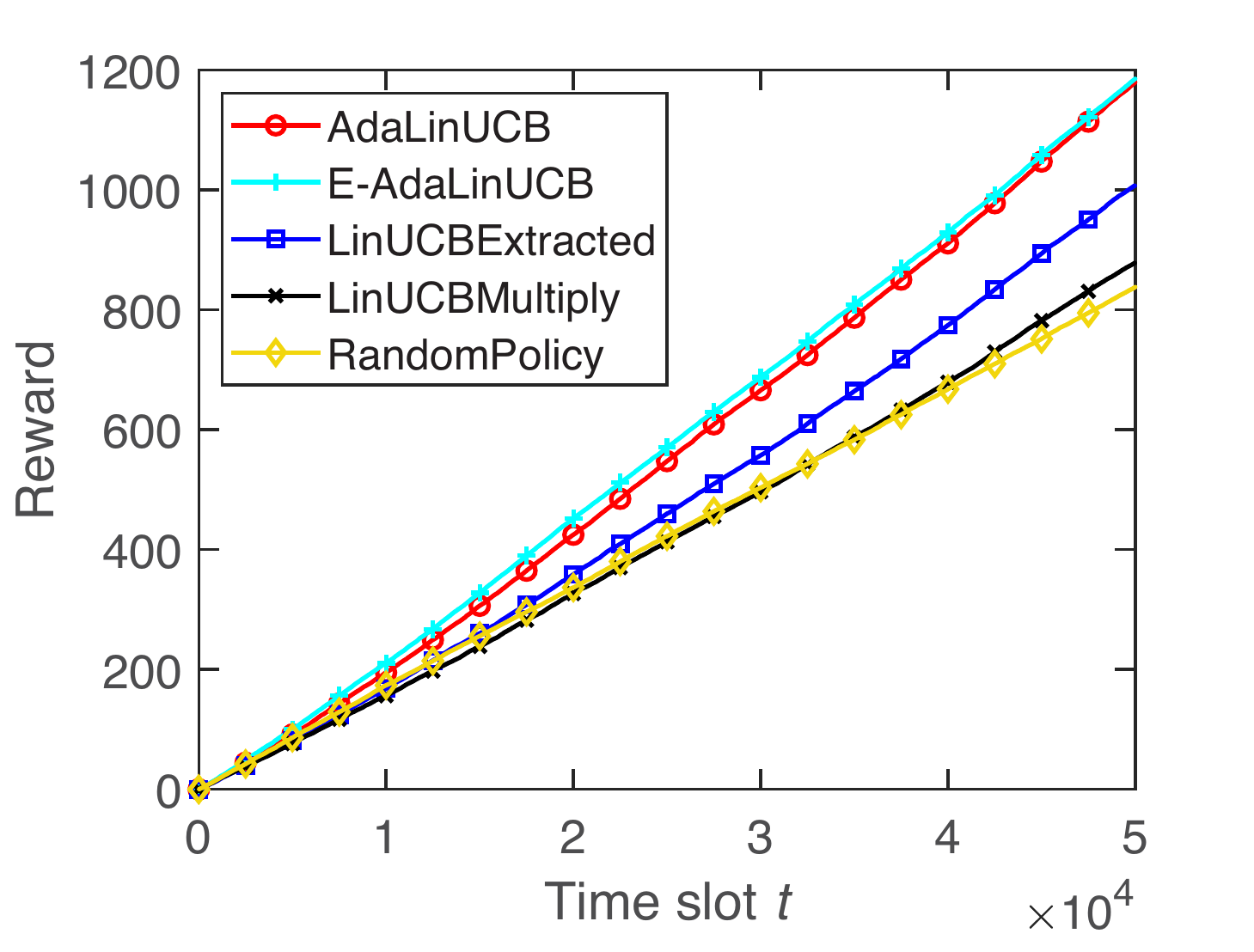}
\label{fig:real_L0U3}}
\quad
\vspace{-.25cm}
\subfigure[$l^{(-)}=l_{0.1}^{(-)},l^{(+)}=l_0^{(+)}$]{\includegraphics[angle = 0,height = 0.23\linewidth,width = 0.31\linewidth]{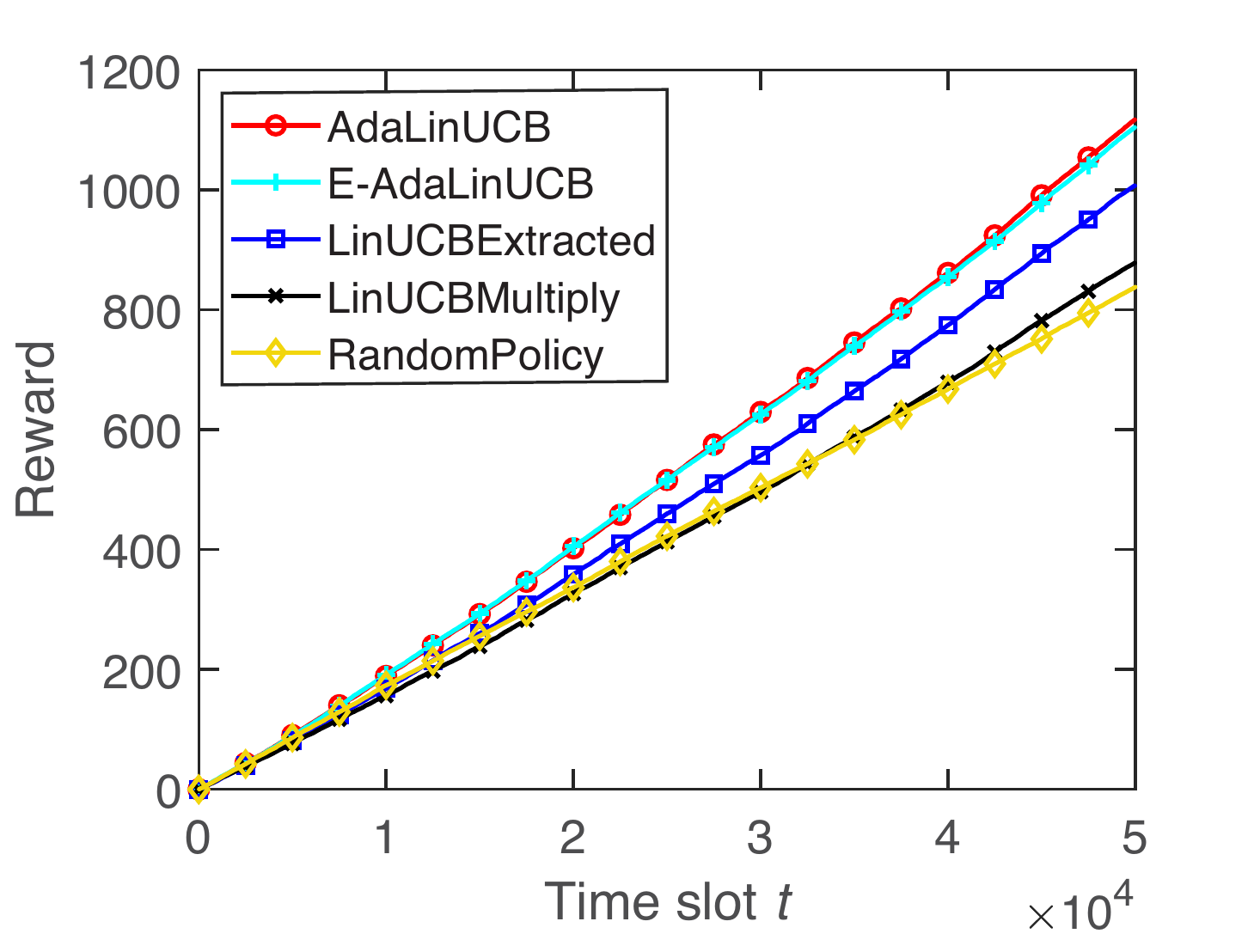}
\label{fig:real_L1U0}}
\quad
\subfigure[$l^{(-)}=l_{0.2}^{(-)},l^{(+)}=l_{0}^{(+)}$]{\includegraphics[angle = 0,height = 0.23\linewidth,width = 0.31\linewidth]{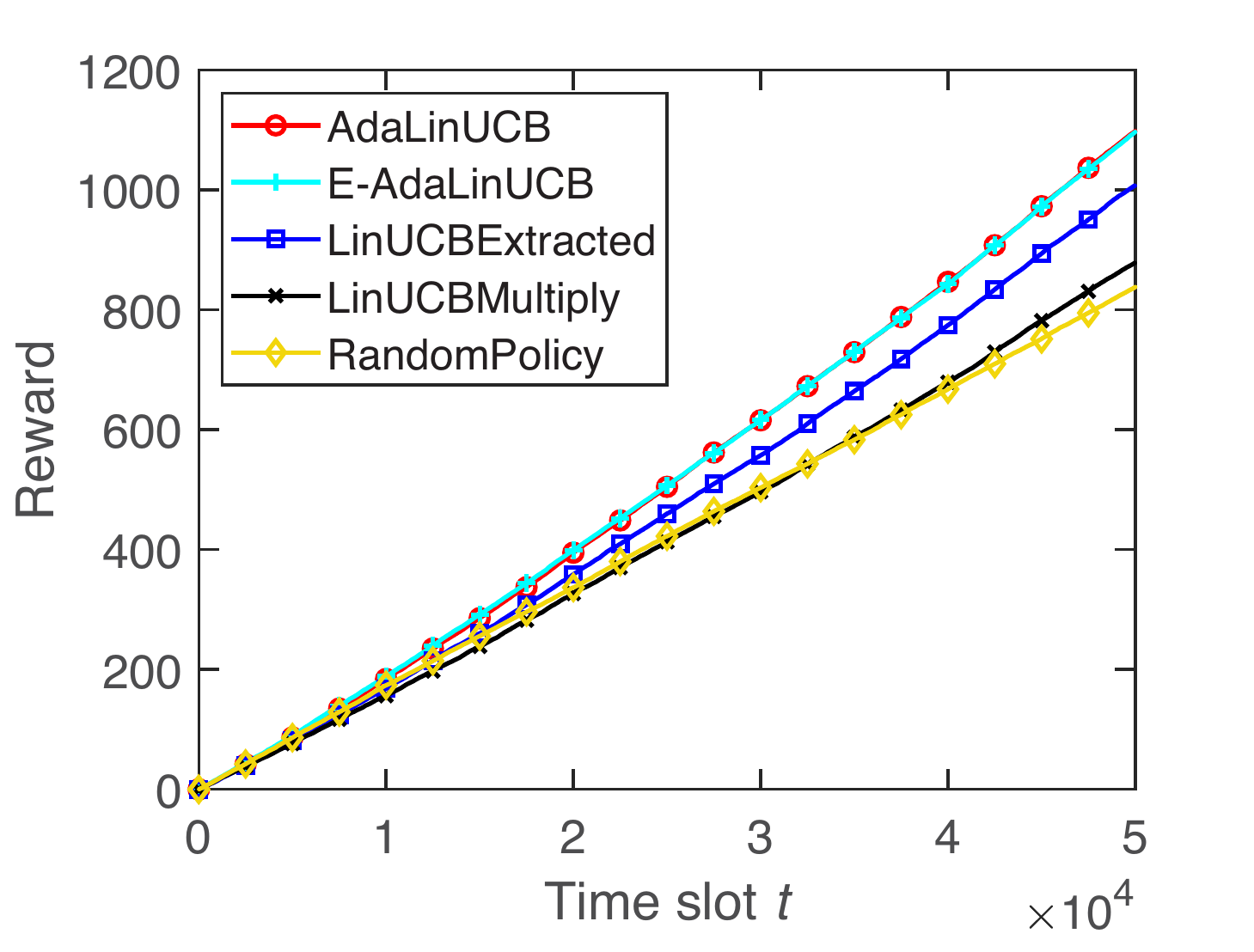}
\label{fig:real_L2U0}}
\quad
\subfigure[$l^{(-)}=l_{0.3}^{(-)},l^{(+)}=l_{0}^{(+)}$]{\includegraphics[angle = 0,height = 0.23\linewidth,width = 0.31\linewidth]{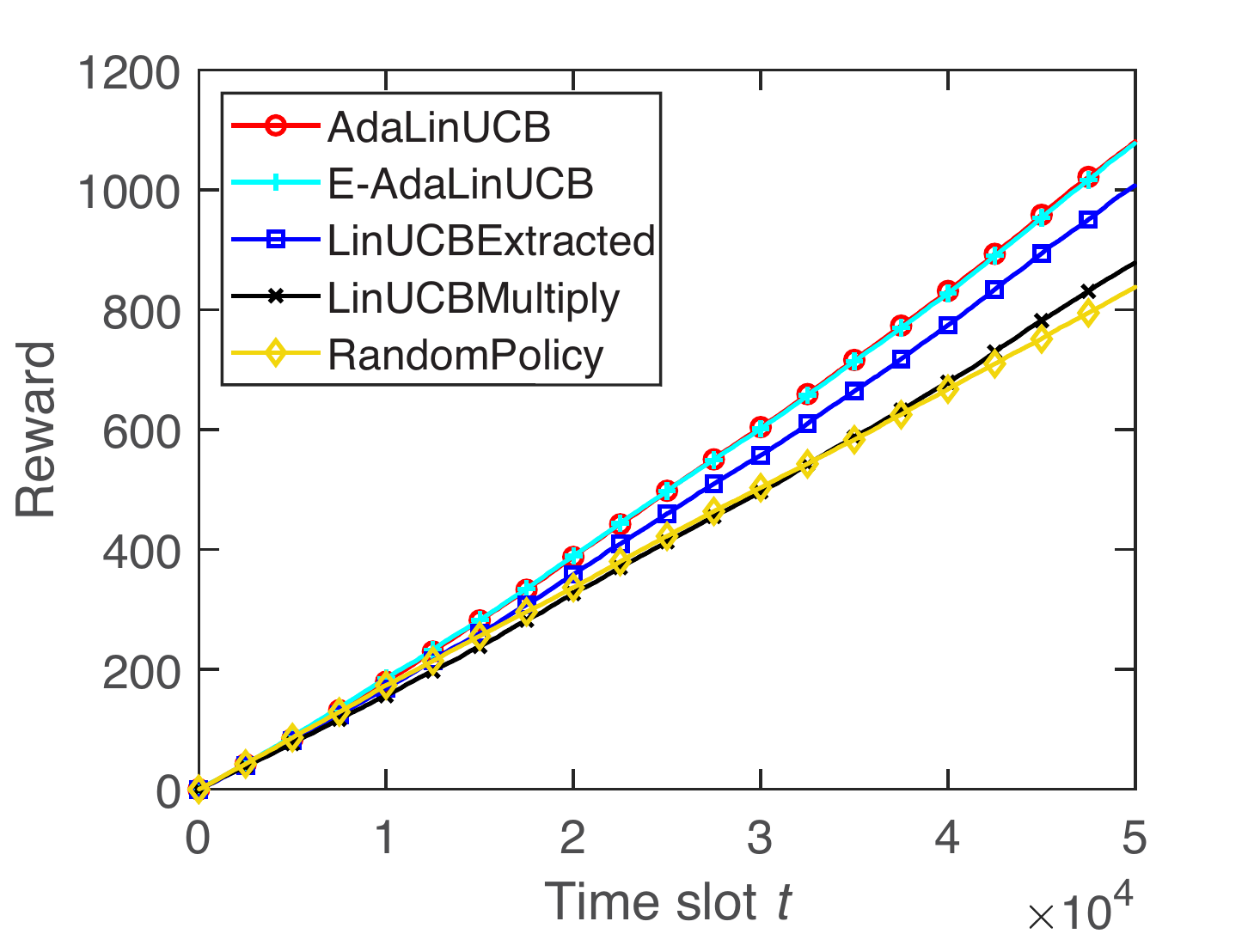}
\label{fig:real_L3U0}}
\quad
\subfigure[AdaLinUCB: $l^{(-)}=l_{0.05}^{(-)},l^{(+)}=l_{0.3}^{(+)}$; AdaLinUCB $(l^{(-)}=l^{(+)})$:$l^{(-)}=l^{(+)}=0.4$ ]{\includegraphics[angle = 0,height = 0.23\linewidth,width =0.31\linewidth]{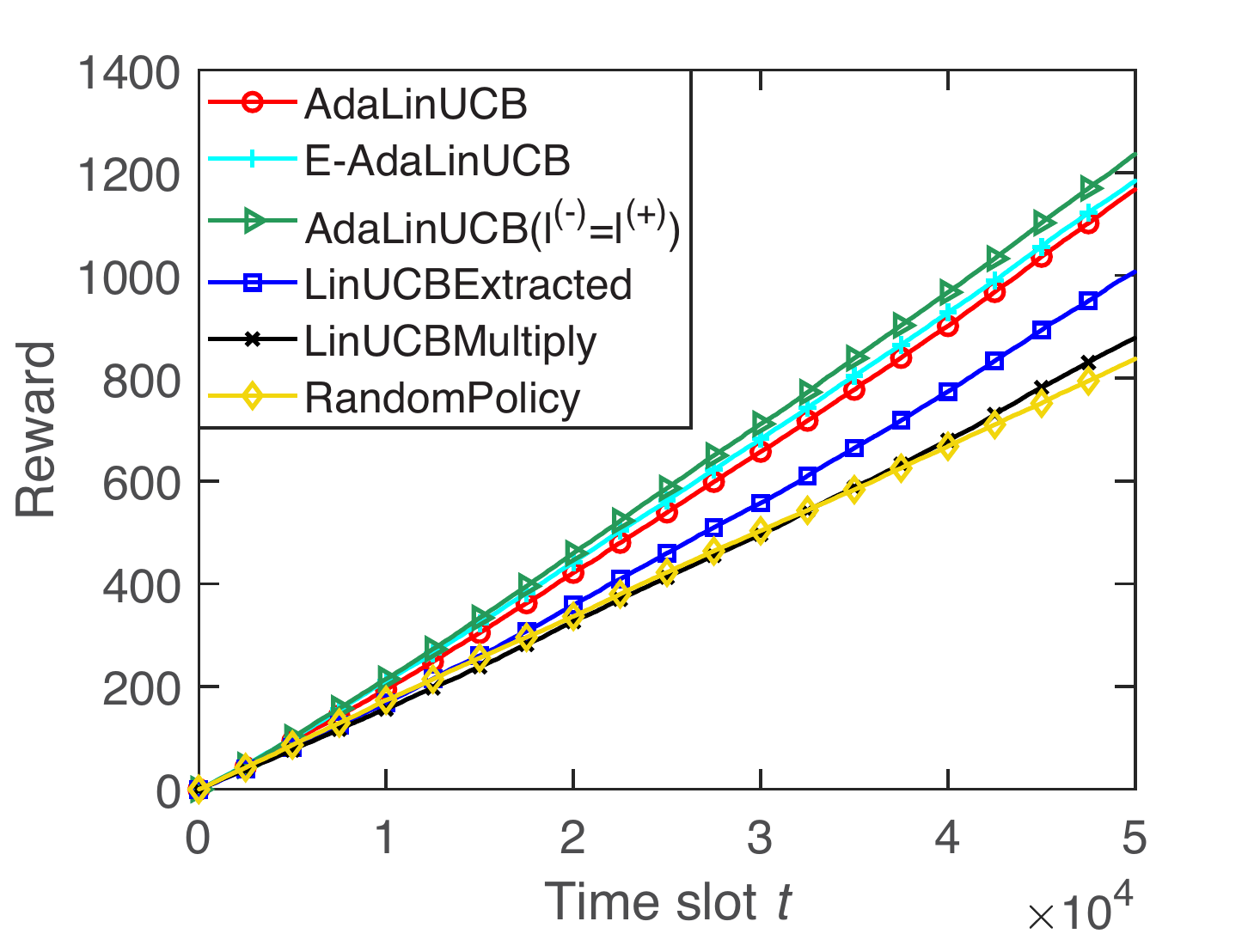}
\label{fig:real_L05U3J4}}
\quad
\subfigure[AdaLinUCB: $l^{(-)}=l_{0}^{(-)},l^{(+)}=l_{0.3}^{(+)}$; AdaLinUCB $(l^{(-)}=l^{(+)})$:$l^{(-)}=l^{(+)}=0.5$ ]{\includegraphics[angle = 0,height = 0.23\linewidth,width = 0.31\linewidth]{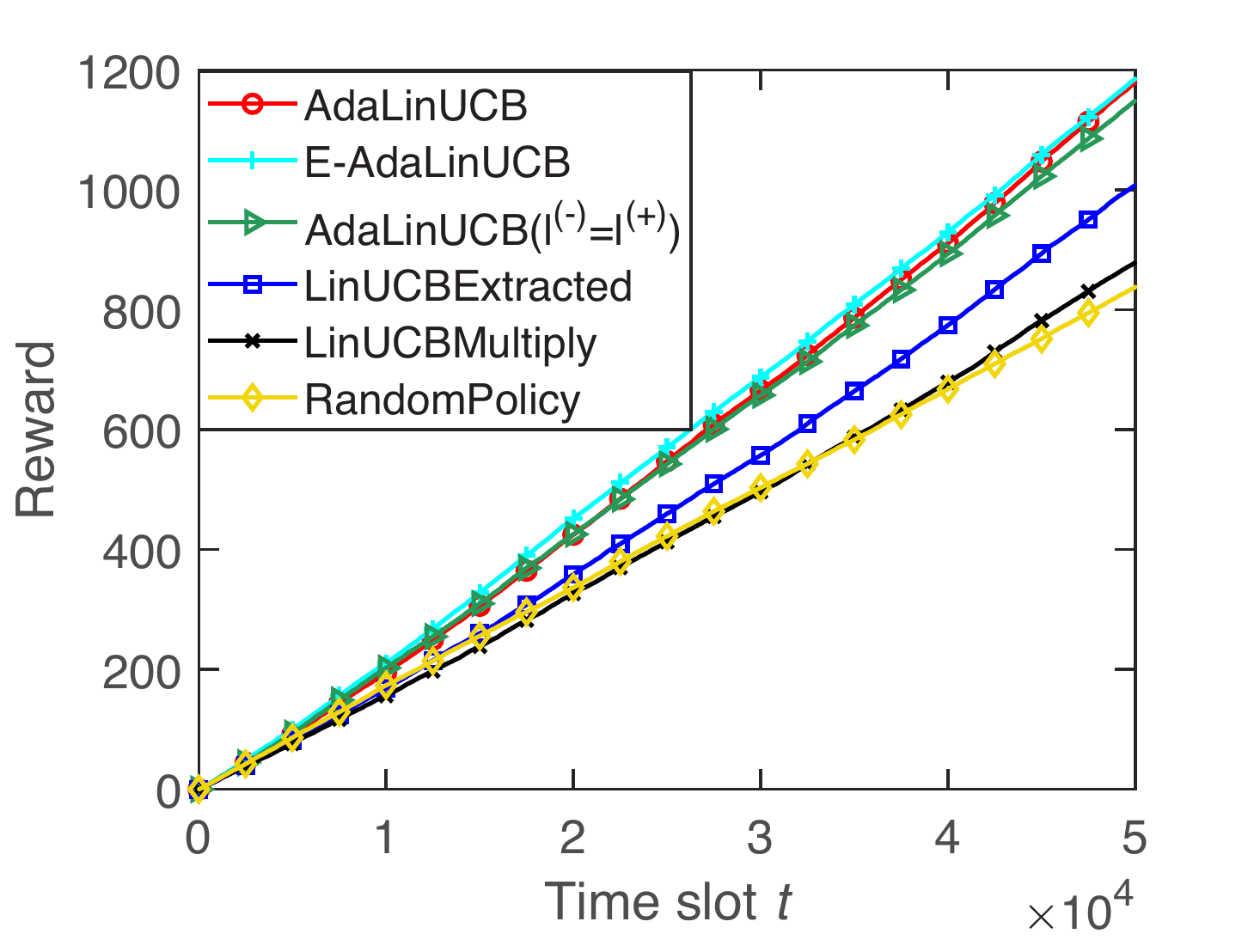}
\label{fig:real_L0U3J5}}
\quad
\subfigure[AdaLinUCB: $l^{(-)}=l_{0.1}^{(-)},l^{(+)}=l_{0.3}^{(+)}$; AdaLinUCB $(l^{(-)}=l^{(+)})$: $l^{(-)}=l^{(+)}=0.6$]{\includegraphics[angle = 0,height = 0.23\linewidth,width = 0.31\linewidth]{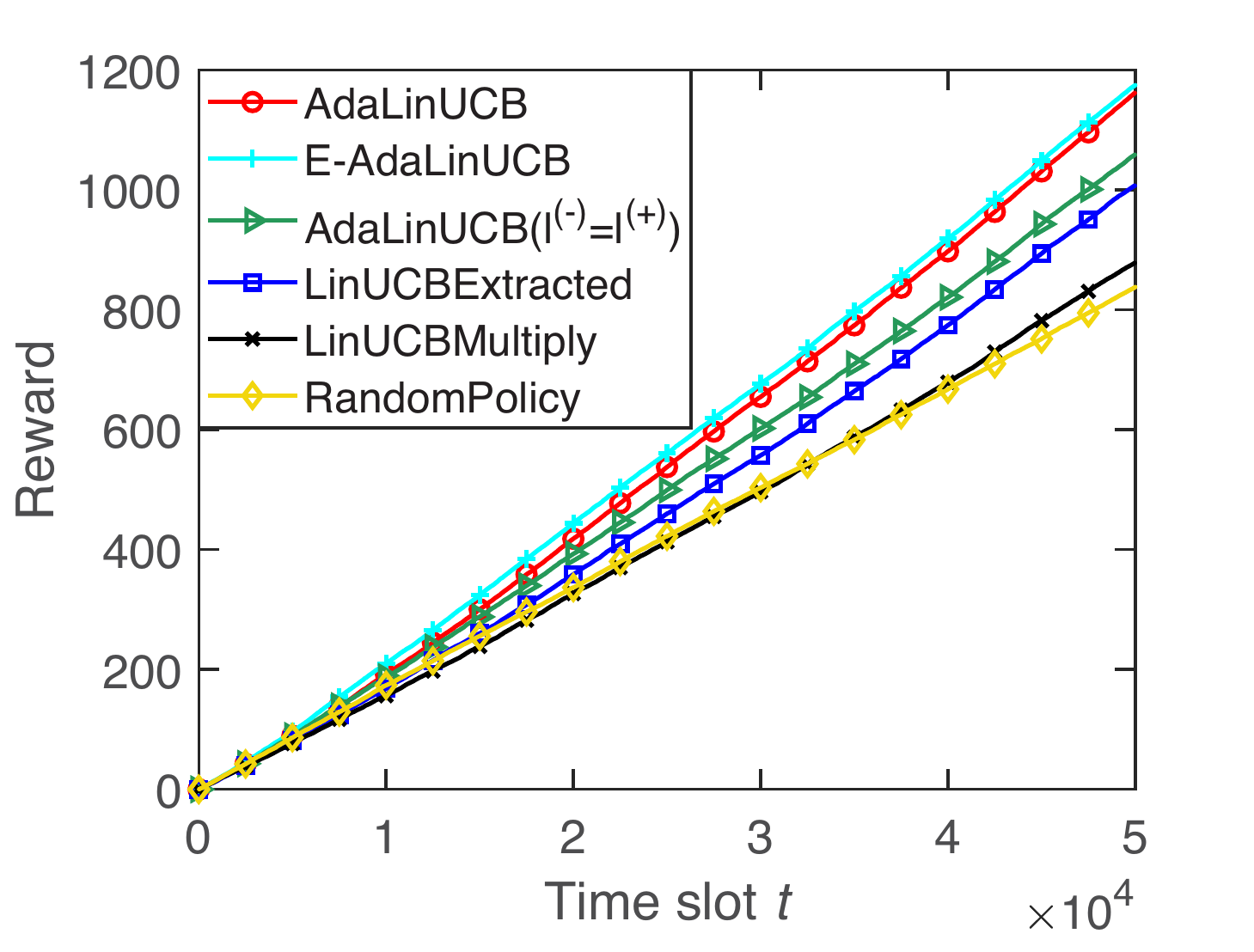}
\label{fig:real_L1U3J6}}



\caption{Performance comparison with different $l^{(-)}$ and $l^{(+)}$ values on Yahoo! Today Module.}
\label{fig:real_load_l}
\end{center}
\end{minipage}
\end{center}
\end{figure*}

For the variation factor, we use a real trace - the sales of a popular store . It includes everyday turnover in two years \cite{Ross}. The normalized variation factor variation is demonstrated in Fig.~\ref{fig:real_load_trace}.

Similarly to the experiments in Fig. \ref{fig:beta_load}, Fig. \ref{fig:beta_load_-} and Fig. \ref{fig:beta_load_+}, we have evaluated the impact of  of $l^{(-)}$ and $l^{(+)}$  in this data set in Fig. \ref{fig:real_load_trace}. We can see that the impact of threshold values for experiments on this real-world dataset is insignificant (when they are changing in a relatively large appropriate range) and the rewards of AdaLinUCB and E-AdaLinUCB are always higher than that of LinUCBExtracted and LinUCBMultiple.

\begin{figure*}[thbp]
\begin{center}
\includegraphics[angle = 0,height = 0.23\linewidth,width = 0.31\linewidth]{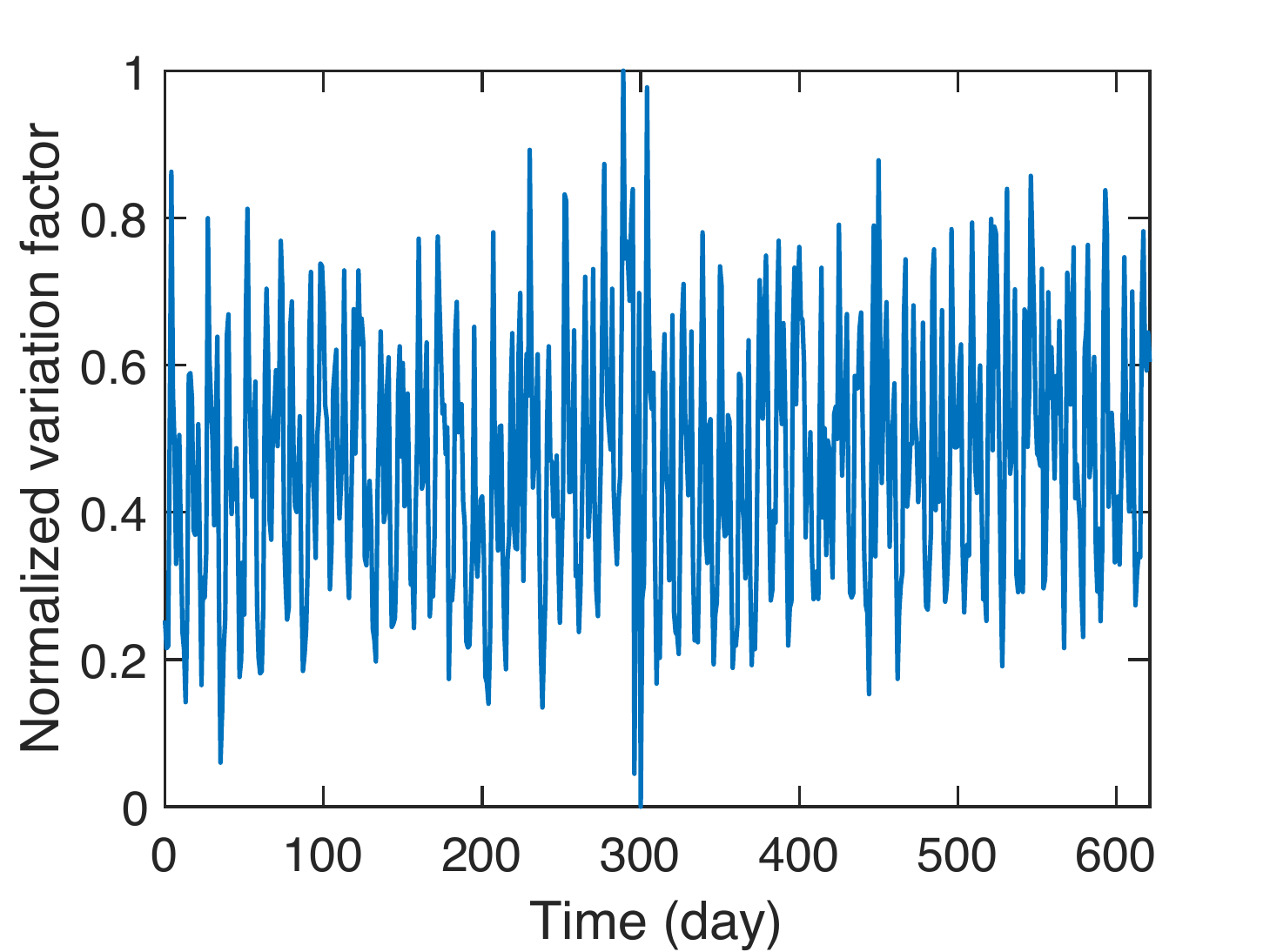}
\vspace{-.25cm}
\caption{Normalized variation factor demonstration.}
\label{fig:real_load_trace}
\end{center}
\end{figure*}

\end{document}